\documentclass[letterpaper]{article} 
\usepackage[preprint]{aaai2027}  
\usepackage{times}  
\usepackage{helvet}  
\usepackage{courier}  
\usepackage[hyphens]{url}  
\usepackage{graphicx} 
\usepackage{natbib}  
\usepackage{caption} 
\usepackage{amsmath}
\usepackage{amssymb}
\usepackage{amsfonts}
\usepackage{bm}
\usepackage{booktabs}
\usepackage{colortbl}
\usepackage{multirow}
\usepackage{amsthm}
\usepackage{algorithm}
\usepackage{algpseudocode}
\usepackage{float} 
\usepackage{microtype} 

\newtheorem{proposition}{Proposition}
\newtheorem{theorem}{Theorem}
\newtheorem{lemma}{Lemma}
\newtheorem{corollary}{Corollary}
\newtheorem{assumption}{Assumption}
\newtheorem{definition}{Definition}
\newtheorem{remark}{Remark}

\newcommand{\Bshared}{B_{\mathrm{shared}}}
\newcommand{\Bdir}{B_{\mathrm{dir}}}
\newcommand{\Bnorm}{B_{\mathrm{norm}}}
\newcommand{\ACG}{\mathrm{ACG}}
\newcommand{\Jm}{J_m}
\newcommand{\Cmat}{C}
\newcommand{\Fmat}{F}
\newcommand{\Rproxy}{\mathcal{R}}
\newcommand{\Lpost}{\mathcal{L}_{\mathrm{post}}}
\newcommand{\Lproxy}{\mathcal{L}_{\mathrm{proxy}}}
\DeclareRobustCommand{\passk}{\textup{pass@}\ensuremath{k}}
\DeclareRobustCommand{\passone}{\textup{pass@}\ensuremath{1}}
\newcommand{\Dm}{\mathcal{D}_m}
\newcommand{\1}{\mathbf{1}}
\newcommand{\real}{\mathbb{R}}
\newcommand{\CDRFT}{CD-RFT}

\newcommand{\addon}[1]{\hspace{0.6em}+\,#1}
\newcommand{\ourrow}{\rowcolor{black!8}}

\title{Control-Diverse Reinforcement Fine-Tuning: Decoupling the Shared Control Bottleneck of RL Post-Training}
\author{
    Binwen Tan$^{1}$\equalcontrib,
    Jingchao Wang$^{3}$\equalcontrib,
    Dengzhe Hou$^{1,2}$,
    Lingyu Jiang$^{1}$,\\
    Zeyuan Wu$^{4}$,
    Yunhan Shen$^{1}$,
    Fangzhou Lin$^{5,6}$,
    Kazunori Yamada$^{1,2}$,
    Atsushi Koike$^{1}$
}
\affiliations{
    $^{1}$Graduate School of Information Sciences, Tohoku University\\
    $^{2}$Unprecedented-scale Data Analytics Center, Tohoku University\\
    $^{3}$School of Computer Science, Peking University\\
    $^{4}$School of Science, Tohoku University\\
    $^{5}$Texas A\&M University\\
    $^{6}$Worcester Polytechnic Institute
}

\makeatletter
\def\cd@wtoc#1#2#3{\addtocontents{#1}{\protect\contentsline{#2}{#3}{\thepage}{}}}
\newcommand{\CDenableToc}{%
  \def\addcontentsline##1##2##3{%
    \def\cd@lvl{##2}\def\cd@sec{section}\def\cd@sub{subsection}%
    \ifx\cd@lvl\cd@sec \cd@wtoc{##1}{##2}{##3}\else
    \ifx\cd@lvl\cd@sub \cd@wtoc{##1}{##2}{##3}\fi\fi}}
\makeatother

\begin{document}
\maketitle

\begin{abstract}
Reinforcement learning post-training unlocks complex reasoning in large language models. Yet benchmark scores reveal only whether a model improved, not what changed inside it, nor how it splits a finite capability across competing tasks. A representative line of work in mechanistic interpretability attributes the success of reinforcement-learning fine-tuning to stronger and more diverse circuit activation. Complementing this view, we separate activation from control: an activated circuit need not control the post-training reward gain. Adapting Metabolic Control Analysis, we define the Post-training Control Coefficient to measure component control over the reward gain and arrange these coefficients by task family into a control matrix, paired with an activation-magnitude matrix. We call cross-task control concentration the Shared Control Bottleneck and the difference between activation and control concentration the Activation--Control Gap. We show that highly shared activations can coexist with task-specific control, while a small gap indicates that control concentrates along a shared direction and loses task specificity. To reduce this concentration, we regularize the post-training loss with the Shared Control Bottleneck and propose Control-Diverse Reinforcement Fine-Tuning (\CDRFT{}). The exact regularizer gradient requires second-order automatic differentiation incompatible with flash attention, so we derive a first-order proxy with worst-case overhead below eight percent. On Qwen2.5-7B, \CDRFT{} achieves the largest control decoupling and improves multi-task capability over its matched GRPO recipe across all three domains. The no-KL variant leads on \passone{}, and the KL-penalized variant leads on the large-$k$ \passk{} coverage that KL otherwise degrades. Together, these results show that the Shared Control Bottleneck serves as a mechanistic diagnostic and a training regularizer, and that both control decoupling and capability gains transfer to Llama-3.2-3B.
\end{abstract}

\section{Introduction}\label{sec:intro}

\begin{figure}[!t]
\centering
\includegraphics[trim=49.7pt 23.8pt 51.5pt 27.4pt, clip, width=\linewidth]{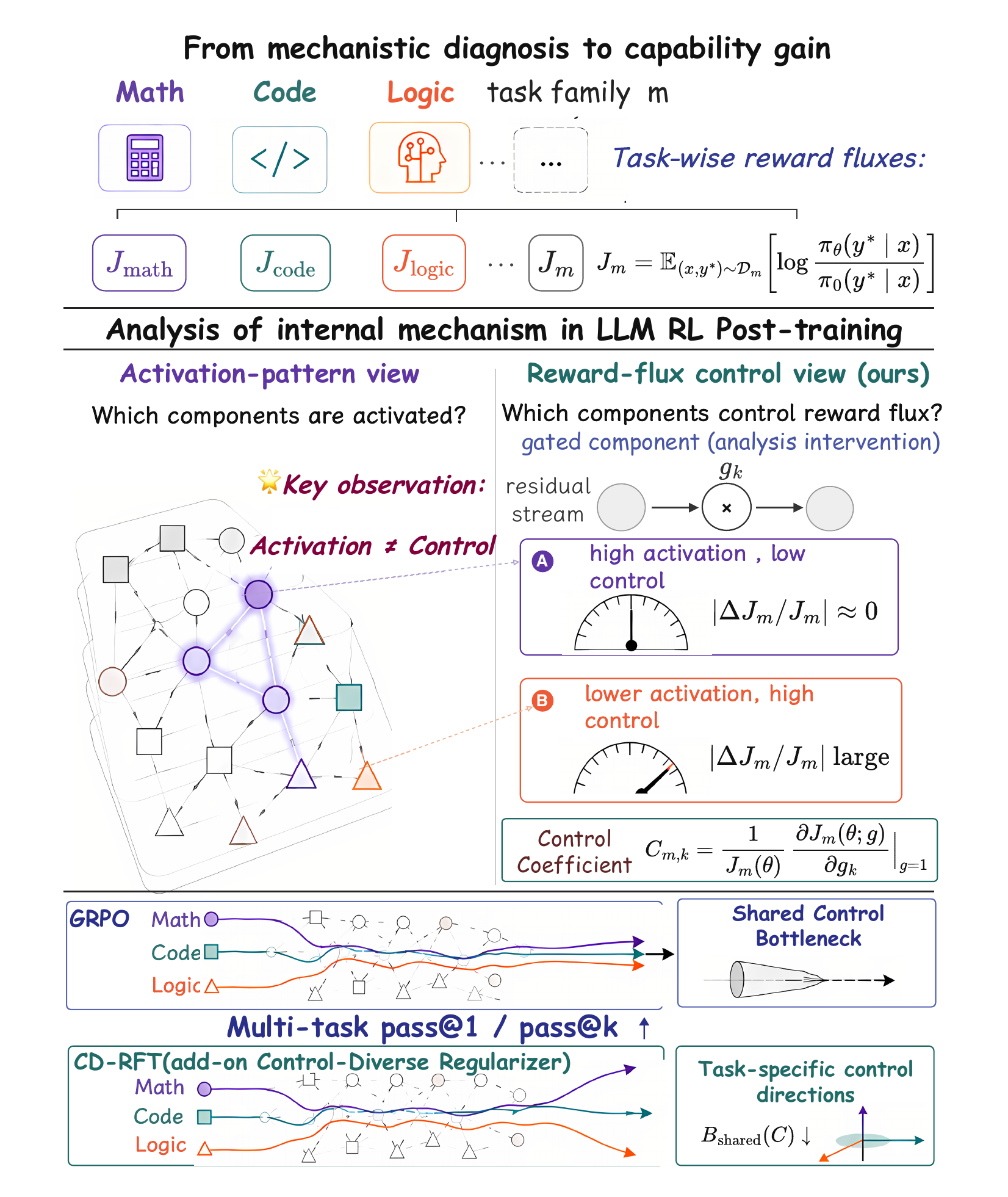}
\caption{The control view at a glance. \textbf{Top:} task-wise reward flux $\Jm$ (Eq.~\eqref{eq:flux}). \textbf{Middle:} attribution patching \citep{syed2024attribution} through a differentiable gate on each component yields the control coefficient $C_{m,k}$ (Eq.~\eqref{eq:coeff}); A and B show that activation does not imply control. \textbf{Bottom:} \CDRFT{} keeps control task-specific across families, lowering $\Bshared(\Cmat)$ and raising \passone{}/\passk{}.}
\label{fig:concept}
\end{figure}

The complex reasoning of large language models is unlocked primarily during post-training, where reinforcement learning maximizes a reward while holding the policy near its base model \citep{ouyang2022training}. As post-training moves from single-task to multi-task settings, a central question emerges: what does it actually change inside the model, and how does it distribute a finite capability across competing tasks?

Benchmark scores answer only whether a model has improved, not where the change occurs. Mechanistic interpretability quantifies, through attribution patching, the causal role of each component \citep{syed2024attribution}; along this line, a representative work attributes the success of reinforcement-learning fine-tuning to stronger and more diverse circuit activation \citep{zhang2025reinforcement}. We complement this view: activation tells us which circuits fire, but not which ones control the post-training reward gain, and that account is established on activation alone, within a single domain (mathematical reasoning).

To separate which components fire from which ones control the gain, we draw on Metabolic Control Analysis \citep{kacser1973flux,heinrich1974linear,fell1992metabolic}, the systems-biology framework for asking which local steps control a network's overall flux, and define the reward flux as the mean log-likelihood ratio margin by which the policy raises a reference target above the base model; a differentiable gate on each component then yields the Post-training Control Coefficient, read out by a single backward pass, measuring which components control the gain. Task-wise coefficients and component activation magnitudes form the control and activation matrices, respectively. We call cross-task control concentration the Shared Control Bottleneck and the difference between activation and control concentration the Activation--Control Gap. Highly shared activations can retain task-specific control; a small gap indicates that control concentrates along a shared direction and loses task specificity.

We turn this diagnosis into \CDRFT{}, which regularizes the post-training loss with the Shared Control Bottleneck.\footnote{Code is available at \url{https://github.com/tttbw/cd-rft}.} The exact regularizer gradient requires second-order differentiation incompatible with flash attention, so we derive a first-order proxy requiring one backward pass and less than eight percent overhead. On Qwen2.5-7B, \CDRFT{} attains the largest control decoupling. Over its matched GRPO recipe, the no-KL variant improves \passone{} and the KL-penalized variant improves large-$k$ \passk{} coverage. Beyond Qwen2.5-7B, the same pattern of control decoupling and capability gain carries over to Llama-3.2-3B.

Our contributions are as follows.
\begin{itemize}

\item We define the Post-training Control Coefficient, which separates activation contribution from control responsibility.

\item We characterize excessive cross-task control sharing using the Shared Control Bottleneck and the Activation--Control Gap, and derive an optimizable first-order proxy for the bottleneck.

\item We propose \CDRFT{}, which connects mechanistic diagnosis to training by decoupling the control structure; reducing the diagnosed bottleneck improves multi-task \passone{} and \passk{} on Qwen2.5-7B, with the same pattern on Llama-3.2-3B.

\end{itemize}

\section{Related Work}\label{sec:related}

\paragraph{Reinforcement learning post-training paradigm for LLM.}
Reference-constrained post-training (reinforcement learning from human feedback \citep{ouyang2022training} and direct preference optimization \citep{rafailov2023direct}) shares the log-ratio structure on which our reward flux is based. Reinforcement learning with verifiable rewards has become the dominant paradigm for reasoning: group relative policy optimization removes the value network \citep{shao2024deepseekmath}, pure reinforcement learning elicits reasoning behaviours \citep{guo2025deepseekr1}, whose gains over the base model, especially at large-$k$ $\passk$, are recently questioned \citep{yue2025rl}, and later work extends it along training stability \citep{yu2025dapo} and data coverage \citep{luo2025deepscaler,cheng2025guru}. Closest to our setting, multi-task GRPO jointly trains a single policy across several task families, re-weighting tasks to balance worst-task performance \citep{ramesh2026mtgrpo}. Whether acting on the reward, the data, or the multi-task objective, however, such methods stay at the external-signal level and do not characterize how a finite capability is distributed inside the model, or how concentrated that distribution is; we instead lower the Shared Control Bottleneck directly in the control space.

\paragraph{Mechanistic interpretability and causal attribution of fine-tuning.}
Mechanistic interpretability quantifies the causal role of components by attribution patching \citep{syed2024attribution,nanda2023attribution}, an idea inherited from the circuit framework \citep{elhage2021framework} and circuit discovery \citep{conmy2023acdc,wang2022ioi,kramar2024atp} and made more faithful by integrated gradients \citep{sundararajan2017axiomatic,hanna2024faith}; on fine-tuning, one line shows that it amplifies or reuses existing mechanisms rather than reconstructing circuits \citep{prakash2024finetuning,jain2024mechanistically}. Most relevant to us, \citet{zhang2025reinforcement} attribute the success of reinforcement-learning fine-tuning to increased activation intensity and diversity, measured within a single domain (mathematical reasoning) by three statistics of the same EAP edge-magnitude tensor: its mean (activation intensity), its entropy (information complexity), and its kurtosis (distribution kurtosis). All three describe how large and how spread the attributions are, not which components the reward gain depends on, a distinction Lemma~\ref{lemma:axes} makes precise: a circuit being activated does not entail that it controls the gain. That line of work is moreover purely diagnostic. We therefore redirect attribution to the Post-training Control Coefficient over the reward flux, characterize the over-sharing of control for the first time through the cross-task Shared Control Bottleneck, and close the loop by writing that quantity into the training objective and measuring the resulting multi-task gain.

\section{The Post-Training Control Coefficient}\label{sec:prelim}

\subsection{Post-Training Reward Flux}\label{sec:flux}

To separate activation from control into checkable quantities, we take as our basic object the log-likelihood-ratio margin of the policy relative to the reference model on a reference target. Our main experiments use reinforcement learning with verifiable rewards \citep[RLVR;][]{lambert2024tulu3}, in which the reference target $y^\star$ is a verified-correct completion; for a task family $m$ with samples $(x,y^\star)\sim\Dm$, we define the reward flux
\begin{equation}\label{eq:flux}
\Jm(\theta)=\mathbb{E}_{(x,y^\star)\sim\Dm}\!\left[\log\pi_\theta(y^\star\mid x)-\log\pi_0(y^\star\mid x)\right].
\end{equation}
The integrand $\log(\pi_\theta/\pi_0)$ in Eq.~\eqref{eq:flux} is the common structure of the family of reference-constrained post-training objectives: RLVR takes verifiable correctness as the reward and raises this log-likelihood ratio on correct targets, DPO writes the implicit reward as $\beta\log(\pi_\theta/\pi_0)$ \citep{rafailov2023direct}, and RLHF targets a trade-off between the reward and $\mathrm{KL}(\pi_\theta\,\|\,\pi_0)$ \citep{ouyang2022training}. Hence $\Jm$ is not an artifact constructed for the analysis but this shared log-ratio structure evaluated at the reference target.

\subsection{Metabolic Control Analysis}\label{sec:mca}

Metabolic Control Analysis (MCA) originates in systems biology \citep{kacser1973flux,heinrich1974linear} and studies the share of control that a single local step exerts over the overall flux of a network. Its central object, the flux control coefficient, is the scaled sensitivity (log--log derivative) of the overall flux $J$ to a local rate $v_i$,
\begin{equation}
C^{J}_{i}=\frac{\partial\ln J}{\partial\ln v_i}=\frac{v_i}{J}\,\frac{\partial J}{\partial v_i},
\end{equation}
and its summation theorem states that in a closed conservative network the control coefficients sum to one, so control is systematically distributed rather than monopolized \citep{fell1992metabolic}. We transplant this ``how local rates control the overall flux'' viewpoint to post-training, taking the reward flux as the overall flux and the component gate as the local rate. The residual structure breaks the required homogeneity, so $\sum_k C_{m,k}\neq 1$ here, as Appendix Proposition~\ref{app:propB1} shows; the distribution of control does not rely on this conservation law, however, and Theorem~\ref{thm:nosingle} proves independently that residual-stream superposition alone keeps the control support from collapsing to a single gate. We therefore borrow only the ``control share'' viewpoint of MCA.

\subsection{Differentiable Gating and Attribution Patching}\label{sec:gating}

On the residual update of each layer, we multiply a set of internal components each by a differentiable scalar gate $g_k$,
\begin{equation}\label{eq:gating}
h^{\ell+1}=h^\ell+\sum_{k:\ell(k)=\ell}g_k\,f_k(h^\ell;\theta),
\end{equation}
where $f_k$ is the transformation of component $k$ and $\mathrm{out}_k$ is its output written into the residual stream; when the gates take their nominal value $g\equiv\1$ the network recovers the original model under Appendix Assumption~\ref{app:asmA1}, so that $g_k$ is a continuous ablation intervention at the component level ($0$ for full ablation, $1$ for intact), and we denote the gated reward flux by $\Jm(\theta;g)$. The gate granularity is a free choice: an attention head, an MLP neuron, or a circuit edge. We use the sublayer granularity, the coarsest choice under which the spectral-proxy optimization stays feasible for full 7B fine-tuning; finer gate sets (head, neuron, or edge) are left to future work.

To measure the causal effect of such an intervention on a target metric, the standard tool in mechanistic interpretability is \emph{attribution patching} \citep{syed2024attribution,nanda2023attribution}, which linearizes an activation replacement by a first-order Taylor expansion (Appendix Eq.~\eqref{app:eq:ap}), so a single backward pass gives the attributions of all components at once. We take the gate as the intervention variable and the target log-likelihood as the metric, and read out the control coefficient next.

\subsection{The Coefficient and Its Expansion}\label{sec:coeff}

Combining the flux control coefficient with the gating above, we define the post-training control coefficient as the scaled gate-sensitivity of the target log-likelihood $\ell_m(\theta):=\mathbb E_{\Dm}[\log\pi_\theta(y^\star\mid x)]$, the policy term of the flux,
\begin{equation}\label{eq:coeff}
C_{m,k}=\frac{1}{\ell_m(\theta)}\,\frac{\partial \ell_m(\theta;g)}{\partial g_k}\bigg|_{g=\1}.
\end{equation}
The reference term $\log\pi_0$ is gate-independent, so $\partial\ell_m/\partial g_k=\partial\Jm/\partial g_k$: the coefficient reports the gate-sensitivity of the reward flux, now normalized by the stable $\ell_m$ ($|\ell_m|\ge\epsilon_0$, Appendix Assumption~\ref{app:asmA2}) rather than by the sign-indefinite, possibly-vanishing $\Jm$ (Appendix Remark~\ref{app:remA1}). Arranging the control coefficients into the control matrix $\Cmat=[C_{m,k}]\in\real^{M\times K}$ with $M=M_{\mathrm{fam}}$ rows, one per task family: each row is the control vector of family $m$, averaged over the family's $n$ probe samples, as detailed in Appendix~\ref{sec:appendix}.

The explanatory power of the control coefficient comes from its first-order dominance over the relative change of the reward flux under small gate perturbations.

\begin{proposition}[first-order expansion of control]\label{prop:expansion}
Under Appendix Assumptions~\ref{app:asmA0}--\ref{app:asmA2}, and writing the gate log-perturbation $u=\log g$, the relative change of the target log-likelihood of family $m$ satisfies
\begin{equation}\label{eq:expansion}
\frac{\ell_m(\theta;e^{u})-\ell_m(\theta)}{\ell_m(\theta)}=\sum_{k=1}^{K}C_{m,k}\,u_k+O\!\left(\lVert u\rVert_2^2\right),\qquad u\to\mathbf 0.
\end{equation}
\end{proposition}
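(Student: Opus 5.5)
The plan is a Taylor expansion of the smooth map $u\mapsto \ell_m(\theta;e^{u})$ at $u=\mathbf 0$, combined with the chain rule through the reparametrization $g=e^{u}$. The assumptions supply three ingredients. Assumption~\ref{app:asmA0} gives smoothness: the gated network is twice continuously differentiable in the gates in a neighbourhood of $g=\1$, which holds because the gates enter the residual update~\eqref{eq:gating} linearly and the remaining operations (softmax, normalization, smooth activations, log-softmax of the output) are smooth. Assumption~\ref{app:asmA1} gives the nominal point: at $g=\1$ the gated model recovers the original one, so $\ell_m(\theta;\1)=\ell_m(\theta)$. Assumption~\ref{app:asmA2} gives the normalization: $|\ell_m(\theta)|\ge\epsilon_0>0$, so dividing by $\ell_m(\theta)$ is legitimate and does not inflate the constants.

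\textbf{Step 1.} Define $\phi(u):=\ell_m(\theta;e^{u})$, where $e^{u}$ acts componentwise. The map $u\mapsto e^{u}$ is smooth with Jacobian $\mathrm{diag}(e^{u})$, which equals the identity at $u=\mathbf 0$. The chain rule then gives
\[
\frac{\partial\phi}{\partial u_k}(\mathbf 0)=\frac{\partial\ell_m(\theta;g)}{\partial g_k}\Big|_{g=\1}\cdot e^{0}=\ell_m(\theta)\,C_{m,k},
\]
where the last equality is the definition~\eqref{eq:coeff}.

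\textbf{Step 2.} Apply Taylor's theorem with Lagrange or integral remainder on a closed ball $\|u\|_2\le r$ inside the smoothness neighbourhood:
\[
\phi(u)=\phi(\mathbf 0)+\nabla\phi(\mathbf 0)^\top u+R(u),\qquad |R(u)|\le\tfrac12\sup_{\|v\|\le r}\|\nabla^2\phi(v)\|_{\mathrm{op}}\,\|u\|_2^2 .
\]
Dividing by $\ell_m(\theta)=\phi(\mathbf 0)$ and using Step~1 yields~\eqref{eq:expansion}. The remainder constant is at most $\sup\|\nabla^2\phi\|/(2\epsilon_0)$.

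\textbf{Main obstacle.} The delicate point is that $\ell_m$ is an expectation over $\Dm$, in practice an average over the $n$ probe samples. To differentiate under the expectation and to bound the Hessian uniformly, I need a dominating bound. The second derivatives of $\log\pi_\theta(y^\star\mid x;g)$ in $g$ must be bounded uniformly over $(x,y^\star)$ and over $g$ in a neighbourhood of $\1$.

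I would first try to get this directly from the probe set. It is finite, so the expectation is a finite sum of $C^2$ functions and everything follows immediately. For a general $\Dm$, I would invoke the boundedness part of Assumption~\ref{app:asmA0}, namely bounded inputs and target lengths, to obtain a uniform Hessian bound. The key inputs to that bound are the log-softmax outputs and a bound on the sequence length, which keep the per-sample Hessian bounded. Dominated convergence then justifies interchanging derivative and expectation.

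The Hessian of $\phi$ in $u$ also contains a diagonal term $\mathrm{diag}\bigl(e^{u}\odot\nabla_g\ell_m\bigr)$, produced by the curvature of the exponential. This term is bounded on the same ball by the same assumptions.
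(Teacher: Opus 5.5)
Your proposal is correct and follows the paper's proof: define $\phi(u)=\ell_m(\theta;e^{u})$, use the chain rule (the Jacobian of $u\mapsto e^{u}$ is the identity at $\mathbf 0$) to get $\partial_{u_k}\phi(\mathbf 0)=\ell_m(\theta)\,C_{m,k}$, Taylor-expand, and divide by $\ell_m(\theta)\neq 0$. Your ``main obstacle'' is already handled by Assumption~\ref{app:asmA0} itself, which posits integrable domination of the $g$-derivatives so that $\ell_m(\theta;g)$ is smooth outright; no bounded-input or bounded-length hypothesis is needed, and none is assumed. Your integral or Lagrange remainder on a closed ball is the right tool for the $O(\lVert u\rVert_2^2)$ rate, and it is slightly more precise than the paper's appeal to a first-order Peano remainder, which on its own gives only $o(\lVert u\rVert_2)$.
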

This is a first-order Taylor expansion about $u=\mathbf 0$ divided by $\ell_m(\theta)\neq 0$, proven in Appendix Proposition~\ref{app:propA1}.

Equation~\eqref{eq:expansion} makes the control coefficient the first-order dominant coefficient of the relative change in gate log-space, measuring the control of component $k$ over the gain of family $m$. This differs from the activation magnitude tracked by prior analyses \citep{zhang2025reinforcement}, the average usage of component $k$ on family $m$,
\begin{equation}
F_{m,k}=\mathbb{E}_{\Dm}\lVert\mathrm{out}_k(x;\theta)\rVert_2,\qquad F=[F_{m,k}]\in\real^{M\times K}.
\end{equation}
Because the reward flux is read out along a fixed direction $w_m$ defined in Appendix~\ref{sec:appendix}, the first-order effect of a component depends on the part of its write that survives propagation to $w_m$, not on its norm; large usage and strong control thus decouple.

\begin{lemma}[the activation axis and the control axis are not interchangeable]\label{lemma:axes}
For the gating of Eq.~\eqref{eq:gating} there are networks and task families in which $C_{m,\cdot}$ is not a scalar multiple of $F_{m,\cdot}$.
\end{lemma}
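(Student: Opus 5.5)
This is an existence statement, so I will prove it by exhibiting a small explicit network with $K=2$ gated components on one layer and a single task family, and computing both rows $C_{m,\cdot}$ and $F_{m,\cdot}$ in closed form. I take a one-layer residual model $h^1=h^0+g_1 f_1(h^0)+g_2 f_2(h^0)$ with constant writes $f_k(h^0)=a_k$. The model is read out by a linear unembedding followed by a softmax over two tokens, so that $\log\pi_\theta(y^\star\mid x)=\sigma$-type function of $\langle w_m,h^1\rangle$, where $w_m$ is the difference of the two unembedding rows. With $g\equiv\1$ this is the ungated model, as the gating of Eq.~\eqref{eq:gating} requires. I will choose a single probe sample, so that the expectations in $\ell_m$ and $F_{m,k}$ reduce to point evaluations.

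The key computation is the chain rule:
\[
\frac{\partial \ell_m}{\partial g_k}\Big|_{g=\1}=s\,\langle w_m,a_k\rangle ,
\]
where $s=\partial\log\pi(y^\star)/\partial z$ is evaluated at $z=\langle w_m,h^1\rangle$. Hence
\[
C_{m,k}=\frac{s}{\ell_m}\,\langle w_m,a_k\rangle ,
\qquad
F_{m,k}=\lVert a_k\rVert_2 .
\]
Up to the common scalar $s/\ell_m$, the control row is therefore the vector of projections onto $w_m$, while the activation row is the vector of norms.

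I then pick $a_1=w_m$ and $a_2=v$ with $v\perp w_m$ and $\lVert v\rVert_2=\lVert w_m\rVert_2\neq 0$. This gives $F_{m,\cdot}=(c,c)$ with $c>0$, and $C_{m,\cdot}\propto(\lVert w_m\rVert^2,0)$. Neither row vanishes, and $(\lambda c,\lambda c)=(\alpha,0)$ with $\alpha\neq0$ has no solution $\lambda$. So $C_{m,\cdot}$ is not a scalar multiple of $F_{m,\cdot}$. The case $\lambda=0$ is excluded because $C_{m,1}\neq 0$ as long as $s\neq0$.

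The step needing care is checking the standing Appendix assumptions for this toy instance. First, I need $|\ell_m|\ge\epsilon_0$. Since $\log\pi<0$ strictly for a softmax over two tokens with finite logits, this holds automatically. Second, I need $s\neq0$. This holds because $\partial\log\sigma(z)/\partial z=1-\sigma(z)>0$. Third, the model must be differentiable in $g$, which it clearly is. If the Appendix insists on layer norm or a nontrivial $f_k(h;\theta)$, I would replace the constant writes by $f_k(h)=A_k h$ with a fixed nonzero input $h^0$ and set $A_kh^0=a_k$. Alternatively, I would insert a final RMSNorm, which keeps $w_m$-direction orthogonality intact at first order because the normalization Jacobian at $h^1$ only rescales and projects out $h^1$. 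The argument above would be adjusted by replacing $w_m$ with the pulled-back direction $J^\top w_m$, and $a_2$ would be chosen orthogonal to that instead. This robustness adjustment is the main obstacle; the separation itself is immediate once the control row is seen as a projection and the activation row as a norm.
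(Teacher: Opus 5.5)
Your proposal is correct and uses the same idea as the paper's proof: a component whose write is orthogonal to the readout direction $w_m$ has zero gate derivative, hence zero control, while its activation $F_{m,k}=\lVert\mathrm{out}_k\rVert_2$ stays nonzero, so the supports of $C_{m,\cdot}$ and $F_{m,\cdot}$ differ. Your explicit one-layer, two-gate instance makes this concrete. It removes the paper's need for the downstream Jacobians not to rotate the write into $w_m$, and it excludes every scalar multiple rather than only positive ones.
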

Appendix Lemma~\ref{app:lemA1} constructs one.

Lemma~\ref{lemma:axes} has two consequences. First, ``post-training enhances activation diversity'' does not entail ``post-training improves the control structure'': the two axes are not interchangeable. Second, optimizing on the activation axis changes the objective and cannot reduce control sharing (Section~\ref{sec:sensitivity}). We therefore treat the control matrix as a first-class object.

\section{The Shared Control Bottleneck}\label{sec:motivation}

Distinguishing control from activation is what makes control-aware training possible (Section~\ref{sec:method}). This section builds that measurement: the Activation--Control Gap between the sharing of control and of activation.

\subsection{Cross-Task Concentration}\label{sec:concentration}

For a matrix $X\in\{\Fmat,\Cmat\}$, its family Gram matrix $G_X=XX^\top\in\real^{M\times M}$ is symmetric positive semidefinite, and we characterize the cross-task concentration by its normalized largest eigenvalue,
\begin{equation}\label{eq:concentration}
\Bshared(X)=\frac{\lambda_{\max}(G_X)}{\operatorname{tr}(G_X)}\in\Big[\tfrac1M,1\Big],
\end{equation}
with $X\neq 0$ so that the denominator is nonzero. This quantity measures how much of the energy of the rows (one per task family) falls along a single shared direction.

\begin{proposition}[bounds, saturation, and invariance of $\Bshared$]\label{prop:bounds}
Let the eigenvalues of $G_X$ be $\lambda_1\ge\dots\ge\lambda_M\ge0$. Then: (i) $1/M\le\Bshared(X)\le 1$; (ii) the upper bound $\Bshared=1$ holds if and only if $\operatorname{rank}(X)=1$, i.e.\ the row vectors are collinear; (iii) the lower bound $\Bshared=1/M$ holds if and only if all $\lambda_i$ are equal, i.e.\ the row vectors are pairwise orthogonal and of equal norm; (iv) it is invariant under magnitude scaling $X\mapsto cX$ ($c\neq0$) and under orthogonal reparameterization of the gate coordinates $X\mapsto XQ$ ($QQ^\top=I_K$).
\end{proposition}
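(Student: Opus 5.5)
The argument works entirely through the spectrum of the Gram matrix. Since $G_X=XX^\top$ is symmetric positive semidefinite with eigenvalues $\lambda_1\ge\dots\ge\lambda_M\ge0$, we have $\operatorname{tr}(G_X)=\sum_i\lambda_i=\lVert X\rVert_F^2>0$ because $X\neq0$. Hence $\Bshared(X)=\lambda_1/\sum_i\lambda_i$. Every part of the statement then reduces to an elementary fact about nonnegative numbers whose largest member is $\lambda_1$, together with the identification of the eigen-structure of $XX^\top$ with the row geometry of $X$.

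For (i), the upper bound $\lambda_1\le\sum_i\lambda_i$ follows from nonnegativity of the remaining eigenvalues. The lower bound comes from $\sum_i\lambda_i\le M\lambda_1$, since $\lambda_1$ is the maximum. For (ii), equality $\lambda_1=\sum_i\lambda_i$ holds if and only if $\lambda_2=\dots=\lambda_M=0$. This is equivalent to $\operatorname{rank}(G_X)=1$, and we use $\operatorname{rank}(XX^\top)=\operatorname{rank}(X)$, which holds because $\ker(XX^\top)=\ker(X^\top)$. Rank one with $X\neq0$ means every row is a scalar multiple of a single nonzero vector, which is exactly collinearity of the rows.

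For (iii), equality $M\lambda_1=\sum_i\lambda_i$ holds if and only if every $\lambda_i=\lambda_1$. The main point is translating this into the row statement. Equal eigenvalues of a symmetric matrix mean $G_X=\lambda_1 I_M$. Reading off entries, $(G_X)_{mm'}=\langle x_m,x_{m'}\rangle$ equals $0$ for $m\neq m'$ and equals $\lambda_1$ on the diagonal. That is pairwise orthogonality with common squared norm $\lambda_1>0$. Conversely, orthogonal rows of equal norm give $G_X=cI_M$, so all eigenvalues are equal. This step needs the spectral theorem in order to pass from equal eigenvalues to $G_X$ being a scalar matrix. Without it, equal eigenvalues alone would not pin down the off-diagonal entries.

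For (iv), note that $G_{cX}=c^2G_X$, so every eigenvalue and the trace scale by the same factor $c^2>0$, and the ratio is unchanged. For $Q$ with $QQ^\top=I_K$ we have $G_{XQ}=XQQ^\top X^\top=G_X$, so the statement holds by direct identity. The hypothesis $QQ^\top=I_K$ (rather than $Q^\top Q$) is exactly what this computation uses, so $Q$ may even be rectangular with orthonormal rows. None of the steps depends on earlier results of the paper beyond the definition in Eq.~\eqref{eq:concentration}.
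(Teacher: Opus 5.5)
Your proof is correct and takes the same route as the paper: you write $\Bshared=\lambda_1/\sum_i\lambda_i$, use $\lambda_1\le\sum_i\lambda_i\le M\lambda_1$ with its equality cases for (i)--(iii), and use $G_{cX}=c^2G_X$ and $G_{XQ}=G_X$ for (iv). You also supply two steps the paper leaves implicit, namely $\operatorname{rank}(XX^\top)=\operatorname{rank}(X)$ in (ii) and the spectral theorem giving $G_X=\lambda_1 I_M$, which yields the orthogonal equal-norm row characterization in (iii).
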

Appendix Proposition~\ref{app:propB2} proves these.

Property (iv) therefore makes $\Bshared(\Cmat)$ comparable across methods and training steps at a fixed granularity, the prerequisite for a cross-method mechanistic comparison. $\Bshared(\Cmat)$ measures direction sharing and the distribution of control energy jointly.

\subsection{The Activation--Control Gap}\label{sec:acg}

Applying the concentration to the activation axis and the control axis separately, their difference characterizes the degree to which control is decoupled from activation,
\begin{equation}\label{eq:acg}
\ACG=\Bshared(\Fmat)-\Bshared(\Cmat).
\end{equation}
That multi-task activations are shared across tasks is an expected property, with $\Bshared(\Fmat)$ near its upper bound (measured $\approx 99.6$), and as a ceiling it is not a target.

\begin{proposition}[characterization of the gap]\label{prop:pathology}
Under the activational ceiling $\Bshared(\Fmat)\to 1$, the gap satisfies $\ACG\in[0,\,1-1/M]$, and $\ACG\to 0$ if and only if $\Bshared(\Cmat)\to\Bshared(\Fmat)$, in which case control is as concentrated as activation; $\ACG$ is large if and only if $\Bshared(\Cmat)$ lies far below the ceiling, corresponding to families sharing activations while control stays task-specific.
\end{proposition}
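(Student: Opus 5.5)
The argument uses only Proposition~\ref{prop:bounds}, applied separately to $X=\Fmat$ and $X=\Cmat$, together with the definition $\ACG=\Bshared(\Fmat)-\Bshared(\Cmat)$. We read the activational ceiling as a limit along a family of configurations, e.g.\ over training steps, in which $\Bshared(\Fmat)=1-\delta$ with $\delta\to 0^+$. Each claim is then a statement about $\ACG$ up to $o(1)$ terms as $\delta\to0$.

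\textbf{Range.} By Proposition~\ref{prop:bounds}(i), applied to $\Cmat$ (assumed nonzero so that $\Bshared(\Cmat)$ is defined), $1/M\le\Bshared(\Cmat)\le1$. Hence
\begin{equation*}
\ACG=(1-\delta)-\Bshared(\Cmat)\in\big[-\delta,\;1-1/M-\delta\big].
\end{equation*}
Letting $\delta\to0$ gives $\ACG\in[0,1-1/M]$ in the limit, or, more carefully, $\liminf\ACG\ge 0$ and $\limsup\ACG\le 1-1/M$.

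\textbf{Endpoint characterization.} By definition, $\ACG=\Bshared(\Fmat)-\Bshared(\Cmat)$. So $\ACG\to0$ holds if and only if $\Bshared(\Cmat)\to\Bshared(\Fmat)$; this is a tautology and needs no ceiling. Under the ceiling, both conditions are equivalent to $\Bshared(\Cmat)\to1$. Proposition~\ref{prop:bounds}(ii) then lets us read this as the control rows becoming asymptotically collinear, i.e.\ control concentrating along a single shared direction, "as concentrated as activation." For the converse, $\ACG$ is bounded away from $0$ if and only if $1-\delta-\Bshared(\Cmat)\ge c>0$, i.e.\ $\Bshared(\Cmat)$ stays at distance at least $c$ below the ceiling. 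At the far extreme, $\ACG\to1-1/M$ if and only if $\Bshared(\Cmat)\to1/M$. By Proposition~\ref{prop:bounds}(iii), this means the control rows become pairwise orthogonal with equal norms, which is the "task-specific control" reading, while $\Bshared(\Fmat)\approx1$ says the activations are shared.

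\textbf{Main obstacle.} The only delicate point is that statement (ii) of Proposition~\ref{prop:bounds} is an exact "iff" at the value $1$, whereas here we need it in the limit. To show that $\Bshared(\Cmat)\to1$ implies near-collinearity, I would write
\begin{equation*}
1-\Bshared(\Cmat)=\frac{\sum_{i\ge2}\lambda_i}{\operatorname{tr}G_C}.
\end{equation*}
This ratio is the relative energy of $\Cmat$ outside its best rank-one approximation, by Eckart--Young. So it tends to zero exactly when the normalized $\Cmat/\lVert\Cmat\rVert_F$ approaches the set of rank-one matrices, which gives a quantitative version of (ii). The same identity handles (iii): the spectrum $(\lambda_i/\operatorname{tr}G_C)_i$ is a probability vector on $M$ points whose maximum tends to $1/M$ only if it tends to the uniform vector. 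By compactness, this is equivalent to $\Cmat\Cmat^\top/\operatorname{tr}G_C\to I_M/M$. That limit is exactly asymptotic orthogonality with equal row norms.
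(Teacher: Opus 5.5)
Your proof is correct and follows the paper's route. The range comes from Proposition~\ref{prop:bounds}(i) applied to $\Cmat$ together with Eq.~\eqref{eq:acg}, the two ``iff'' statements are read directly off the definition of $\ACG$, and Proposition~\ref{prop:bounds}(ii)--(iii) supplies their mechanistic meaning. Your version is more careful than the appendix proof in two ways. First, the paper writes $\ACG\in[0,\Bshared(\Fmat)-1/M]$, but the lower endpoint $0$ holds only in the limit; at finite $\Bshared(\Fmat)<1$ one only has $\ACG\ge\Bshared(\Fmat)-1$, which your $[-\delta,\,1-1/M-\delta]$ handles correctly. Second, your Eckart--Young and uniform-spectrum arguments give the asymptotic versions of (ii)--(iii) that the paper leaves implicit.
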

Appendix Proposition~\ref{app:propB3} proves this.

Hence the target is not to suppress $\Bshared(\Fmat)$ but, at a fixed ceiling, to lower $\Bshared(\Cmat)$ and thereby enlarge the gap in Eq.~\eqref{eq:acg}, which is precisely the objective of the method in Section~\ref{sec:objective}. Empirically, reinforcement learning already lowers $\Bshared(\Cmat)$ below the base, so our claim is controlled-variable: on a matched recipe, our method lowers $\Bshared(\Cmat)$ the most, as Appendix Remark~\ref{app:remB1} discusses.

\subsection{Distributedness of Control}\label{sec:distributed}

Taking the overall shape of the control matrix, rather than a single-gate attribution, as the object of analysis is justified by the fact that control is systematically distributed over the residual network. This distributedness does not rely on the summation theorem of Section~\ref{sec:mca} (which fails here, Appendix Proposition~\ref{app:propB1}) but stems from the more fundamental structure of residual-stream superposition.

\begin{theorem}[no single-gate monopoly under residual superposition]\label{thm:nosingle}
Suppose the reward flux is read out by projecting the final residual onto a fixed readout direction $w_m$. Then, for generic network weights, $\partial_{g_k}\Jm|_{g=\mathbf 1}\neq0$ for every $k$: the control support is not a proper subset of $\{1,\dots,K\}$.
\end{theorem}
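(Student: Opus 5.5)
The plan is a genericity argument: show that for each gate $k$ the derivative $\partial_{g_k}\Jm|_{g=\1}$ is a real-analytic function of the weights $\theta$ that is not identically zero. The zero set of a nonzero real-analytic function on a connected parameter space is closed and has Lebesgue measure zero. A finite union over $k=1,\dots,K$ of such sets is still null, so for generic $\theta$ every derivative is nonzero and the control support is all of $\{1,\dots,K\}$. Two ingredients are needed: analyticity in $\theta$, and one witness weight setting per gate.

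\textbf{Step 1 (chain rule).} By the readout hypothesis, the flux depends on the gates through the logits, and these are linear in the projection $\langle w_m, h^L\rangle$ of the final residual, followed by the softmax and log-likelihood. Differentiating Eq.~\eqref{eq:gating} at $g=\1$ gives
\begin{equation}
\partial_{g_k}\Jm\big|_{g=\1}=\mathbb{E}_{\Dm}\Big[\sum_t \alpha_t\,\big\langle w_m,\;J_{L\leftarrow \ell(k)+1}\,\mathrm{out}_k\big\rangle\Big],
\end{equation}
where $J_{L\leftarrow \ell+1}=\prod_{\ell'>\ell}\big(I+\sum_{j:\ell(j)=\ell'}\partial_h f_j\big)$ is the residual Jacobian product and $\alpha_t$ are the scalar loss sensitivities at each target position $t$. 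Each factor is a composition of analytic maps: LayerNorm away from zero variance, softmax, GELU or SiLU, and matrix products. Under Appendix Assumptions~\ref{app:asmA0}--\ref{app:asmA2}, the expectation over the finite probe set is analytic in $\theta$ on a connected open domain. This follows the reasoning of Appendix Proposition~\ref{app:propA1}.

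\textbf{Step 2 (witness per gate).} This is the main obstacle, because we must exhibit, for each $k$ separately, one parameter point where the derivative is nonzero. I would use the residual identity path. Choose weights with every component other than $k$ switched off, i.e.\ zero output projections for all $j\neq k$. Then $J_{L\leftarrow\ell(k)+1}=I$, and the derivative reduces to
\begin{equation}
\mathbb{E}\Big[\sum_t\alpha_t\langle w_m,\mathrm{out}_k\rangle\Big].
\end{equation}
Because $\mathrm{out}_k$ ends in a free linear output projection $W^O_k$, this is linear in $W^O_k$. We can therefore pick $W^O_k$ with a component along $w_m$ so that the sum is nonzero; for example, use a rank-one $W^O_k=w_m v^\top$ with $v$ aligned to the averaged input feature. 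This requires the coefficient $\mathbb{E}\big[\sum_t\alpha_t\,\cdot\big]$ not to vanish identically. Positions with distinct target tokens give $\alpha_t$ that are not all cancelling, and it is exactly here that the residual superposition, rather than the summation theorem, carries the argument.

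A possible subtlety is that the sensitivities $\alpha_t$ could cancel in sign. If they do, I would instead perturb an embedding so that a single probe position dominates. Any needed final LayerNorm is handled by keeping its input away from the degenerate variance-zero point.

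\textbf{Step 3 (conclude).} Each map $\theta\mapsto\partial_{g_k}\Jm$ is analytic and nonzero at its witness, so its zero set is null. The union of the $K$ zero sets is null, and its complement is open, dense and of full measure. On that complement every gate has nonzero control, which is the claim.
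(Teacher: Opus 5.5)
Your argument is the paper's proof: the propagation identity $\partial_{g_k}\Jm|_{g=\1}=w_m^\top\big(\prod_{\ell'=\ell(k)+1}^{L-1}(I+J_{\ell'})\big)f_k(h^{\ell(k)};\theta)$, real-analyticity in $\theta$, null zero sets, and a finite \emph{union} over $k$. (The paper writes ``intersection'', but the union is what the conclusion needs.) Your one genuine addition is the per-gate witness: switch off all other output projections so the Jacobian product is $I$, then take $W^O_k=w_mv^\top$; this proves the not-identically-zero claim that the paper only asserts. Under the theorem's fixed linear readout, though, your softmax-dependent factors $\alpha_t$ should not be there. With them, your claim that the derivative is linear in $W^O_k$ is false, because they depend on $W^O_k$ through $h^L$. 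Without them, the derivative is exactly $\lVert w_m\rVert_2^2\,v^\top\mathbb{E}\sum_t z_{k,t}$, and the cancellation worry reduces to choosing component $k$'s internal weights so that $\mathbb{E}\sum_t z_{k,t}\neq0$. If you keep the $\alpha_t$, instead scale $W^O_k$ by a small $s$ and argue at first order in $s$.
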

Intuitively, residual superposition mixes the write of every component into the readout, so a gate with no control would need its write to cancel exactly along $w_m$, a measure-zero coincidence (the propagation identity and the generic-weight argument are in Appendix Theorem~\ref{app:thmB1}). The control support is therefore generically not a single gate, so a single-gate scalar cannot characterize it and the right object is the cross-task concentration $\Bshared(\Cmat)$ of the full matrix (Figure~\ref{fig:proxydist}a). Concentration also implies a deployment-level fragility geometry, larger $\Bshared(\Cmat)$ giving larger perturbation variance, as Appendix Proposition~\ref{app:propB4} shows.

\section{Control-Diverse Regularizer }\label{sec:method}

\subsection{Objective}\label{sec:objective}

On top of the backbone post-training loss $\Lpost$, we add a single control-diverse regularizer,
\begin{equation}
\mathcal{L}_{\CDRFT}(\theta)=\Lpost(\theta)+\lambda\,\Bshared\!\big(\Cmat(\theta)\big),\qquad\lambda>0,
\label{eq:objective}
\end{equation}
so that lowering $\Bshared(\Cmat)$ at a fixed activational ceiling enlarges the gap in Eq.~\eqref{eq:acg}. It attaches to any reference-constrained objective; we use GRPO \citep{shao2024deepseekmath}. Computing $\nabla_\theta\Bshared(\Cmat)$ exactly requires differentiating the already first-order control coefficient once more in $\theta$, i.e.\ second-order automatic differentiation: eager-only, memory-exploding, and incompatible with flash-attention \citep{dao2022flashattention} and parameter sharding. The next three subsections reduce its $\theta$-gradient to one backward pass over a stop-gradient control estimate (operation count in Appendix Remark~\ref{app:remC1}). Table~\ref{tab:decomp} checks which of the two components of $\Bshared(\Cmat)$ the regularizer moves: both fall.

\begin{table}[t]
\centering

\small
\setlength{\tabcolsep}{5pt}
\begin{tabular}{lcccc}
\toprule
\addlinespace[1pt]
 & GRPO & \CDRFT{} & $\Delta$ & $\Delta<0$ \\
\addlinespace[1pt]
\midrule
\addlinespace[3pt]
$\Bshared(\Cmat)$ & 82.8 & 67.3 & $-15.5\pm6.5$ & 6/7 \\
$\Bdir(\Cmat)$ & 58.2 & 50.6 & $-7.6\pm2.1$ & 7/7 \\
$\Bnorm(\Cmat)$ & 77.3 & 62.9 & $-14.4\pm7.1$ & 6/7 \\
\bottomrule
\end{tabular}
\caption{Decomposition of the $\Bshared(\Cmat)$ drop on a matched GRPO/\CDRFT{} pair ($\beta{=}0$; seven probe seeds, same-seed paired differences). $\Bdir(\Cmat)=\lambda_{\max}(R)/M$, with $R$ the matrix of row cosines, is $\Bshared$ on a row-normalized $\Cmat$ and so measures direction sharing alone; $\Bnorm(\Cmat)$ is the value $\Bshared$ takes at exactly orthogonal rows. Only the direction term falls on every seed (Appendix~\ref{sec:app_rownorm}).}
\label{tab:decomp}
\end{table}

\subsection{A Stable Spectral-Moment Ratio Replacing the Spectral Extremum}\label{sec:momentratio}

Optimizing the spectral extremum $\lambda_{\max}$ directly invokes an eigendecomposition of $G_C$ that is non-smooth at top-eigenvalue degeneracy $\lambda_1=\lambda_2$, where the eigenvector derivative blows up as $1/(\lambda_1-\lambda_i)$, as Appendix Proposition~\ref{app:propC1} shows. Borrowing from spectral graph convolution the idea of avoiding eigendecomposition by polynomial approximation \citep{defferrard2016convolutional,kipf2017gcn}, and writing the normalized spectrum $\hat\lambda_i=\lambda_i/\sum_j\lambda_j$, we substitute the stable spectral-moment ratio
\begin{equation}
\Rproxy(\Cmat)=\frac{\operatorname{tr}(G_C^2)}{\operatorname{tr}(G_C)^2}=\sum_{i=1}^M\hat\lambda_i^2.
\label{eq:momentratio}
\end{equation}
This quantity is a pure polynomial (matrix multiplication, no eigendecomposition), is $C^\infty$ at $\Cmat\neq 0$ with no $1/(\lambda_1-\lambda_i)$ singularity, and is co-monotone with $\Bshared=\max_i\hat\lambda_i$ under the majorization order, a partial order, sharing the bounds $\{1,1/M\}$, as Appendix Proposition~\ref{app:propC2} establishes. Replacing $\Bshared$ by $\Rproxy$ buys a smooth objective with bounded gradient at degeneracy; both share extrema, and we verify their agreement empirically during training (Figure~\ref{fig:proxydist}b).

\subsection{Closed-Form Sensitivity and the Irreplaceability of the Control Axis}\label{sec:sensitivity}

Let $\bar C$ be the control matrix obtained from one ordinary backward pass with the gradient with respect to $\theta$ stopped, and write $\bar G=\bar C\bar C^\top$. The sensitivity of the spectral-moment ratio to the control matrix admits a closed form, with no eigendecomposition,
\begin{equation}
W:=\frac{\partial\Rproxy}{\partial \Cmat}\bigg|_{\bar C}=\frac{4}{\operatorname{tr}(\bar G)^2}\big(\bar G\,\bar C-\bar{\Rproxy}\,\operatorname{tr}(\bar G)\,\bar C\big)\in\mathbb R^{M\times K}
\end{equation}
Appendix Lemma~\ref{app:lemC1} gives the derivation. By the chain rule, the true gradient is the projection of this stop-gradient weight $W$ along the direction of the control matrix, $\nabla_\theta\Rproxy=\nabla_\theta\langle W,\,\Cmat(\theta)\rangle$; hence $W$ must act on the control matrix $\Cmat(\theta)$ rather than on the activation usage $A_k=\mathbb E\lVert\mathrm{out}_k\rVert_2$.

\begin{theorem}[irreplaceability of the control axis]\label{thm:controlaxis}
Substituting the activation usage $A_k$ for $\Cmat(\theta)$ fails on two structural grounds: it violates the premise $F\neq \Cmat$ of Lemma~\ref{lemma:axes}; and the per-gate scalar sums out the family dimension, so it cannot express per-task decoupling.
\end{theorem}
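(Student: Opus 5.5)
We will make the statement precise and then prove its two clauses separately.

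The substitution replaces $\langle W,\Cmat(\theta)\rangle$ by a surrogate built from the per-gate usage vector $A(\theta)=(A_1,\dots,A_K)$, where $A_k=\mathbb E\lVert\mathrm{out}_k\rVert_2$. Here the expectation pools all families. The most general such surrogate is $\langle \tilde W, \mathbf 1 A(\theta)^\top\rangle$ for some stop-gradient weight $\tilde W\in\real^{M\times K}$. Equivalently it is $\sum_k \omega_k A_k(\theta)$ with $\omega_k=\sum_m \tilde W_{m,k}$. The claim then has two parts.

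(a) The $\theta$-gradient of this surrogate generically differs from $\nabla_\theta\Rproxy$.

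(b) The surrogate cannot distinguish control matrices that differ only in how control is split across families.

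\textbf{First clause.} We start from the chain-rule identity already stated, $\nabla_\theta\Rproxy=\nabla_\theta\langle W,\Cmat(\theta)\rangle$ with $W$ frozen at $\bar C$. Suppose the surrogate were exact for every network. Then $\sum_m W_{m,k}\,\nabla_\theta C_{m,k}$ would have to coincide with $\sum_k\omega_k\nabla_\theta A_k$ along all parameter directions. We then use the network from Appendix Lemma~\ref{app:lemA1}, which realizes Lemma~\ref{lemma:axes}. In that network, adding to some gate $k$ a write component orthogonal to the propagated readout $w_m$ changes $\lVert\mathrm{out}_k\rVert_2$, and hence $A_k$ and $F_{m,k}$, to first order. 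It leaves $C_{m,k}$ unchanged, because only the part of the write surviving propagation to $w_m$ enters the coefficient. That parameter direction therefore separates the two gradients. The converse direction, which changes control with no norm change, is available too. So a usage-based gradient is exact only when $F\propto \Cmat$ row-wise, and that premise is exactly the one Lemma~\ref{lemma:axes} denies.

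\textbf{Second clause.} We argue algebraically. Any functional of $A$ factors through the linear map $\Cmat\mapsto$ column data with no row index; in the surrogate form above, only the column sums $\omega_k$ matter. Meanwhile $\Rproxy$ and $\Bshared$ are invariant under $X\mapsto cX$ and $X\mapsto XQ$ (Proposition~\ref{prop:bounds}(iv)), but they are not invariant under redistributing mass across rows.

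We exhibit two control matrices with identical per-gate aggregates but different $\Rproxy$. Take $M=K=2$, $\Cmat_1=\tfrac12\mathbf 1\mathbf 1^\top$ (rank one, $\Rproxy=1$) and $\Cmat_2=I$ (orthogonal equal-norm rows, $\Rproxy=1/2$). Both have column sums $(1,1)$. By Proposition~\ref{prop:bounds}(ii),(iii), these sit at the two extremes of $\Bshared$. So no function of a family-summed per-gate scalar can tell maximal sharing from full per-task decoupling, and in particular cannot push toward lower $\Bshared(\Cmat)$.

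The main obstacle is the first clause, because "fails" must be given content. We will read it as "is not gradient-equivalent for generic weights." The delicate step is producing a parameter perturbation that moves $A_k$ but not $C_{m,k}$ while staying inside Appendix Assumptions~\ref{app:asmA0}--\ref{app:asmA2}. We would first try to reuse the explicit construction of Appendix Lemma~\ref{app:lemA1}, perturbing the write along a direction in the kernel of the propagated readout. We would then invoke the genericity argument of Theorem~\ref{thm:nosingle} to extend the discrepancy from that example to an open dense set of weights.
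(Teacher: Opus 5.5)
Your proposal follows the paper's proof of Theorem~\ref{app:thmC1}, which likewise discharges clause (i) by citing the non-proportionality construction of Lemma~\ref{app:lemA1}, and clause (ii) by observing that $A_k$ carries no family index while the weight enters only through its family sum ($w_k=\sum_m\mathrm{relu}(W_{m,k})$, the case $\tilde W=\mathrm{relu}(W)$ of your surrogate). Your separating parameter direction, the $\tfrac12\mathbf 1\mathbf 1^\top$ versus $I$ example and the genericity extension go beyond the paper's two-line argument. If you carry out the first, rescale $\mathrm{out}_k$ itself, since a component orthogonal to $\mathrm{out}_k$ changes its norm only at second order. You also need to make the write invisible to the unembedding and to every downstream component, because orthogonality to $w_m$ alone still lets $w_m$ and the propagation Jacobians move to first order in $\theta$.
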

Appendix Theorem~\ref{app:thmC1} proves this.
Empirically, minimizing $\sum_k w_k A_k$ under fixed reward further cuts the high-usage shared directions and forces control to reconcentrate, the concentration spike observed on large models by early implementations that act on the activation usage.

\subsection{First-Order Proxy Gradient via Central Differences in Gate Space}\label{sec:centraldiff}

Expanding the Frobenius inner product row by row into gate-space directional derivatives gives
\begin{equation}\label{eq:rowexpand}
\langle W,\Cmat(\theta)\rangle=\sum_{m}\frac1{\ell_m}D_{W_{m,\cdot}}\ell_m,
\end{equation}
where $D_{W_{m,\cdot}}\ell_m=\langle W_{m,\cdot},\,\partial_g \ell_m|_{g=\mathbf 1}\rangle$ is the directional derivative of $\ell_m$ along $W_{m,\cdot}$. Since the gate-space dimension $K$ is far smaller than $\dim\theta$, we estimate this directional derivative by a central difference in gate space.

\begin{theorem}[central-difference form of the proxy gradient]\label{thm:centraldiff}
Writing $\hat W_{m,\cdot}=W_{m,\cdot}/\lVert W_{m,\cdot}\rVert_2$, we have
\begin{equation}
\label{eq:centraldiff}
\resizebox{\columnwidth}{!}{$\displaystyle
\langle W,\Cmat(\theta)\rangle=\sum_{m=1}^M\frac{\lVert W_{m,\cdot}\rVert_2}{\ell_m}\cdot\frac{\ell_m(\theta;\mathbf 1+\epsilon\hat W_{m,\cdot})-\ell_m(\theta;\mathbf 1-\epsilon\hat W_{m,\cdot})}{2\epsilon}+O(\epsilon^2),
$}
\end{equation}
where the two gated forward passes carry only constant gate scalings and are ordinarily differentiable in $\theta$; hence $\nabla_\theta\Lproxy$ is obtained by one backward pass, with no second-order graph, and agrees with $\nabla_\theta\Rproxy$ to within $O(\epsilon^2)$.
\end{theorem}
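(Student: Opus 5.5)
The plan is to begin from the row expansion Eq.~\eqref{eq:rowexpand}, which follows from the definition Eq.~\eqref{eq:coeff}. Since $C_{m,k}=\ell_m^{-1}\partial_{g_k}\ell_m|_{g=\1}$, we get $\langle W,\Cmat\rangle=\sum_m \ell_m^{-1}\langle W_{m,\cdot},\partial_g\ell_m|_{\1}\rangle$. Each term is then rescaled as $\lVert W_{m,\cdot}\rVert_2\,D_{\hat W_{m,\cdot}}\ell_m$. Rows with $W_{m,\cdot}=0$ contribute nothing and are simply dropped, so $\hat W_{m,\cdot}$ is well defined wherever it is used.

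The central step treats the one-variable function $\phi_m(t)=\ell_m(\theta;\1+t\hat W_{m,\cdot})$ for each $m$. Under Appendix Assumptions~\ref{app:asmA0}--\ref{app:asmA2}, the gated network of Eq.~\eqref{eq:gating} is smooth in $g$ near $\1$; we take Assumption~\ref{app:asmA0} to supply $C^3$ regularity, with third derivatives bounded on a neighbourhood. Expanding $\phi_m(\pm\epsilon)$ by Taylor's theorem to third order and subtracting cancels the even terms. This gives $(\phi_m(\epsilon)-\phi_m(-\epsilon))/(2\epsilon)=\phi_m'(0)+O(\epsilon^2)$, where $\phi_m'(0)=D_{\hat W_{m,\cdot}}\ell_m$. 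The remainder is bounded by $\tfrac16\sup|\phi_m'''|\,\epsilon^2$, and that supremum is uniform because $\lVert\hat W_{m,\cdot}\rVert_2=1$. Multiplying by $\lVert W_{m,\cdot}\rVert_2/\ell_m$ keeps the error $O(\epsilon^2)$: $|\ell_m|\ge\epsilon_0$ by Assumption~\ref{app:asmA2}, and $W$ is a fixed stop-gradient matrix bounded via Lemma~\ref{app:lemC1}. Summing over the finitely many $M$ families yields Eq.~\eqref{eq:centraldiff}.

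For the gradient claim, I would note that in $\ell_m(\theta;\1\pm\epsilon\hat W_{m,\cdot})$ the gate values are constants, because $W$ is stop-gradient. These are therefore ordinary forward passes with rescaled sublayer outputs, and their $\theta$-gradient needs only first-order backpropagation. The prefactor $\ell_m^{-1}$ is either also treated as stop-gradient or differentiated as an ordinary scalar; both choices are first-order. For agreement with $\nabla_\theta\Rproxy$, I would use the chain-rule identity stated before the theorem: $\nabla_\theta\Rproxy=\nabla_\theta\langle W,\Cmat(\theta)\rangle$ evaluated at $\bar C$.

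The main obstacle is that an $O(\epsilon^2)$ bound on function values does not automatically give an $O(\epsilon^2)$ bound on their $\theta$-gradients. I would handle this by applying the same Taylor argument to $\nabla_\theta\phi_m(t)$, viewing $\ell_m(\theta;g)$ as jointly smooth in $(\theta,g)$. The mixed derivative $\partial_\theta\partial_t^3\ell_m$ then needs to be bounded near $g=\1$, and I would assume this regularity through Assumption~\ref{app:asmA0}. With it, the difference quotient of gradients equals $\nabla_\theta D_{\hat W_{m,\cdot}}\ell_m+O(\epsilon^2)$, which completes the argument.
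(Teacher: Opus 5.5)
Your proposal follows the paper's own proof: the row expansion of $\langle W,\Cmat\rangle$, the central difference on $\phi_m(t)=\ell_m(\theta;\mathbf 1+t\hat W_{m,\cdot})$, and the observation that the constant gate scalings leave a first-order graph in $\theta$. Your gradient-level step, which bounds the $\theta$-gradient of the remainder using the joint smoothness of $\ell_m(\theta;g)$, supplies an argument the paper only asserts; one caveat is that of your two options for the prefactor $\ell_m^{-1}$, only differentiating it yields agreement with $\nabla_\theta\Rproxy$, since treating it as stop-gradient drops the term $-\ell_m^{-2}(\nabla_\theta\ell_m)\,D_{W_{m,\cdot}}\ell_m$, which is $O(1)$ rather than $O(\epsilon^2)$.
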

Appendix Theorem~\ref{app:thmC2} proves this.

Intuitively, the gradient needs only the projection of $\Cmat(\theta)$ along the fixed $W$, and a directional derivative along a fixed gate direction is the slope of a scalar read out by two forward passes; differentiation with respect to the gates is absorbed into the forward passes, leaving a first-order graph in $\theta$. Numerically we perturb along the unit direction and rescale by $\lVert W_{m,\cdot}\rVert_2$ to stay in the linear regime, as Appendix Remark~\ref{app:propC4} notes.

\subsection{Estimation Protocol}\label{sec:estimation}

The control matrix is estimated on a fixed probe of $n$ samples per family, each row a family mean, so $\Bshared(\Cmat)$ is an $n$-dependent estimator: under the rank-one signal-plus-noise model of Appendix Proposition~\ref{app:propC5},
\begin{equation}\label{eq:estimator}
\mathbb E\big[\operatorname{tr}(G_C)\big]=\underbrace{s^2 M_{\mathrm{fam}}}_{\text{signal}}+\underbrace{\tfrac1n\operatorname{tr}(\Sigma_N)}_{\text{per-row noise}},
\end{equation}
so a small probe inflates the denominator and under-reads concentration, while the leading direction is located at a sample size independent of $K$ (Davis--Kahan \citep{davis1970rotation}). We therefore fix $n$ across compared models and cut variance by offline seed averaging rather than a larger probe.

\begin{figure}[t]
\centering
\includegraphics[width=\linewidth]{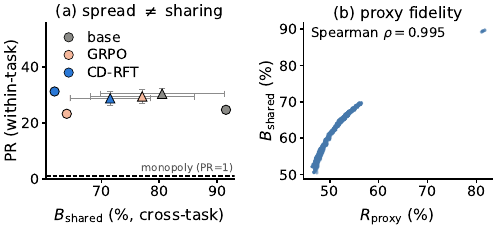}
\caption{Distributedness of control and proxy fidelity. \textbf{(a)} Every family spreads control over many gates ($\mathrm{PR}\gg1$), equally at every $\Bshared(C)$, so no per-gate scalar characterizes control. Circles Qwen2.5-7B, triangles Llama-3.2-3B (5 seeds). \textbf{(b)} $\Rproxy$ tracks $\Bshared(\Cmat)$ in training (\CDRFT{}, $\beta{=}0$).}
\label{fig:proxydist}
\end{figure}

\subsection{Implementation and Scalability}\label{sec:impl}

The soft coefficient $\lambda$ alone cannot curb the transient concentration peak during the reinforcement-learning warm-up, so we enforce the near-hard constraint
\begin{equation}\label{eq:constraint}
\min_\theta\ \Lpost(\theta)\quad\text{s.t.}\quad \Bshared\!\big(\Cmat(\theta)\big)\le\tau,
\end{equation}
by a per-step inner loop that descends $-\nabla_\theta\Lproxy$ until $\Bshared\le\tau$ or a cap $K_{\max}$ is reached; Appendix Algorithm~\ref{alg:cdrft} and Definition~\ref{app:defC3} give the full step and the $(\tau,K_{\max})$ setting. Reduced to a single backward pass by Theorem~\ref{thm:centraldiff}, the regularizer is compatible with flash-attention and parameter sharding, scales to full fine-tuning of 7B on a single GPU without low-rank adaptation, and adds only a minor per-step overhead, as Appendix Remark~\ref{app:propC7} shows.

\section{Experiments}\label{sec:exp}

We organize the experiments around two questions: (i) whether the Shared Control Bottleneck $\Bshared(\Cmat)$ is the right object for characterizing the internal changes wrought by reinforcement-learning post-training, namely whether it yields a reproducible mechanistic signature that activation-level metrics cannot; and (ii) whether explicitly decoupling this bottleneck on top of the same recipe (\CDRFT{}) predictably translates into gains in multi-task capability.

\subsection{Experimental Setup}\label{sec:setup}

We train Qwen2.5-7B \citep{yang2024qwen25} from the base model with GRPO \citep{shao2024deepseekmath} as the backbone, under an RLVR setting \citep{lambert2024tulu3}, on a balanced mixture of three families of verifiable tasks (mathematics, code, and logic) drawn respectively from DeepScaleR \citep{luo2025deepscaler}, code-r1-12k \citep{liu2025coder1}, and the GURU logic subset \citep{cheng2025guru}, each scored by a rule or test-case verifier, with data and mixing ratios in Appendix~\ref{sec:appendix}. To isolate the single variable of adding the control-diverse regularizer, we compare along two paired axes, without KL ($\beta{=}0$) and with KL ($\beta{=}0.01$), on each of which \CDRFT{} and its matched GRPO share every setting except the regularizer; the untrained base serves as an anchor.

Evaluation uses the nine held-out benchmarks of Table~\ref{tab:main}, reported with unbiased $\passk$ \citep{chen2021evaluating} at a fixed temperature applied identically to all methods. The full setup is deferred to the appendix.

\subsection{Mechanistic Diagnosis of the Shared Control Bottleneck}\label{sec:mechanism}

\subsubsection{Activation-Level Metrics Are Insufficient.}
Two of the three activation metrics of \citet{zhang2025reinforcement} reproduce robustly (reinforcement-learning fine-tuning raises activation intensity and lowers distribution kurtosis across all methods and datasets), but the third, information complexity (the ``activation diversity'' metric), is \textbf{direction-unstable}: as Figure~\ref{fig:info} shows, among models of comparable capability it stays high under \CDRFT{} yet nearly collapses under the matched GRPO, in the same direction across all three domains and by several standard errors. Without overturning the intensity and kurtosis effects that do replicate, the component singled out as evidence of \emph{activation diversity} disagrees with itself on equally healthy models, so activation-level metrics cannot on their own answer what reinforcement learning changes. The object of analysis should move from which components are activated to which control the reward flux, the control matrix and its cross-task concentration $\Bshared(\Cmat)$.

\begin{figure}[t]
\centering
\includegraphics[width=\linewidth]{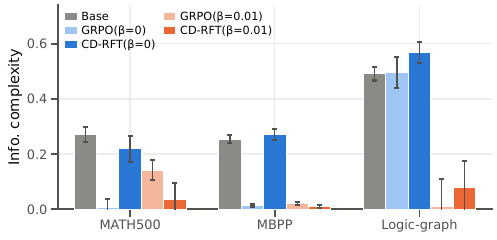}
\caption{Information complexity (edge-distribution entropy) across the three domains, with bootstrap confidence intervals.}
\label{fig:info}
\end{figure}

\subsubsection{The Shared Control Bottleneck Yields a Robust Signature.}
We measure $\Bshared(\Cmat)$ under the probe protocol detailed in the appendix, reported in Table~\ref{tab:main}. The activation axis $\Bshared(\Fmat)$ stays at the reference ceiling of Section~\ref{sec:acg} across all methods, so we analyze the control axis only.

Table~\ref{tab:main} shows a clean ordering, base $>$ the RL baselines $>$ the \CDRFT{} variants: reinforcement learning already lowers control sharing, and the regularizer lowers it further and most, so both variants open a larger Activation--Control Gap than their matched baseline. Because absolute values depend on probe sampling, the method claim is read as the same-seed paired difference of Table~\ref{tab:decomp} and Appendix Figure~\ref{fig:mechanism}: robust on the no-KL axis and directionally consistent, if smaller, with KL, matching the coverage-side gains of the latter.

Across five base models spanning two families and several scales, Appendix Figure~\ref{fig:crossmodel} shows $\Bshared(\Cmat)$ above the $99$th percentile of a simulated random reference at this shape ($42.8\pm3.4$, well above the bound $1/M$; Appendix~\ref{sec:app_mc}), so the bottleneck is not a single-model artifact. The two models we intervene on (Qwen2.5-7B and Llama-3.2-3B; Section~\ref{sec:robustness}) have the deepest bottlenecks, and hence the most decoupling headroom; absolute values are tokenizer- and probe-dependent, so the comparison is read across models by ordering.

\subsection{Multi-Task Capability}\label{sec:capability}

We evaluate capability on nine benchmarks and summarize by domain in Table~\ref{tab:main}. The paired comparison is a controlled test of the diagnosis: if the Shared Control Bottleneck is what matters, decoupling it on an otherwise identical recipe should move capability. We read the net change of \CDRFT{} relative to its matched GRPO on each $\beta$ axis, an ordering the appendix temperature and checkpoint sweeps confirm.

\begin{table*}[t]
\centering

\small
\setlength{\tabcolsep}{7pt}
\begin{tabular}{lcccccccccccc}
\toprule
\addlinespace[1pt]
& \multicolumn{2}{c}{mechanism} & \multicolumn{5}{c}{\passone} & \multicolumn{5}{c}{\passk} \\
\cmidrule(lr){2-3} \cmidrule(lr){4-8} \cmidrule(lr){9-13}
Method & $\Bshared\downarrow$ & $\ACG\uparrow$ & math & code & logic & \textbf{overall} & hard & math & code & logic & \textbf{overall} & hard \\
\addlinespace[1pt]
\midrule
\addlinespace[3pt]
Base & 83.4\,$\pm$\,3.8 & 16.2 & 20.0 & 54.5 & 33.2 & 35.9 & 6.8 & 71.3 & 92.0 & 90.8 & 84.7 & 56.6 \\
\addlinespace[0.25em]
GRPO ($\beta{=}0$) & 81.1\,$\pm$\,4.2 & 18.6 & 23.5 & 58.0 & 39.7 & 40.4 & 8.1 & 72.1 & 92.9 & 94.1 & 86.4 & 57.0 \\
\ourrow \addon{\CDRFT} & \textbf{68.9\,$\pm$\,3.8} & \textbf{30.7} & \textbf{24.0} & \textbf{60.5} & \textbf{40.5} & \textbf{41.7} & \textbf{8.3} & 74.2 & \textbf{93.4} & \textbf{94.8} & 87.5 & 60.6 \\
\addlinespace[0.25em]
GRPO ($\beta{=}0.01$) & 79.5\,$\pm$\,5.0 & 20.1 & 21.5 & 59.1 & 36.5 & 39.0 & 7.0 & 70.5 & 92.7 & 92.1 & 85.1 & 55.3 \\
\ourrow \addon{\CDRFT} & 73.4\,$\pm$\,5.0 & 26.3 & 22.3 & 59.9 & 36.4 & 39.5 & 7.4 & \textbf{76.0} & 93.1 & 93.5 & \textbf{87.6} & \textbf{63.4} \\
\bottomrule
\end{tabular}
\caption{Mechanism and capability on Qwen2.5-7B. $\Bshared$ abbreviates $\Bshared(\Cmat)$; it and $\ACG$ are measured on the fixed probe over seven probe seeds (mean$\pm$SE), while the capability columns are benchmark accuracies. Math = MATH500/AMC23/AIME24/AIME25/Minerva; code = HumanEval+/MBPP; logic = ordering-puzzle/Logic-graph; hard = AIME24/AIME25/Minerva. Each domain averages its sets; overall averages the three domains; all entries are percentages; full ladder in Appendix Table~\ref{tab:ladder}.}
\label{tab:main}

\end{table*}

Two trends stand out. First, adding the control-diverse regularizer improves both accuracy axes over its matched GRPO baseline on almost every benchmark. As Appendix Figure~\ref{fig:gains} shows, the no-KL variant improves greedy $\passone$ on 8 of 9 benchmarks, and the with-KL variant improves $\passk$ coverage on 8 of 9, so the two variants are complementary (one leads on $\passone$, the other on coverage), and both dominate the base across all three domains. The gains are those of a well-tuned strong-baseline comparison, not of over-tuning: the $\passone$ margin over the matched RL baseline stays within a few points, while the coverage advantage of the with-KL variant grows steadily with $k$ and is largest on the hardest problems, as Appendix Figure~\ref{fig:passk} shows.

Second, the coverage shortfall at large $k$ is a known failure of RLVR: reward maximization trades large-$k$ coverage for $\passone$, and the boundary can fall to or below the base \citep{yue2025rl}. It tracks the reward objective, both matched GRPO baselines converging to the base on the hardest sets (Appendix Table~\ref{tab:ladder}) at $\beta{=}0$ and $\beta{=}0.01$ alike, so it lies in the reward flux of Eq.~\eqref{eq:flux} that the analysis targets. \CDRFT{} addresses it in control space, where the diagnosis locates the bottleneck: on both axes it restores coverage to the base level or above by holding control decoupled while the reward flux is optimized, the behavioural counterpart of the mechanism of Section~\ref{sec:mechanism} and, by construction, visible only at large $k$.

\subsection{Ablation: Net Effect of the Regularizer}\label{sec:ablation}

To isolate the effect of the control regularizer from backbone differences, we toggle only the on/off of the regularizer under a fully balanced setting. This on/off contrast is precisely the matched GRPO\,$\to$\,\CDRFT{} pair of Table~\ref{tab:main}, whose control and capability axes we now read jointly.

Both balanced pairs trace the same pattern: adding the regularizer lowers $\Bshared(\Cmat)$ (mechanistic decoupling), and overall $\passone$ and $\passk$ rise in tandem. The chain is tightest where the compression is strongest (the no-KL axis, whose larger drop in $\Bshared(\Cmat)$ coincides with the largest $\passone$ gain), while the with-KL axis compresses less but gains more on coverage. The regularizer thus acts on the control axis, and the capability gain moves with the decoupling rather than with the backbone or hyperparameters.

\subsection{Robustness and Generalization}\label{sec:robustness}

We test that the main result holds along three axes over which one might worry it was tuned: the evaluation temperature, the method's own target hyperparameter, and the model family.

\paragraph{Sampling temperature.}
We repeat the evaluation over a per-domain temperature sweep bracketing the main-table working temperature of each domain. On the math hard set and on code, the ordering ``\CDRFT{} $\geq$ matched RL'' holds at every temperature on $\passone$ and at nearly all on $\passk$; lowering the temperature raises $\passone$ and raising it raises $\passk$, but the method ordering never flips, so the main-table conclusions are not an artifact of a chosen temperature. The full win-rate table, per-temperature curves, and the per-domain breakdown are in Appendix~\ref{sec:app_temp}.

\paragraph{Proxy target.}
Sweeping the target concentration $\tau\in\{55,65,70\}$ (only $\tau$ changed) traces a smooth $\passone$--$\passk$ trade-off rather than a sharp optimum: the two looser settings are adjacent optima that exchange a little $\passone$ for coverage, and only over-compression ($\tau{=}55$) is clearly worse on both axes. The method is thus insensitive to $\tau$ by design rather than tuning, with the full ladder and per-$\tau$ reading in Appendix Table~\ref{tab:target}.

\paragraph{Second model family.}

Both the mechanism and the capability gain also hold on Llama-3.2-3B \citep{dubey2024llama3}: the control-sharing signature reproduces, and \CDRFT{} leads $\passk$ coverage on every benchmark and $\passone$ on four of five, so the effect is not specific to Qwen2.5-7B. Full protocol, ladders, and the mechanism table are in Appendix~\ref{sec:app_llama}.

\begin{table}[t]
\centering

\small
\setlength{\tabcolsep}{4pt}
\begin{tabular}{lccccc}
\toprule
\addlinespace[1pt]
Method & GSM8K & MATH500 & code & L-ord & L-graph \\
\addlinespace[1pt]
\midrule
\addlinespace[3pt]
\multicolumn{6}{l}{\emph{$\passone$}} \\
Base & 10.0 & 5.9 & 4.0 & 2.6 & 1.5 \\
GRPO & 16.4 & 7.4 & 27.7 & \textbf{11.7} & 29.5 \\
\ourrow \addon{\CDRFT} & \textbf{16.8} & \textbf{7.6} & \textbf{28.2} & 11.0 & \textbf{29.6} \\
\addlinespace[0.35em]
\multicolumn{6}{l}{\emph{$\passk$ }} \\
Base & 79.6 & 47.8 & 51.4 & 31.0 & 46.0 \\
GRPO & 83.0 & 48.4 & 69.0 & 37.0 & 70.0 \\
\ourrow \addon{\CDRFT} & \textbf{84.2} & \textbf{51.2} & \textbf{69.9} & \textbf{46.0} & \textbf{71.0} \\
\bottomrule
\end{tabular}
\caption{Transfer to Llama-3.2-3B (base, $\beta{=}0$). Each shaded +\,\CDRFT{} row is the add-on over the GRPO above it; L-ord = Logic-ordering, L-graph = Logic-graph; all columns are single benchmarks except code, the mean of HumanEval+ and MBPP. All entries are percentages.}
\label{tab:llama}

\end{table}

\section{Conclusion }\label{sec:conclusion}

We have reframed the interpretability of reinforcement-learning post-training: the question is not which components are activated, but which control the reward gain, and whether that control stays task-specific or collapses into one shared channel. This shift from an activation view to a control view turns a diagnosis into a lever, writing the Shared Control Bottleneck into the loss, \CDRFT{} decouples control at low overhead and improves multi-task \passone{} and \passk{} across two model families. We see the control view as the broader contribution: once post-training is read as reshaping which components control the reward flux, control-level objectives become a natural handle on how a finite capability is split among competing tasks, beyond the multi-task rule-verifier setting studied here.

\bigskip
\bibliography{references}

\clearpage
\onecolumn             
\appendix
\numberwithin{proposition}{section}
\numberwithin{theorem}{section}
\numberwithin{lemma}{section}
\numberwithin{corollary}{section}
\numberwithin{assumption}{section}
\numberwithin{definition}{section}
\numberwithin{remark}{section}
\numberwithin{figure}{section}
\numberwithin{table}{section}
\CDenableToc            
\setcounter{page}{1}   
\setcounter{tocdepth}{2}
\begin{center}\Large\textbf{APPENDIX}\end{center}
\tableofcontents
\vspace{1em}
This appendix has two parts. Appendices~\ref{sec:appendix}--\ref{sec:app_alg} complete the theory: the function-space conventions, the full set of assumptions, the secondary propositions, and the proofs abbreviated in the main text, with Appendix~\ref{sec:appendix} corresponding to the setting (Section~\ref{sec:prelim}), Appendix~\ref{sec:app_diag} to the problem (Section~\ref{sec:motivation}), Appendix~\ref{sec:app_method} to the method (Section~\ref{sec:method}), and Appendix~\ref{sec:app_alg} giving one training step in full. Appendices~\ref{sec:settings}--\ref{sec:overhead} complete the experiments: Appendix~\ref{sec:settings} fixes the settings shared by every reported number, Appendix~\ref{sec:app_mech} the mechanistic results deferred from Section~\ref{sec:mechanism}, Appendix~\ref{sec:app_cap} the per-benchmark capability results deferred from Section~\ref{sec:capability}, Appendix~\ref{sec:app_temp} the temperature sweep and Appendix~\ref{sec:app_target} the proxy-target sweep of Section~\ref{sec:robustness}, Appendix~\ref{sec:app_ckpt} the cross-checkpoint robustness check, Appendix~\ref{sec:app_llama} the second-model (Llama-3.2-3B) transfer results of Section~\ref{sec:robustness}, and Appendix~\ref{sec:overhead} the measured cost of the regularizer.

\section{Full Specification of the Control Coefficient}\label{sec:appendix}

\paragraph{Notation and function space.}
The vocabulary $\mathcal V$ is finite, with $\mathcal V^\star=\bigcup_{T\ge0}\mathcal V^T$; a prompt is $x\in\mathcal X$ and a response is $y\in\mathcal V^\star$. The policy factorizes autoregressively as $\pi_\theta(y\mid x)=\prod_{t}\pi_\theta(y_t\mid x,y_{<t})$, with parameters $\theta\in\Theta\subseteq\real^P$ (an open set) and reference policy $\pi_0=\pi_{\theta_0}$. The residual-stream dimension is $d$; the state at layer $\ell$ is $h^\ell\in\real^d$; circuit unit $k$ resides at layer $\ell(k)$ with map $f_k(\cdot;\theta):\real^d\to\real^d$. The gradient of the target log-likelihood with respect to the final residual is the \textbf{readout direction} $w_m:=\nabla_{h^L}\sum_t\log\pi_\theta(y^\star_t\mid x,y^\star_{<t})\in\real^d$ (the linear readout direction, at the final layer, of the metric used by attribution patching); the update Jacobian of layer $\ell'$ is $J_{\ell'}:=\partial\big(\sum_{k:\ell(k)=\ell'}f_k\big)/\partial h^{\ell'}$. The control matrix has $M=M_{\mathrm{fam}}$ rows (one per task family, each averaged over its $n$ probe samples) and $K$ columns for the gated components.

\begin{assumption}[smoothness and integrability]\label{app:asmA0}
Write $\pi_\theta(y\mid x;g)$ for the gated policy of Eq.~\eqref{eq:gating}. For almost all $(x,y^\star)$ the map $(\theta,g)\mapsto\log\pi_\theta(y^\star\mid x;g)$ is smooth ($C^\infty$); for each $m$, $\big|\log\pi_\theta(y^\star\mid x;g)\big|$ and its partial derivatives in $\theta$ and in $g$ are dominated by $\Dm$-integrable functions, so that expectation and differentiation commute. Hence $\ell_m(\theta;g)=\mathbb E_{\Dm}[\log\pi_\theta(y^\star\mid x;g)]$ is smooth in both arguments, and so is $\Jm(\theta;g)$, which differs from $\ell_m(\theta;g)$ by the gate-independent constant $\mathbb E_{\Dm}[\log\pi_0(y^\star\mid x)]$: every statement below may therefore be read for either object.
\end{assumption}

\begin{assumption}[nominal identity]\label{app:asmA1}
$\pi_\theta(\cdot\mid\cdot\,;\1)=\pi_\theta$: at the nominal gate value the gated network is the original model. The gating is thus an analysis probe, and all control quantities are differentiated in a neighborhood of $g=\1$.
\end{assumption}

\begin{remark}[scope of the smoothness assumption]\label{app:remA0}
Assumption~\ref{app:asmA0} holds for transformers assembled from smooth components---SiLU/SwiGLU, softmax, RMSNorm---and the gates of Eq.~\eqref{eq:gating} enter multiplicatively, so they preserve smoothness; every model studied here (Qwen2.5, Llama-3.2) is of this kind. Architectures with ReLU-type activations lie outside the stated scope. Every differentiability order used later follows: the second-order Taylor expansion of Proposition~\ref{app:propA1}, the third-order central-difference remainder of Theorem~\ref{app:thmC2}, and the real-analyticity invoked in Theorem~\ref{app:thmB1}. The integrability conditions of Assumption~\ref{app:asmA0} are a separate requirement and are not implied by smoothness.
\end{remark}

\begin{assumption}[normalizer non-degeneracy]\label{app:asmA2}
There exists $\epsilon_0>0$ (set to $10^{-4}$ in the implementation) such that every family admitted for analysis satisfies $|\ell_m(\theta)|\ge\epsilon_0$; otherwise Eq.~\eqref{eq:coeff} is unidentifiable and the family is discarded.
\end{assumption}

\begin{remark}[why normalize by $\ell_m$, not $J_m$]\label{app:remA1}
MCA customarily writes the coefficient as $\partial\log J_m/\partial\log g_k$, presuming a positive flux. Here $J_m$ is a sign-indefinite log-likelihood-ratio margin (undefined under $\log$ when $J_m\le0$) and can approach zero, so we normalize the shared gate-derivative $\partial_{g_k}J_m=\partial_{g_k}\ell_m$ by the target log-likelihood $\ell_m<0$, which stays bounded away from zero; the scaled form of Eq.~\eqref{eq:coeff} reduces to the classical log--log derivative only in the positive-flux regime. We also fix the reach of $\Jm$: it is a teacher-forced likelihood margin on one reference target, not the sampling-time expected verifier reward, and raising it on a single $y^\star$ need not raise coverage. The control coefficient differentiates this margin, and the capability claims of Section~\ref{sec:capability} rest on the benchmark $\passk$ measurements, not on $\Jm$.
\end{remark}

\begin{remark}[the guard rarely binds; row signs are immaterial]\label{app:remA2}
Assumption~\ref{app:asmA2} does not bind in practice: no family is discarded in our offline or training estimations, the smallest observed $|\ell_m|$ exceeding $\epsilon_0$ by more than an order of magnitude. The sign of a control row is likewise immaterial to the bottleneck, since $\Bshared$ depends on $\Cmat$ only through $\Cmat\Cmat^\top$, which is invariant under a sign flip of any row.
\end{remark}

\begin{remark}[attribution patching]\label{app:remA3}
For a metric $\mathcal M$, the single-backward readout of the control coefficient linearizes an activation replacement $a_k\!\leftarrow\!a_k'$ by
\begin{equation}\label{app:eq:ap}
\mathcal M(a_k\!\leftarrow\!a_k')-\mathcal M(a_k)\approx\big\langle\nabla_{a_k}\mathcal M,\ a_k'-a_k\big\rangle,
\end{equation}
here with $\mathcal M=\ell_m$.
\end{remark}

\begin{proposition}[first-order expansion; main text Eq.~\eqref{eq:expansion}]\label{app:propA1}
Under Assumptions~\ref{app:asmA0}--\ref{app:asmA2}, with $u=\log g$,
\[
\frac{\ell_m(\theta;e^u)-\ell_m(\theta)}{\ell_m(\theta)}=\sum_k C_{m,k}\,u_k+O(\lVert u\rVert_2^2).
\]
\end{proposition}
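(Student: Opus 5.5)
The plan is to define the composite map $\phi(u):=\ell_m(\theta;e^{u})$ for $u$ in a neighbourhood of $\mathbf 0\in\real^K$ and Taylor-expand it to first order with a second-order remainder. By Assumption~\ref{app:asmA0}, $g\mapsto\ell_m(\theta;g)$ is $C^\infty$ near $g=\1$ (differentiation commutes with $\mathbb E_{\Dm}$ by dominated convergence), and $u\mapsto e^{u}$ (componentwise) is $C^\infty$. Hence $\phi$ is $C^\infty$, in particular $C^2$, on a closed ball $\bar B_r(\mathbf 0)$.

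The steps, in order, are as follows.

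First, evaluate $\phi(\mathbf 0)=\ell_m(\theta;\1)=\ell_m(\theta)$. Assumption~\ref{app:asmA1} guarantees that the gated model at $g=\1$ is the original model.

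Second, compute the gradient by the chain rule:
\[
\partial_{u_k}\phi(\mathbf 0)=\partial_{g_k}\ell_m(\theta;g)\big|_{g=\1}\cdot e^{0}=\partial_{g_k}\ell_m(\theta;g)\big|_{g=\1}.
\]
This equals $\ell_m(\theta)\,C_{m,k}$ by the definition in Eq.~\eqref{eq:coeff}.

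Third, apply Taylor's theorem with Lagrange (or integral) remainder:
\[
\phi(u)=\phi(\mathbf 0)+\nabla\phi(\mathbf 0)^\top u+R(u),\qquad |R(u)|\le\tfrac12\sup_{\bar B_r}\lVert\nabla^2\phi\rVert_2\,\lVert u\rVert_2^2 .
\]
The supremum is finite because $\nabla^2\phi$ is continuous on the compact ball.

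Fourth, divide by $\ell_m(\theta)$. This is legitimate and keeps the remainder $O(\lVert u\rVert_2^2)$ with constant at most $\sup\lVert\nabla^2\phi\rVert/(2\epsilon_0)$, because $|\ell_m(\theta)|\ge\epsilon_0$ by Assumption~\ref{app:asmA2}. The result is exactly the stated identity.

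The only step needing care is the third: justifying that the Hessian of $\phi$ is bounded uniformly near $\mathbf 0$. Here the bound on $\ell_m$ is an expectation over $\Dm$, so I would invoke the domination clause of Assumption~\ref{app:asmA0} to pass second $g$-derivatives under the expectation. Without that clause, smoothness for almost every sample alone would not give a finite bound. The Hessian in $u$ then equals
\[
\operatorname{diag}(e^{u})\,\nabla_g^2\ell_m\,\operatorname{diag}(e^{u})+\operatorname{diag}\!\big(e^{u}\odot\nabla_g\ell_m\big),
\]
which is continuous in $u$, so the bound follows on a small ball. Everything else is routine.
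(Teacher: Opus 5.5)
Your proof is correct and matches the paper's: set $\phi(u)=\ell_m(\theta;e^{u})$, use the chain rule to get $\partial_{u_k}\phi(\mathbf{0})=\partial_{g_k}\ell_m|_{g=\mathbf{1}}=\ell_m(\theta)\,C_{m,k}$, Taylor-expand, and divide by $\ell_m(\theta)$, which is nonzero by Assumption~\ref{app:asmA2}. Your second-order (Lagrange) remainder with a Hessian bounded on a small ball is what delivers the stated $O(\lVert u\rVert_2^2)$ rate, which is slightly tighter bookkeeping than the paper's appeal to a Peano remainder.
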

\begin{proof}
Let $\widetilde \ell_m(u)=\ell_m(\theta;e^u)$, which is smooth by Assumption~\ref{app:asmA0}. The chain rule gives $\partial_{u_k}\widetilde \ell_m|_0=\partial_{g_k}\ell_m|_{g=\1}\cdot e^{u_k}|_0=\partial_{g_k}\ell_m|_{g=\1}$. A first-order Taylor expansion with Peano remainder, divided by $\ell_m(\theta)\neq 0$ (Assumption~\ref{app:asmA2}), and substituting $C_{m,k}=\tfrac1{\ell_m}\partial_{g_k}\ell_m|_{g=\1}$, yields the claim.
\end{proof}

\begin{corollary}[logarithmic form]\label{app:corA1}
Since $\ell_m<0$, writing $z=\delta\ell_m/\ell_m$ and $\log(1+z)=z+O(z^2)$ gives $\delta\log|\ell_m|=\sum_k C_{m,k}\,\delta\log g_k+O(\lVert\delta\log g\rVert_2^2)$, the sign-robust analogue of the classical MCA log--log derivative.
\end{corollary}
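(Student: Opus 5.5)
The plan is to derive Corollary~\ref{app:corA1} directly from Proposition~\ref{app:propA1} by composing its expansion with the scalar Taylor expansion of the logarithm.

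\textbf{Step 1: rewrite the log change.} Set $\delta\ell_m:=\ell_m(\theta;e^{u})-\ell_m(\theta)$ with $u=\delta\log g$. Because $\ell_m<0$ and $\ell_m(\theta;e^u)$ is continuous in $u$ by Assumption~\ref{app:asmA0}, the perturbed value also stays negative for $u$ in a neighbourhood of $\mathbf 0$. On that neighbourhood
\[
\delta\log|\ell_m|=\log\frac{|\ell_m(\theta;e^u)|}{|\ell_m(\theta)|}=\log\frac{\ell_m(\theta;e^u)}{\ell_m(\theta)}=\log(1+z),
\]
where $z=\delta\ell_m/\ell_m$. The sign of $\ell_m$ cancels in the ratio, and this cancellation is what makes the statement sign-robust.

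\textbf{Step 2: control the size of $z$.} Proposition~\ref{app:propA1} gives $z=\sum_k C_{m,k}u_k+O(\lVert u\rVert_2^2)$. Applying Cauchy--Schwarz to the linear term yields $|z|\le\lVert C_{m,\cdot}\rVert_2\lVert u\rVert_2+O(\lVert u\rVert_2^2)=O(\lVert u\rVert_2)$. In particular $z\to0$, so for small $u$ we have $|z|<1/2$, and $\log(1+z)$ is defined and analytic there.

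\textbf{Step 3: compose the two expansions.} On $|z|\le 1/2$ the bound $|\log(1+z)-z|\le z^2$ holds, which is the claimed $\log(1+z)=z+O(z^2)$. Since $z=O(\lVert u\rVert_2)$ from Step 2, we get $z^2=O(\lVert u\rVert_2^2)$. Therefore
\[
\delta\log|\ell_m|=z+O(\lVert u\rVert_2^2)=\sum_k C_{m,k}u_k+O(\lVert u\rVert_2^2).
\]

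\textbf{Main obstacle.} The argument is routine, and only two points need care. The first is that $\ell_m(\theta;e^u)$ must keep the sign of $\ell_m(\theta)$, so that the absolute values cancel into a ratio. This follows from continuity together with the non-degeneracy bound $|\ell_m|\ge\epsilon_0$ of Assumption~\ref{app:asmA2}. The second is that the $O(z^2)$ remainder must be converted into an $O(\lVert u\rVert_2^2)$ remainder, which uses the linear bound on $z$ from Step 2. The constants in both remainders depend on $\lVert C_{m,\cdot}\rVert_2$ and on the Peano remainder of Proposition~\ref{app:propA1}, but they are uniform as $u\to\mathbf 0$.
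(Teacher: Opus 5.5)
Your proposal is correct and follows the paper's route: the paper's entire argument is the one line in the statement, composing Proposition~\ref{app:propA1} with $\log(1+z)=z+O(z^2)$. You also make explicit two steps the paper leaves implicit, namely that $\ell_m(\theta;e^{u})$ keeps its sign near $u=\mathbf 0$ (so $\delta\log\lvert\ell_m\rvert=\log(1+z)$, which needs only $\ell_m\neq0$, not the quantitative bound $\epsilon_0$), and that $z=O(\lVert u\rVert_2)$ converts the $O(z^2)$ remainder into $O(\lVert u\rVert_2^2)$.
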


\begin{lemma}[activation axis and control axis are not interchangeable]\label{app:lemA1}
There exist networks and task families for which no scalar $c_m>0$ satisfies $C_{m,\cdot}=c_m F_{m,\cdot}$.
\end{lemma}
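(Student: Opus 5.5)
The plan is to build an explicit small counterexample: a one-layer residual network with two gated components, a linear readout, and a single task family. Since the statement is existential, one concrete instance suffices. Take residual dimension $d=2$, input state $h^0=0$ (or any fixed vector, which only shifts the final residual), and $K=2$ components at the single layer, with constant writes $f_1(h;\theta)=a\,e_1$ and $f_2(h;\theta)=b\,e_2$, $a,b>0$. Then $h^1=g_1 a e_1+g_2 b e_2$. Let the target log-likelihood be a smooth function of the final residual whose gradient at $g=\1$ is along $e_1$. For concreteness use a two-token vocabulary with logits $(h^1_1,0)$ and target token $1$, so $\log\pi=-\log(1+e^{-h^1_1})$. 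This is smooth, so Assumption~\ref{app:asmA0} holds trivially for a point-mass family $\Dm=\delta_{(x,y^\star)}$. Assumption~\ref{app:asmA1} holds by construction, and $\ell_m=-\log(1+e^{-a})<0$ is bounded away from zero, so Assumption~\ref{app:asmA2} holds as well.

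Next I compute both rows directly. For activation, $F_{m,1}=\lVert a e_1\rVert_2=a$ and $F_{m,2}=b$, both strictly positive. For control, the chain rule through $h^1$ gives $\partial_{g_k}\ell_m|_{g=\1}=\langle w_m,\mathrm{out}_k\rangle$, where $w_m=\sigma(-a)\,e_1$ is the readout direction of Appendix~\ref{sec:appendix}. Hence $C_{m,1}=\sigma(-a)\,a/\ell_m\neq0$ and $C_{m,2}=\langle \sigma(-a)e_1,\,b e_2\rangle/\ell_m=0$.

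If $C_{m,\cdot}=c_mF_{m,\cdot}$ held for some $c_m>0$, then $0=C_{m,2}=c_m b>0$, a contradiction. This also rules out any nonzero scalar, not just positive ones. It is exactly the mechanism described after Eq.~\eqref{eq:expansion}: component $2$ has large usage but writes orthogonally to $w_m$.

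The only delicate point is compatibility with the rest of the paper. Theorem~\ref{thm:nosingle} says zero control is non-generic, so a reader might object that the example is degenerate. If a generic example is preferred, I would perturb the writes to $f_2=b(\cos\phi\,e_2+\sin\phi\,e_1)$ with small $\phi\neq0$. Then $C_{m,2}/C_{m,1}=b\sin\phi/a$, while $F_{m,2}/F_{m,1}=b/a$. These ratios differ for every $\phi\notin\{\pm\pi/2\}$, so proportionality still fails on an open set of weights. I expect this robustness remark, rather than the computation itself, to be the main thing worth stating carefully.
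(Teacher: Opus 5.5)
Your construction is correct and follows the paper's own argument, made concrete: the appendix proof likewise takes a component whose write is orthogonal to the readout direction $w_m$ (and not rotated into it downstream), so that $\partial_{g_k}J_m|_{g=\1}=0$ while $F_{m,k}$ stays positive, and pairs it with an aligned component that carries nonzero control. Your one-layer instance removes the downstream-Jacobian caveat and gives an exact zero rather than the paper's ``$C_{m,k}\approx 0$''; the $\phi$-perturbation adds something the paper omits, since proportionality then fails on an open set of weights where every gate derivative is nonzero, consistent with Theorem~\ref{thm:nosingle} (only $\sin\phi=1$ actually needs excluding there).
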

\begin{proof}
$J_m$ is determined by the projection of $h^L$ onto $w_m$. The first-order effect of gate $g_k$ on $J_m$ propagates through Eq.~\eqref{eq:gating} to $h^L$ and is read along $w_m$. Choose a circuit $k$ whose write $\mathrm{out}_k\perp w_m$ and whose downstream Jacobian does not rotate it into $w_m$; then $\partial_{g_k}J_m|_{g=\1}=0$, hence $C_{m,k}\approx 0$, while $F_{m,k}=\mathbb E\lVert\mathrm{out}_k\rVert_2$ can be made independently large. Dually, a circuit with small write aligned with $w_m$ has low usage and high control. The coexistence makes the large-component supports of $C_{m,\cdot}$ and $F_{m,\cdot}$ differ, excluding any positive proportionality.
\end{proof}

\section{Full Specification of the Shared Control Bottleneck}\label{sec:app_diag}

\begin{proposition}[failure of the summation theorem]\label{app:propB1}
In general, $\sum_k C_{m,k}\neq 1$.
\end{proposition}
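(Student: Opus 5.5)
The plan is to read $\sum_k C_{m,k}$ as a directional derivative and then exhibit an explicit network in which it is not $1$. Since $C_{m,k}=\ell_m^{-1}\partial_{g_k}\ell_m|_{g=\1}$, we have
\[
\sum_k C_{m,k}=\frac{1}{\ell_m(\theta)}\,\frac{d}{ds}\ell_m(\theta;s\1)\Big|_{s=1},
\]
the scaled derivative of $\ell_m$ along the uniform scaling $g=s\1$. Classical MCA gets a summation theorem from homogeneity of degree one: scaling every local rate by $s$ scales the flux by $s$. By Euler's identity the sum equals $1$ exactly when $s\mapsto\ell_m(\theta;s\1)$ is locally linear through the origin, so that $\ell_m(\theta;s\1)=s\,\ell_m(\theta)$ to first order at $s=1$. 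The task is therefore to show that the residual architecture of Eq.~\eqref{eq:gating} breaks this homogeneity.

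The first step is to note why homogeneity fails. At $s=0$ every gated write vanishes and $h^L=h^0$, the embedding. Hence $\ell_m(\theta;\mathbf 0)$ is the log-likelihood of a model that applies the unembedding (and final norm) directly to the embeddings. This value is in general nonzero and negative, whereas degree-one homogeneity would force $\ell_m(\theta;\mathbf 0)=0$. Smoothness of $\ell_m$ in $g$ (Assumption~\ref{app:asmA0}) is not enough, because the residual pass-through term $h^\ell$ is not gated.

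The second step, which finishes the proof, is to build a minimal explicit counterexample. Take $L=1$ and a single component $K=1$ with a constant write $f_1(h)=v$. The logits are $Uh^1=U(h^0+g_1v)$, there is one target token, and no normalization layer is used. Then $\ell(g_1)=\log\mathrm{softmax}(U h^0+g_1Uv)_{y^\star}$ and
\[
C=\frac{\ell'(1)}{\ell(1)}=\frac{(Uv)_{y^\star}-\mathbb E_{p}[Uv]}{\log p_{y^\star}},
\]
where $p$ is the softmax at $g=\1$. Choosing $Uv$ to point toward $y^\star$ makes the numerator positive while the denominator is negative, so $C<0\neq1$. Alternatively, $v$ can be chosen to leave $p$ unchanged, giving $C=0$. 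Since the identity already fails in this instance, it fails ``in general.''

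The main obstacle is keeping the example inside the admissible class: it must satisfy Assumptions~\ref{app:asmA0}--\ref{app:asmA2}, including $|\ell_m|\ge\epsilon_0$, and stay consistent with Assumption~\ref{app:asmA1}. The construction is designed to meet these. It is smooth, $\log p_{y^\star}<0$ is bounded away from zero, and the nominal identity holds trivially. If a genericity statement is also wanted, the argument is that $\sum_k C_{m,k}-1$ is a real-analytic, nonconstant function of the weights (the example shows it is nonconstant), so it vanishes only on a measure-zero set.
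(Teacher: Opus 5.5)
Your proof is correct, and it departs from the paper's at the step that carries the weight. Both arguments start from Euler's identity. The paper rewrites $\sum_k g_k\partial_{g_k}\ell_m|_{g=\1}=\ell_m\sum_k C_{m,k}$ and notes that the ungated pass-through $h^\ell$ and the non-homogeneous layer normalization make $\alpha\mapsto\ell_m(\theta;\alpha g)$ fail to be first-order homogeneous. It then moves directly from that to ``the left-hand side $\neq\ell_m$.''

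You correctly see that this inference needs more. The converse of Euler's theorem requires the identity along whole rays. By contrast, $\sum_k C_{m,k}=1$ is a single pointwise condition: the tangent to $s\mapsto\ell_m(\theta;s\1)$ at $s=1$ must pass through the origin. A non-homogeneous function can satisfy this by coincidence, so your observation $\ell_m(\theta;\mathbf 0)\neq 0$ alone does not finish the proof, and you rightly do not rely on it.

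Your one-layer softmax example closes this gap. It gives $C=\big((Uv)_{y^\star}-\mathbb E_p[Uv]\big)/\log p_{y^\star}$, which is negative or zero, and it stays inside Assumptions~\ref{app:asmA0}--\ref{app:asmA2}. This rigorously settles the existential reading of ``in general.'' It also exposes a cleaner mechanism than non-homogeneity, namely a sign obstruction. Because $\ell_m<0$, any configuration in which uniformly raising all gates raises the target likelihood forces $\sum_k C_{m,k}<0$.

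What the paper's argument offers in return is a structural diagnosis of the cause, the identity term and normalization. This suggests failure is the typical case in real transformers, not only in a constructed instance.

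One caveat on your optional genericity upgrade. For the analyticity argument to cover a given architecture, such as a transformer with RMSNorm, you must show that $\sum_k C_{m,k}-1$ is not identically zero on that architecture's connected parameter space. Your $L=1$, norm-free example therefore has to be realized inside that architecture, not in a separate toy model. The sign obstruction above is a convenient way to do this: any admissible weight setting with $\tfrac{d}{ds}\ell_m(\theta;s\1)|_{s=1}>0$ suffices.
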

\begin{proof}
By Eq.~\eqref{eq:coeff}, $\sum_k g_k\partial_{g_k}\ell_m|_{g=\1}=\ell_m\sum_k C_{m,k}$. If $g\mapsto \ell_m(\theta;g)$ were first-order homogeneous at $g=\1$, Euler's theorem would give the left-hand side $=\ell_m$, whence $\sum_k C_{m,k}=1$. But the residual update in Eq.~\eqref{eq:gating} contains the identity term $h^\ell$: under uniform scaling $g\mapsto\alpha g$, $h^\ell$ does not scale with $\alpha$, and layer normalization is not homogeneous, so $\ell_m(\theta;\alpha g)$ is not first-order homogeneous in $\alpha$, the left-hand side $\neq \ell_m$, and therefore $\sum_k C_{m,k}\neq 1$.
\end{proof}

\begin{remark}[abandoning the coverage constraint]\label{app:corB1}
Since $\sum_k C_{m,k}$ is not structurally centered at $1$, $(\sum_k C_{m,k}-1)^2$ has no basis as a coverage constraint and is used only as a diagnostic.
\end{remark}

\begin{theorem}[no single-gate monopoly; main text Theorem~\ref{thm:nosingle}]\label{app:thmB1}
Under generic weights, $\partial_{g_k}J_m|_{g=\1}\neq0$ for every $k$.
\end{theorem}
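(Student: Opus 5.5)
The approach is to write the gate derivative as an explicit propagation identity and then argue genericity through real-analyticity.

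\textbf{Step 1: propagation identity.} Differentiating Eq.~\eqref{eq:gating} at $g=\1$ and chaining the residual updates gives, per sample and per token position,
\[
\partial_{g_k}\log\pi_\theta(y^\star\mid x)\big|_{g=\1}
= w_m^\top\Big(\prod_{\ell'=\ell(k)+1}^{L-1}\big(I+J_{\ell'}\big)\Big)\,\mathrm{out}_k .
\]
Here the product is ordered from the top layer down, and $w_m$ and $J_{\ell'}$ are as in the notation paragraph of Appendix~\ref{sec:appendix}. Summing over tokens and taking $\mathbb E_{\Dm}$, which Assumption~\ref{app:asmA0} lets us exchange with differentiation, gives $\partial_{g_k}\Jm=\partial_{g_k}\ell_m$. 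The identity term of the residual is what makes $\mathrm{out}_k$ reach $w_m$ even when the downstream $J_{\ell'}$ are small. This is the superposition mechanism the statement refers to.

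\textbf{Step 2: genericity.} Fix $k$ and regard $\phi_k(\theta):=\partial_{g_k}\ell_m(\theta;g)|_{g=\1}$ as a function of the weights. By Remark~\ref{app:remA0} the network is built from real-analytic pieces (SiLU, softmax, RMSNorm, and exp/log), so $\phi_k$ is real-analytic on the open connected set $\Theta$. I will assume $\Theta$ is connected, or else argue componentwise. The standard fact then applies: the zero set of a real-analytic function that is not identically zero has Lebesgue measure zero. Taking the union over the finitely many $k$ shows that the set of $\theta$ at which some $\phi_k$ vanishes is null. This is exactly ``for generic weights, the control support is all of $\{1,\dots,K\}$.''

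\textbf{Step 3: $\phi_k\not\equiv 0$, the main obstacle.} I need one witness $\theta^\dagger\in\Theta$ per $k$ with $\phi_k(\theta^\dagger)\neq0$. My first attempt is a perturbative construction.
\begin{itemize}
\item Set all downstream components' output projections to zero. Then $J_{\ell'}=0$ for $\ell'>\ell(k)$, and the propagator reduces to the identity.
\item This gives $\phi_k=\mathbb E\sum_t w_{m,t}^\top\mathrm{out}_{k,t}$, where $w_{m,t}$ is the unembedding row difference $\big(e_{y^\star_t}-\pi_t\big)^\top U$ pushed back through the final norm.
\item Choose the output projection of component $k$ so that $\mathrm{out}_{k,t}$ has a positive inner product with $w_{m,t}$. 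For example, let it write in the direction of the unembedding vector of the target token. This is possible whenever $\pi_t$ is not already a point mass on $y^\star_t$, which holds because softmax outputs are strictly interior.
\end{itemize}
The delicate parts are the following.
\begin{itemize}
\item The sign must be consistent across tokens and samples, so the expectation cannot cancel. I would handle this by scaling a single free matrix, using the fact that $\phi_k$ is linear in the output projection of $k$ and arranging every summand to be nonnegative.
\item The zeroed-out weights must lie in $\Theta$, or, if $\Theta$ excludes them, be approached by a nearby point. Nonvanishing is an open condition, so a nearby point suffices.
\item The final RMSNorm Jacobian must not annihilate the chosen direction. Its kernel is only the radial direction, which can be avoided by an independent choice.
\end{itemize}
If the zeroing route fails because of architectural constraints, the fallback is to choose $\theta^\dagger$ as a small perturbation of the identity network and expand $\phi_k$ to leading order, which gives the same linear functional.
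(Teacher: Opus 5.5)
Your Steps 1 and 2 are the paper's proof. You use the same propagation identity $w_m^\top\big(\prod_{\ell'}(I+J_{\ell'})\big)f_k$, then ``the zero set of a not-identically-zero real-analytic function is null,'' combined over the finitely many $k$. You correctly take the \emph{union} of the bad sets; the paper's ``finite intersection'' would only show that the gate derivatives do not all vanish at once. The paper simply asserts that each $\phi_k$ is not identically zero. Your Step 3 therefore goes beyond the paper, but as written it does not establish that fact.

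Two steps there fail.

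First, $\phi_k$ is not linear in the output projection $W_O^{(k)}$ of component $k$. At $g=\mathbf 1$ its write $W_O^{(k)}z_{k,t}$ (with $z_{k,t}$ the pre-projection activation) is part of $h^L$. So $\pi_t$, the final-norm Jacobian, and hence $w_{m,t}$ all depend on $W_O^{(k)}$. Linearity holds only to leading order near $W_O^{(k)}=0$, so your fallback is required, not optional.

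Second, a single fixed matrix cannot in general make every summand $\langle W_O^{(k)},w_{m,t}z_{k,t}^\top\rangle_F$ nonnegative, because these rank-one terms need not lie in a common half-space. Also, writing along the target token's unembedding presupposes that $z_{k,t}$ linearly encodes $y^\star_t$, and nothing in your construction arranges that.

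Neither property is needed. When $f_k$ has no bias after its output projection, the gate merely rescales $W_O^{(k)}$, so $\phi_k(\theta)=\langle\nabla_{W_O^{(k)}}\ell_m(\theta),W_O^{(k)}\rangle_F$. Along $W_O^{(k)}=\epsilon A$ this gives $\phi_k=\epsilon\langle M,A\rangle_F+O(\epsilon^2)$. Here $M:=\nabla_{W_O^{(k)}}\ell_m\big|_{W_O^{(k)}=0}=\mathbb E_{\Dm}\sum_t\tilde w_{m,t}z_{k,t}^\top$, and $\tilde w_{m,t}$ is the gradient of the log-likelihood with respect to $k$'s write at position $t$, taken at $W_O^{(k)}=0$. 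Choosing $A=M$ gives $\phi_k\neq0$ for small $\epsilon$ whenever $M\neq0$. (Equivalently, $\phi_k\equiv0$ would make $\ell_m$ constant along every ray $s\mapsto sW_O^{(k)}$, hence independent of $W_O^{(k)}$.)

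What remains is to exhibit one setting of the other weights with $M\neq0$. That is an aggregate condition, not a per-token sign condition. Zeroing every other component reduces it to an explicit shallow computation; for an MLP sublayer it becomes a bigram-type sum over the current token.

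Separately, real-analyticity of $\phi_k$ is immediate for a finite probe distribution. For a general $\Dm$, however, Assumption~\ref{app:asmA0} supplies only smoothness.
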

\begin{proof}
Differentiating for $k\notin S$, gate $g_k$ writes $f_k(h^{\ell(k)};\theta)$ at its resident layer and propagates through the subsequent layer Jacobians to $h^L$:
\[
\frac{\partial J_m}{\partial g_k}\bigg|_{g=\1}=w_m^\top\Big(\textstyle\prod_{\ell'=\ell(k)+1}^{L-1}(I+J_{\ell'})\Big)f_k(h^{\ell(k)};\theta),
\]
with $J_{\ell'}$ the layer update Jacobian defined above. Vanishing at a given $k$ requires the propagated write to lie in $w_m^\perp$; for each $k$, with real-analytic activations this is the zero set of a not-identically-zero real-analytic function of $\theta$, hence a measure-zero set in parameter space, and a finite intersection remains measure zero. Hence no gate derivative vanishes under generic weights, and in particular the control support is not a single gate.
\end{proof}

\begin{proposition}[bounds, saturation, invariance of $\Bshared$]\label{app:propB2}
Let $G_X$ have eigenvalues $\lambda_1\ge\dots\ge\lambda_M\ge0$. Then (i) $1/M\le\Bshared\le1$; (ii) $=1\iff\operatorname{rank}(X)=1$ (collinear rows); (iii) $=1/M\iff\lambda_1=\dots=\lambda_M$ (orthogonal, equal-norm rows); (iv) invariant under $X\mapsto cX$ ($c\neq0$) and $X\mapsto XQ$ ($QQ^\top=I$).
\end{proposition}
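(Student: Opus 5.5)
The whole argument runs through the spectrum of the Gram matrix $G_X=XX^\top$. I will use three facts: $G_X$ is symmetric positive semidefinite, so its eigenvalues satisfy $\lambda_i\ge0$; $\operatorname{tr}(G_X)=\sum_i\lambda_i=\lVert X\rVert_F^2>0$ for $X\neq0$; and $\operatorname{rank}(G_X)=\operatorname{rank}(X)$. With these, $\Bshared(X)=\lambda_1/\sum_i\lambda_i$ is a statement about a nonnegative vector $(\lambda_1,\dots,\lambda_M)$ with positive sum. Each item then reduces to an elementary inequality or to an invariance of $G_X$.

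For (i), I will use $\lambda_1\le\sum_i\lambda_i$, which holds because every $\lambda_i\ge0$. I will also use $\sum_i\lambda_i\le M\lambda_1$, since $\lambda_1$ is the maximum. Together these give $1/M\le\Bshared\le1$.

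For (ii), equality in $\lambda_1\le\sum_i\lambda_i$ holds iff $\lambda_2=\dots=\lambda_M=0$. Since $\lambda_1>0$, this is equivalent to $\operatorname{rank}(G_X)=1$, hence to $\operatorname{rank}(X)=1$. Rank one means every row is a scalar multiple of a single vector, i.e.\ the rows are collinear.

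For (iii), equality in $\sum_i\lambda_i\le M\lambda_1$ holds iff all $\lambda_i=\lambda_1$. Then $G_X=\lambda_1 I_M$, because a symmetric matrix whose eigenvalues are all equal is that scalar times the identity. Reading off entries, $\langle x_i,x_j\rangle=\lambda_1\delta_{ij}$: the rows are pairwise orthogonal with common squared norm $\lambda_1$. Conversely, orthogonal rows of equal norm give $G_X=cI$, so $\Bshared=1/M$.

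For (iv), scaling gives $G_{cX}=c^2G_X$. Both $\lambda_{\max}$ and the trace pick up the same factor $c^2>0$, so the ratio is unchanged. For $X\mapsto XQ$ with $QQ^\top=I_K$, we get $G_{XQ}=XQQ^\top X^\top=G_X$ exactly, so the Gram matrix itself is invariant. Note that only $QQ^\top=I$ is needed, not $Q^\top Q=I$, so $Q$ may even be rectangular $K\times K'$ with orthonormal rows.

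The only step needing care is the equality case in (iii). Equal eigenvalues must be translated into an entrywise statement about $G_X$, which is done via the spectral theorem: $G_X=U\Lambda U^\top=\lambda_1UU^\top=\lambda_1 I$. I also need to record that $X\neq0$ ensures $\lambda_1>0$, so that equal eigenvalues cannot all be zero and the statement is non-vacuous. Everything else is immediate, so I expect no real obstacle.
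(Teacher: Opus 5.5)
Your proof is correct and follows the same route as the paper: both write $\Bshared=\lambda_1/\sum_i\lambda_i$, use $\lambda_1\le\sum_i\lambda_i\le M\lambda_1$ with their equality cases for (i)--(iii), and use $G_{cX}=c^2G_X$ and $G_{XQ}=G_X$ for (iv). Your step $G_X=\lambda_1 I$ via the spectral theorem in (iii), and the identity $\operatorname{rank}(G_X)=\operatorname{rank}(X)$ in (ii), make explicit what the paper's proof leaves implicit.
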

\begin{proof}
(i)--(iii): $\Bshared=\lambda_1/\sum_i\lambda_i$; $\sum_i\lambda_i\le M\lambda_1$ gives the lower bound (equality $\iff$ all eigenvalues equal), and $\lambda_1\le\sum_i\lambda_i$ the upper bound (equality $\iff\lambda_{\ge2}=0\iff$ rank one $\iff$ collinear rows). (iv): $G_{cX}=c^2G_X$ scales all eigenvalues by $c^2$, leaving the ratio unchanged, while $G_{XQ}=XQQ^\top X^\top=G_X$.
\end{proof}

\begin{proposition}[gap range and characterization; main text Proposition~\ref{prop:pathology}]\label{app:propB3}
As $\Bshared(F)\to1$: (i) $\ACG\in[0,\Bshared(F)-1/M]\to[0,1-1/M]$; (ii) $\ACG\to0\iff\Bshared(C)\to\Bshared(F)$; (iii) $\ACG$ is large $\iff\Bshared(C)$ lies far below the ceiling.
\end{proposition}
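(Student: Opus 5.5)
The plan is to reduce everything to the bounds of Proposition~\ref{app:propB2}, applied once to $F$ and once to $C$, and then pass to the limit $\Bshared(F)\to1$. Note that $\ACG=\Bshared(F)-\Bshared(C)$ is a difference of two quantities, each confined to $[1/M,1]$ by part (i) of Proposition~\ref{app:propB2}. So the only genuinely non-trivial content is the lower bound $\ACG\ge 0$.

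For (i), the upper bound follows immediately. Since $\Bshared(C)\ge1/M$, we get $\ACG\le\Bshared(F)-1/M$, and as $\Bshared(F)\to1$ this tends to $1-1/M$.

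The lower bound $\ACG\ge0$ is the main obstacle. Mathematically it is not automatic, because nothing in Proposition~\ref{app:propB2} forces $\Bshared(C)\le\Bshared(F)$ for arbitrary matrices. I would handle it through the ceiling itself. Because $\Bshared(C)\le1$ always, $\ACG\ge\Bshared(F)-1$, and this tends to $0$ under the ceiling. So $\ACG\ge -o(1)$, and in the limit the range is $[0,1-1/M]$. I would state this explicitly as the sense in which (i) holds: the interval is the limit of $[\Bshared(F)-1,\;\Bshared(F)-1/M]$. If an exact nonnegativity at finite $\Bshared(F)$ is wanted, I would add it as the standing regime $\Bshared(C)\le\Bshared(F)$. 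That regime is reported empirically, and it is automatic at the exact ceiling $\Bshared(F)=1$ by part (ii) of Proposition~\ref{app:propB2}.

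For (ii), the equivalence is essentially definitional: $\ACG\to0$ iff $\Bshared(F)-\Bshared(C)\to0$. Combined with $\Bshared(F)\to1$, this is also equivalent to $\Bshared(C)\to1$. By part (ii) of Proposition~\ref{app:propB2}, this means $C$ approaches rank one, i.e.\ collinear control rows. This gives the interpretation that control is as concentrated as activation.

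For (iii), I would make ``large'' and ``far below'' precise with a fixed margin $\delta>0$. Under the ceiling we have $\ACG=1-\Bshared(C)+o(1)$, so $\ACG\ge\delta$ eventually iff $\Bshared(C)\le1-\delta+o(1)$, which gives both directions. The interpretation again uses Proposition~\ref{app:propB2}. The rows of $F$ are nearly collinear, so activations are shared, while $\Bshared(C)$ is bounded away from $1$. That forces $\lambda_2(G_C)/\operatorname{tr}(G_C)$ to be bounded below, so the family control vectors span at least two substantially independent directions, which is task-specific control. The extreme case $\Bshared(C)=1/M$ is attained exactly at orthogonal equal-norm rows, by part (iii) of that proposition.
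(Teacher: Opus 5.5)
Your proposal is correct and follows essentially the paper's route: bound $\Bshared(C)\in[1/M,1]$ via Proposition~\ref{app:propB2}(i), read off the range of $\ACG=\Bshared(F)-\Bshared(C)$, and obtain (ii)--(iii) directly from that definition, with the interpretation supplied by Proposition~\ref{app:propB2}(ii)--(iii). Your point that the lower bound is only $\ACG\ge\Bshared(F)-1\to0$, so nonnegativity holds in the limit rather than at finite $\Bshared(F)$, is more careful than the paper's one-line proof; one small slip is that nonnegativity at the exact ceiling $\Bshared(F)=1$ follows from $\Bshared(C)\le1$, i.e.\ part (i), not part (ii).
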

\begin{proof}
The range follows from Proposition~\ref{app:propB2}(i), $\Bshared(C)\in[1/M,1]$, with Eq.~\eqref{eq:acg}; parts (ii)--(iii) are read from Eq.~\eqref{eq:acg} and given mechanistic meaning via Proposition~\ref{app:propB2}(ii)--(iii).
\end{proof}

\begin{remark}[correction to the narrative]\label{app:remB1}
Empirically, the $\Bshared(C)$ of healthy reinforcement learning is already below that of the base model; reinforcement learning itself reduces control sharing. Our claim is therefore controlled-variable: on top of a matched recipe, our method depresses $\Bshared(C)$ the most.
\end{remark}

\begin{proposition}[perturbation variance]\label{app:propB4}
When the gate logarithms are perturbed by $\xi\sim\mathcal N(0,\Sigma)$, the variance of the relative change is
\[
\operatorname{Var}\!\left[\frac{\delta \ell_m}{\ell_m}\right]\approx C_{m,\cdot}^\top\Sigma\,C_{m,\cdot}\ \xrightarrow{\ \Sigma=\sigma^2I\ }\ \sigma^2\lVert C_{m,\cdot}\rVert_2^2.
\]
\end{proposition}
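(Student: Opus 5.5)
The plan is to derive the statement directly from the first-order expansion in Proposition~\ref{app:propA1}, together with the standard delta-method identity for the variance of a linear form in a Gaussian vector.

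\textbf{Step 1: linearize the perturbed quantity.} Set the gate log-perturbation to $u=\xi$ in Proposition~\ref{app:propA1}. This gives
\[
\frac{\delta\ell_m}{\ell_m}=\frac{\ell_m(\theta;e^{\xi})-\ell_m(\theta)}{\ell_m(\theta)}=C_{m,\cdot}^\top\xi+R(\xi),\qquad |R(\xi)|\le L\lVert\xi\rVert_2^2
\]
on a neighborhood of the origin. The bound on $R$ comes from the second-order Taylor remainder, which is available because Assumption~\ref{app:asmA0} makes $\ell_m(\theta;e^u)$ smooth in $u$. The normalizer $\ell_m$ is bounded away from zero by Assumption~\ref{app:asmA2}. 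Rows of $\Cmat$ are treated as column vectors here.

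\textbf{Step 2: compute the variance of the linear term.} Because $\xi\sim\mathcal N(0,\Sigma)$ and $C_{m,\cdot}$ is deterministic (it is evaluated at $g=\1$ and does not depend on $\xi$), the linear term satisfies $C_{m,\cdot}^\top\xi\sim\mathcal N(0,\,C_{m,\cdot}^\top\Sigma C_{m,\cdot})$. Its variance is therefore exactly $C_{m,\cdot}^\top\Sigma\,C_{m,\cdot}$. In the isotropic case $\Sigma=\sigma^2I$ this reduces to $\sigma^2\lVert C_{m,\cdot}\rVert_2^2$, which is the displayed limit.

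\textbf{Step 3: control the remainder, which is the main obstacle.} I read ``$\approx$'' as equality up to $o(\lVert\Sigma\rVert)$ as $\Sigma\to0$. Expanding the variance gives
\[
\operatorname{Var}[C^\top\xi+R]=\operatorname{Var}[C^\top\xi]+2\operatorname{Cov}(C^\top\xi,R)+\operatorname{Var}[R].
\]
\begin{itemize}
\item By Cauchy--Schwarz, $\operatorname{Cov}(C^\top\xi,R)=O(\lVert\Sigma\rVert^{1/2}\cdot\lVert\Sigma\rVert)$.
\item $\operatorname{Var}[R]\le\mathbb E[R^2]=O(\mathbb E\lVert\xi\rVert^4)=O(\lVert\Sigma\rVert^2)$.
\end{itemize}
Together these give $\operatorname{Var}=C^\top\Sigma C+O(\lVert\Sigma\rVert^{3/2})$.

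The difficulty is that the Gaussian $\xi$ has unbounded support, while the quadratic bound on $R$ is only local. I will handle this in one of two ways:
\begin{itemize}
\item \emph{Truncation.} Split on the event $\lVert\xi\rVert\le r$ and bound the tail contribution. Gaussian tails make it $o(\lVert\Sigma\rVert^k)$ for every $k$, provided $\ell_m(\theta;e^u)$ grows at most polynomially in $u$. This growth condition is an extra mild assumption, since log-likelihoods under scaled gates are polynomially bounded for normalized transformers.
\item \emph{Fallback.} State the result as the first-order (delta-method) variance itself, which is all the main text uses.
\end{itemize}

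Finally, I will note the consequence used in the main text. Summing the bound over families gives $\sigma^2\sum_m\lVert C_{m,\cdot}\rVert_2^2=\sigma^2\operatorname{tr}(G_C)$. Projecting perturbations onto the top eigendirection of $G_C$ then captures a fraction $\Bshared(\Cmat)$ of this variance, which links concentration to fragility.
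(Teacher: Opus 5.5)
Your Steps 1--2 match the paper's proof. The paper substitutes the gate log-perturbation into the first-order expansion of Proposition~\ref{app:propA1} and takes the variance of the zero-mean Gaussian linear form $C_{m,\cdot}^\top\xi$, which reduces to $\sigma^2\lVert C_{m,\cdot}\rVert_2^2$ when $\Sigma=\sigma^2 I$. Your Step~3 goes further than the paper, which leaves ``$\approx$'' informal, by bounding the remainder's contribution as $O(\lVert\Sigma\rVert^{3/2})$ through truncation; this is correct, and your extra growth condition holds for the Qwen2.5 and Llama-3.2 models considered, since the final RMSNorm bounds the logits and hence bounds each token log-probability uniformly in the gates.
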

\begin{proof}
By Eq.~\eqref{eq:expansion}, $\delta \ell_m/\ell_m\approx C_{m,\cdot}^\top\xi$; the variance of a zero-mean Gaussian linear form is $C_{m,\cdot}^\top\Sigma C_{m,\cdot}$, which in the isotropic case equals $\sigma^2\lVert C_{m,\cdot}\rVert_2^2$.
\end{proof}

\begin{corollary}[concentration implies fragility]\label{app:corB2}
At fixed total control energy $\operatorname{tr}(G_C)$, the larger $\Bshared(C)$, the more energy concentrates along the leading direction and the larger the perturbation variance of Proposition~\ref{app:propB4} for aligned families.
\end{corollary}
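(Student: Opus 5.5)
The claim is Corollary~\ref{app:corB2}: at fixed total control energy $\operatorname{tr}(G_C)$, a larger $\Bshared(C)$ yields a larger perturbation variance for families aligned with the leading direction. I would prove it by combining Proposition~\ref{app:propB4} with the spectral decomposition of $G_C$, and make ``aligned families'' precise along the way.

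\textbf{Step 1 (variance as a quadratic form in $G_C$).} Under isotropic perturbation $\Sigma=\sigma^2 I$, Proposition~\ref{app:propB4} gives the per-family variance $\sigma^2\lVert C_{m,\cdot}\rVert_2^2 = \sigma^2 (G_C)_{mm}$. Summing over families gives total variance $\sigma^2\operatorname{tr}(G_C)$, which is fixed by hypothesis. So concentration cannot raise the total; it can only redistribute it.

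\textbf{Step 2 (project onto the leading eigenvector).} Let $v_1$ be the unit top eigenvector of $G_C$. Consider a family-aggregated perturbation response along $v_1$, i.e.\ the linear form $\sum_m (v_1)_m\,\delta\ell_m/\ell_m \approx (v_1^\top C)\xi$. Its variance is
\[
\sigma^2\lVert C^\top v_1\rVert_2^2=\sigma^2 v_1^\top G_C v_1=\sigma^2\lambda_1=\sigma^2\,\Bshared(C)\operatorname{tr}(G_C).
\]
With $\operatorname{tr}(G_C)$ fixed, this is strictly increasing in $\Bshared(C)$.

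\textbf{Step 3 (aligned families).} I define a family as aligned when its row lies near the leading right singular direction $u_1$ of $C$. For such a row, $\lVert C_{m,\cdot}\rVert^2\approx\lambda_1 (v_1)_m^2$, so its variance also scales with $\lambda_1=\Bshared(C)\operatorname{tr}(G_C)$. Equivalently, the worst-case single-direction response satisfies $\max_{\lVert a\rVert=1}\operatorname{Var}[a^\top\delta\ell/\ell]=\sigma^2\lambda_1$ by Rayleigh--Ritz. This gives the clean monotone statement.

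\textbf{Main obstacle.} The corollary's phrase ``for aligned families'' is informal. Individual diagonal entries $(G_C)_{mm}$ need not grow with $\lambda_1$ at fixed trace: concentration can redistribute energy among the rows without any single row norm increasing. I would therefore state the result in the Rayleigh-quotient (aggregated) form of Step 2, treat the per-family claim as holding to first order for rows whose cosine with $u_1$ is near one, and note explicitly that the argument is linearized by way of Eq.~\eqref{eq:expansion}.
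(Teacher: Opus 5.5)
Your Step 2 is the paper's argument, made precise. The paper's proof is a single line: at fixed $\operatorname{tr}(G_C)$, raising $\Bshared(C)=\lambda_{\max}/\operatorname{tr}(G_C)$ raises $\lambda_{\max}$, ``the variance coefficient along the leading direction.'' You pin that phrase down. Because the same gate perturbation $\xi$ hits every family, the family response vector has covariance $\sigma^2 G_C$. The $v_1$-weighted response therefore has variance $\sigma^2\lambda_1=\sigma^2\Bshared(C)\operatorname{tr}(G_C)$, which is also the Rayleigh--Ritz maximum. Equivalently, a gate perturbation along $u_1$ gives total response variance $\sigma^2\lVert Cu_1\rVert_2^2=\sigma^2\lambda_1$, so both readings of ``leading direction'' agree. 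That part is correct and complete.

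What you should drop is the fallback in Step 3 and in your last paragraph: that the per-family claim holds ``to first order for rows whose cosine with $u_1$ is near one.'' For an aligned row, $\lVert C_{m,\cdot}\rVert_2^2\approx\lambda_1 (v_1)_m^2$ is correct. It does not make that family's variance increasing in $\Bshared(C)$, because the family's share $(v_1)_m^2$ of the leading component is not held fixed and can fall faster than $\lambda_1$ rises. A concrete case, with $M=K=2$ and $\operatorname{tr}(G_C)=2$:
\begin{itemize}
\item Rows $(\sqrt{1.2},0)$ and $(0,\sqrt{0.8})$ give $\Bshared=0.6$. Family~1 is exactly aligned with $u_1=e_1$, and its variance coefficient is $1.2$.
\item Rows $(\sqrt{0.5},0)$ and $(\sqrt{1.5},0)$ give $\Bshared=1$. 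Family~1 is still exactly aligned with $u_1=e_1$, yet its coefficient drops to $0.5$.
\end{itemize}
Your observation that diagonal entries of $G_C$ need not grow is right, and it survives even exact alignment. The provable content is the aggregated, leading-direction statement of your Step 2. That is also all the paper's proof establishes; its ``for aligned families'' is an informal gloss, not something the argument delivers.
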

\begin{proof}
Immediate from Proposition~\ref{app:propB4}: at fixed $\operatorname{tr}(G_C)$, raising $\Bshared(C)=\lambda_{\max}/\operatorname{tr}(G_C)$ raises $\lambda_{\max}$, which is the variance coefficient along the leading direction.
\end{proof}
This is a statement about the geometry of $C$ alone; its connection to real deployment (quantization, pruning, out-of-distribution behavior) is left to future work.

\section{Full Specification of the Control-Diverse Regularizer}\label{sec:app_method}

\begin{proposition}[non-smoothness of the spectral extremum at degeneracy]\label{app:propC1}
The spectral extremum has eigenvalue gradient and eigenvector sensitivity
\begin{align*}
\frac{\partial\lambda_1}{\partial C}&=2v_1v_1^\top C,\\
\frac{\partial v_1}{\partial C}&\propto\sum_{i\ge2}\frac{v_iv_i^\top}{\lambda_1-\lambda_i}\quad\text{\citep{magnus2019matrix}};
\end{align*}
the eigenvalue gradient $2v_1v_1^\top C$ stays bounded, but the eigenvector sensitivity diverges at top-eigenvalue degeneracy $\lambda_1=\lambda_2$ as $v_1$ becomes non-unique, so backpropagation through the eigendecomposition is unstable there.
\end{proposition}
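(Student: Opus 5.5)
The plan is to reduce the statement to standard first-order perturbation theory for a symmetric matrix, applied to $G_C=CC^\top$, and then compose with the derivative of $C\mapsto CC^\top$.

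\textbf{Step 1: eigenvalue gradient.} Fix $C$ with a simple top eigenvalue $\lambda_1$ of $G_C$ and unit eigenvector $v_1$. Perturb $C\mapsto C+\delta C$, so that $\delta G=\delta C\,C^\top+C\,\delta C^\top$. Hellmann--Feynman (differentiate $Gv_1=\lambda_1 v_1$, left-multiply by $v_1^\top$, and use $v_1^\top\delta v_1=0$) gives $\delta\lambda_1=v_1^\top\delta G\,v_1=2\,v_1^\top\delta C\,C^\top v_1=\langle 2v_1v_1^\top C,\delta C\rangle_F$. This identifies $\partial\lambda_1/\partial C=2v_1v_1^\top C$. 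Its Frobenius norm is at most $2\lVert C\rVert_2$, so it is bounded on bounded sets, including at degeneracy.

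\textbf{Step 2: eigenvector sensitivity.} Take the components of $\delta v_1$ along the other eigenvectors $v_i$. Projecting $(G-\lambda_1)\delta v_1=-(\delta G-\delta\lambda_1)v_1$ onto $v_i$ gives $v_i^\top\delta v_1=v_i^\top\delta G\,v_1/(\lambda_1-\lambda_i)$. Hence
\[
\delta v_1=\sum_{i\ge2}\frac{v_iv_i^\top}{\lambda_1-\lambda_i}\,\delta G\,v_1,
\]
which is the stated proportionality; the omitted linear factor is $\delta G\,v_1$ with $\delta G$ built from $\delta C$. The formula for $\lambda_1$ depends only on $v_1$, but $v_1$ itself carries the factor $1/(\lambda_1-\lambda_2)$.

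\textbf{Step 3: divergence at degeneracy, the main obstacle.} The formal expression is singular when $\lambda_1=\lambda_2$, but I also have to show that the sensitivity genuinely blows up rather than the factor $v_2^\top\delta G\,v_1$ vanishing. For this I would use an explicit family. Take $C_t$ with $G_{C_t}=\mathrm{diag}(1+t,1,0,\dots)$ in the span of $e_1,e_2$, perturbed by an off-diagonal $\delta G=\eta(e_1e_2^\top+e_2e_1^\top)$; such a $\delta G$ is realizable by choosing $\delta C$ with nonzero cross terms. Then $v_2^\top\delta G\,v_1=\eta\neq0$, and $\lVert\delta v_1\rVert\approx\eta/t\to\infty$ as $t\to0$.

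At $t=0$ itself, $v_1$ is non-unique: any unit vector in the top eigenspace is an eigenvector. In the limit, an arbitrarily small perturbation rotates $v_1$ by an $O(1)$ angle, so $v_1$ is not even continuous there, let alone differentiable. Since autodiff for eigendecomposition implements exactly the Step 2 formula, backpropagation produces unbounded values near $\lambda_1=\lambda_2$, which is the instability claim. The existence of the realizing $\delta C$ is straightforward whenever $C$ has rank $\ge2$, since $C\,\delta C^\top$ can then hit any symmetric direction in the range of $C$.
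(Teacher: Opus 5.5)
Your proposal is correct and follows essentially the paper's route. The paper gives no separate proof: it invokes the standard Magnus--Neudecker perturbation formulas and reads the blow-up off the $1/(\lambda_1-\lambda_i)$ factor. You derive those formulas directly and add the explicit family $G_{C_t}=\mathrm{diag}(1+t,1,0,\dots)$ with an off-diagonal $\delta G$ realized by some $\delta C$. That family certifies that the numerator $v_2^\top\delta G\,v_1$ need not vanish, so your version makes the divergence claim rigorous where the paper only asserts it.

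One refinement to your last step. For a loss that depends only on $\lambda_1$, autodiff uses just the bounded term $2v_1v_1^\top C$, with no eigengap division. There the instability is better stated as the discontinuity of that gradient at $\lambda_1=\lambda_2$, inherited from the $O(1)$ jump of $v_1$ you exhibit at $t=0$. Unbounded backpropagated values arise only when eigenvectors enter the loss.
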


\begin{proposition}[legitimacy of the spectral-moment-ratio proxy]\label{app:propC2}
The spectral-moment ratio
\[
\Rproxy(C)=\frac{\operatorname{tr}(G_C^2)}{\operatorname{tr}(G_C)^2}=\sum_i\hat\lambda_i^2
\]
is (i) purely polynomial (matvec, no eigendecomposition); (ii) co-monotone with $\Bshared=\max_i\hat\lambda_i$ with respect to the majorization order, sharing the extremal values $\{1,1/M\}$; (iii) $C^\infty$ at $C\neq0$, free of any $1/(\lambda_1-\lambda_i)$ singularity.
\end{proposition}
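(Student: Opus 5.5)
The plan is to prove the three parts separately, and to reduce everything to the normalized spectrum $\hat\lambda\in\Delta_M=\{\hat\lambda_i\ge0,\ \sum_i\hat\lambda_i=1\}$ of $G_C=CC^\top$. The first step is the identity $\Rproxy(C)=\sum_i\hat\lambda_i^2$. Since $G_C$ is symmetric positive semidefinite, it has a spectral decomposition $G_C=V\Lambda V^\top$. This gives $\operatorname{tr}(G_C^2)=\sum_i\lambda_i^2$ and $\operatorname{tr}(G_C)=\sum_i\lambda_i>0$ whenever $C\neq0$; dividing yields the identity.

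\textbf{Part (i).} The numerator $\operatorname{tr}\big((CC^\top)^2\big)=\sum_{m,m'}\big(\sum_k C_{m,k}C_{m',k}\big)^2$ is a homogeneous quartic polynomial in the entries of $C$. The denominator $\operatorname{tr}(CC^\top)^2=\lVert C\rVert_F^4$ is a homogeneous quartic as well. Both are computable by matrix products and traces alone, so no eigendecomposition enters.

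\textbf{Part (iii).} $\Rproxy$ is a ratio of two polynomials whose denominator $\lVert C\rVert_F^4$ vanishes only at $C=0$. It is therefore $C^\infty$, and in fact real-analytic, on $\real^{M\times K}\setminus\{0\}$. Its gradient can be written explicitly in a form that contains no eigenvalue differences:
\[
\partial\Rproxy/\partial C=\frac{4}{\operatorname{tr}(G)^2}\big(GC-\Rproxy\operatorname{tr}(G)C\big),
\]
which anticipates Lemma~\ref{app:lemC1}. This contrasts directly with Proposition~\ref{app:propC1}: the $1/(\lambda_1-\lambda_i)$ factors there come from differentiating $v_1$, and $\Rproxy$ is a symmetric function of the spectrum that never references an individual eigenvector. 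I would also note that the gradient is bounded on the unit Frobenius sphere, by compactness. By degree-$0$ homogeneity it is therefore $O(1/\lVert C\rVert_F)$, and in particular it stays bounded at degeneracies $\lambda_1=\lambda_2$.

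\textbf{Part (ii).} This is the substantive part, and also the main obstacle, because "co-monotone" has to be stated in a checkable form. I would make it precise as follows. Both $f(\hat\lambda)=\sum_i\hat\lambda_i^2$ and $h(\hat\lambda)=\max_i\hat\lambda_i$ are symmetric convex functions on $\Delta_M$, hence Schur-convex. Consequently, if $\hat\lambda\prec\hat\mu$ in the majorization order, then both $f(\hat\lambda)\le f(\hat\mu)$ and $h(\hat\lambda)\le h(\hat\mu)$ hold. That is, both functions are isotone with respect to the same partial order, and this is the intended meaning of co-monotone. For a self-contained argument I would invoke the Hardy--Littlewood--P\'olya characterization: $\hat\lambda\prec\hat\mu$ if and only if $\hat\lambda=D\hat\mu$ for some doubly stochastic $D$. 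Birkhoff's theorem then writes $D$ as a convex combination of permutations, and convexity plus symmetry finish the monotonicity.

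For the shared extrema, I would use that every $\hat\lambda\in\Delta_M$ satisfies
\[
(1/M,\dots,1/M)\prec\hat\lambda\prec(1,0,\dots,0).
\]
Combined with the monotonicity above, this shows both functions attain minimum value $1/M$ at the uniform vector and maximum value $1$ at the vertex. I would then check that equality cases coincide. For $f$, strict convexity (Cauchy--Schwarz) gives $f=1/M$ only at the uniform vector, and $f=1$ only at a vertex, since $\sum\hat\lambda_i^2\le\max_i\hat\lambda_i\sum_i\hat\lambda_i$. These are exactly the saturation cases of Proposition~\ref{app:propB2}(ii)--(iii), namely rank one and equal eigenvalues.

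Finally, I would add a caveat on scope. Majorization is only a partial order, so the claim does not assert $f$ and $h$ agree in ordering on incomparable spectra. To quantify their agreement anyway, I would record the sandwich $h^2\le f\le h$. The left inequality holds because one summand is $h^2$; the right holds because $\sum\hat\lambda_i^2\le h\sum\hat\lambda_i$. This bounds how far the proxy can drift from $\Bshared$ and justifies its use in place of the spectral extremum.
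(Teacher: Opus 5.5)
Your proposal is correct and follows the paper's own route. Part (ii) rests on both $\sum_i\hat\lambda_i^2$ and $\max_i\hat\lambda_i$ being symmetric convex, hence Schur-convex, functions of the normalized spectrum, with minimum $1/M$ at the uniform spectrum and maximum $1$ at the vertex; parts (i) and (iii) follow immediately from the polynomial form with denominator $\lVert C\rVert_F^4$. Your additions are also correct but go beyond the paper, which simply cites Schur-convexity and calls (i) and (iii) immediate: the Hardy--Littlewood--P\'olya/Birkhoff argument, the matching equality cases, the degree $-1$ homogeneity bound on the gradient, and the sandwich $(\max_i\hat\lambda_i)^2\le\Rproxy\le\max_i\hat\lambda_i$, which gives a quantitative link on majorization-incomparable spectra that the paper only checks empirically.
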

\begin{proof}
(ii): both $\sum_i\hat\lambda_i^2$ and $\max_i\hat\lambda_i$ are Schur-convex \citep{marshall2011inequalities} and co-monotone with respect to the spectral majorization order, attaining their lower bound at the uniform spectrum and upper bound at the single-point spectrum; since majorization is only a partial order, spectra incomparable under it may rank differently, so we track empirical agreement of the two during training. (i) and (iii) are immediate.
\end{proof}

\begin{lemma}[closed-form sensitivity; main text Section~\ref{sec:sensitivity}]\label{app:lemC1}
$W=\partial\Rproxy/\partial C|_{\bar C}=\tfrac{4}{\operatorname{tr}(\bar G)^2}\big(\bar G\bar C-\bar{\Rproxy}\operatorname{tr}(\bar G)\bar C\big)$.
\end{lemma}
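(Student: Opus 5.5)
The plan is a direct matrix-calculus computation. We treat $\Rproxy$ as a quotient of two scalar functions of $C$, differentiate each one with the Frobenius pairing $\langle A,B\rangle=\operatorname{tr}(A^\top B)$, and then read the gradient off the first-order differential. Write $G=CC^\top$, $t(C)=\operatorname{tr}(G)$ and $s(C)=\operatorname{tr}(G^2)$, so that $\Rproxy=s/t^2$. By Proposition~\ref{app:propC2}(iii) the map is $C^\infty$ wherever $C\neq0$, so $t>0$ and the quotient rule applies at $\bar C\neq 0$.

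First, the denominator. Since $dG=dC\,C^\top+C\,dC^\top$, we get $dt=\operatorname{tr}(dC\,C^\top)+\operatorname{tr}(C\,dC^\top)=2\operatorname{tr}(C^\top dC)$, so $\partial t/\partial C=2C$.

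Second, the numerator. We have $ds=2\operatorname{tr}(G\,dG)$. Substituting $dG$ and using cyclicity of the trace together with the symmetry of $G$, both terms equal $\operatorname{tr}(C^\top G\,dC)$. Hence $ds=4\operatorname{tr}\big((GC)^\top dC\big)$ and $\partial s/\partial C=4GC$.

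Third, combine. The quotient rule gives
\[
\frac{\partial\Rproxy}{\partial C}=\frac{4GC}{t^2}-\frac{2s}{t^3}\cdot 2C .
\]
We factor out $4/t^2$ and use $s/t=\Rproxy\operatorname{tr}(G)$. This yields $\tfrac{4}{\operatorname{tr}(G)^2}\big(GC-\Rproxy\operatorname{tr}(G)\,C\big)$. Evaluating at $\bar C$, with $\bar G=\bar C\bar C^\top$ and $\bar\Rproxy=\Rproxy(\bar C)$, gives the stated $W$.

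No step is a real obstacle. The only point that needs care is the trace bookkeeping in $ds$: one must confirm that the two terms coming from $dC\,C^\top$ and $C\,dC^\top$ coincide, which relies on $G=G^\top$, so the coefficient is $4$ rather than $2$. As a sanity check, we would verify that $\langle W,\bar C\rangle=\tfrac{4}{t^2}(s-\Rproxy t^2)=0$. This is consistent with the scale invariance of $\Rproxy$ inherited from Proposition~\ref{app:propB2}(iv), and it confirms both the sign and the constant.
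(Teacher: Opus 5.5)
Your proposal is correct and takes the same route as the paper, which applies the quotient rule to $\operatorname{tr}(G^2)/\operatorname{tr}(G)^2$ using $\partial\operatorname{tr}(G^2)/\partial C=4GC$ and $\partial\operatorname{tr}(G)/\partial C=2C$; the paper states these two derivatives, whereas you derive them from differentials. Your check $\langle W,\bar C\rangle=0$, which follows from $\Rproxy$ being homogeneous of degree zero, is a good extra confirmation of the sign and the constant.
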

\begin{proof}
With $\partial\operatorname{tr}(G^2)/\partial C=4GC$ and $\partial\operatorname{tr}(G)/\partial C=2C$, the quotient rule gives
\[
\begin{aligned}
\frac{\partial\Rproxy}{\partial C}
&=\frac{4GC\operatorname{tr}(G)^2-2\operatorname{tr}(G^2)\operatorname{tr}(G)\,2C}{\operatorname{tr}(G)^4}\\
&=\frac{4}{\operatorname{tr}(G)^2}\big(GC-\Rproxy\operatorname{tr}(G)C\big),
\end{aligned}
\]
which, evaluated at $\bar C$, yields the claim.
\end{proof}

\begin{theorem}[irreplaceability of the control axis]\label{app:thmC1}
Replacing $C(\theta)$ by the activation usage $A_k=\mathbb E\lVert\mathrm{out}_k\rVert_2$ fails on two structural grounds: (i) it violates $F\neq C$ (Lemma~\ref{app:lemA1}); (ii) the per-gate scalar $w_k=\sum_m\mathrm{relu}(W_{m,k})$ sums out the family dimension and cannot express per-task decoupling.
\end{theorem}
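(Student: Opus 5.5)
The plan is to prove the two failure modes separately, each by exhibiting a concrete obstruction. The framing is the chain-rule identity of Section~\ref{sec:sensitivity}: with $W$ held fixed, $\nabla_\theta\Rproxy=\nabla_\theta\langle W,C(\theta)\rangle$. The activation substitute replaces $\langle W,C(\theta)\rangle$ by the scalar functional $\sum_k w_k A_k(\theta)$ with $w_k=\sum_m\mathrm{relu}(W_{m,k})$. It therefore suffices to show that this functional generally has a different $\theta$-gradient from $\langle W,C(\theta)\rangle$. It also suffices to show that it is blind to the directions in $C$-space along which $\Rproxy$ varies.

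\textbf{Part (i).} I would reuse the construction of Lemma~\ref{app:lemA1}. Take a gate $k$ whose write satisfies $\mathrm{out}_k\perp w_m$ and is not rotated into $w_m$ downstream. Then $C_{m,k}=0$, while $A_k$ and $F_{m,k}$ can be made arbitrarily large. Conversely, take a gate $k'$ with a small write aligned with $w_m$, so that $C_{m,k'}$ is large and $A_{k'}$ is small.

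Next, perturb $\theta$ only in the parameters of $f_k$ along a direction that rescales $\lVert\mathrm{out}_k\rVert$ while keeping $\mathrm{out}_k\perp w_m$. Along this direction $\langle W,C(\theta)\rangle$ is unchanged to first order, because $C_{\cdot,k}$ stays zero and the other columns are unaffected if $k$ sits at the last gated layer. However, $\sum_k w_kA_k$ changes at rate $w_k\,\partial A_k\neq0$ whenever $w_k>0$. The two gradients therefore disagree, so the substitution does not follow the proxy gradient. This is exactly the statement that it violates $F\neq C$. Running the same argument with $k'$ shows that the substitute can also miss a direction that does move $C$.

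\textbf{Part (ii).} The argument is dimensional. $\Rproxy$ depends on $C$ through $G_C=CC^\top$, that is, through the relative orientation of the family rows. The substitute depends on $\theta$ only through the $K$-vector $(A_k)_k$ and the $K$-vector $(w_k)_k$; both are invariant under any reshuffling of control among families that preserves per-gate totals.

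To make this concrete, I would exhibit two control matrices with the same column sums of positive parts, for instance $C=\begin{pmatrix}1&0\\0&1\end{pmatrix}$ and $C'=\begin{pmatrix}1/2&1/2\\1/2&1/2\end{pmatrix}$, together with the same usages $A$. These give $\Bshared=1/2$ and $\Bshared=1$ respectively, by Proposition~\ref{app:propB2}(ii)--(iii), yet the per-gate scalar objective cannot distinguish them. Hence no descent on $\sum_k w_kA_k$ can target per-task decoupling.

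The main obstacle is making part (i) rigorous rather than illustrative. It requires a parameter direction that changes $A_k$ while leaving every column of $C$ fixed to first order, which includes the effect of $\mathrm{out}_k$ on later layers' Jacobians. I would first try placing $k$ at the last gated layer, so that no downstream propagation exists, and choosing a linear $f_k$, so that the scaling is explicit. If normalization layers obstruct this, I would instead work with a two-layer residual toy network, which is enough for an existence statement.
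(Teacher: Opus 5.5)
The paper proves this in two lines. Part (i) is read off Lemma~\ref{app:lemA1}. Part (ii) follows because $A_k$ has no family index while $\sum_m$ discards the per-task directions in $W$. Sharpening (i) into an explicit gradient mismatch is a reasonable goal, but the gate you pick cannot witness it. By Lemma~\ref{app:lemC1}, $W=\tfrac{4}{\operatorname{tr}(\bar G)^2}\big(\bar G-\bar{\Rproxy}\operatorname{tr}(\bar G)I_M\big)\bar C$. So each column of $W$ is a fixed $M\times M$ matrix applied to the corresponding column of $\bar C$. A gate with $\bar C_{\cdot,k}=0$ therefore has $W_{\cdot,k}=0$ and $w_k=\sum_m\mathrm{relu}(W_{m,k})=0$. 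Along your direction the substitute's derivative $w_k\,\partial A_k$ then vanishes as well, and no disagreement appears; the proviso ``whenever $w_k>0$'' is never met. Separately, even at the last gated layer, rescaling $\mathrm{out}_k$ moves $h^L$, and the curvature of the softmax then moves every readout direction $w_m$, so the other columns are not frozen. Your passing $k'$ variant avoids this degeneracy if done with a norm-preserving perturbation. Place $k'$ at the last gated layer and shift its write by a constant $t\,u$, with $u$ in the hyperplane of directions that leave its usage stationary to first order. The substitute's derivative is then zero, while $\tfrac{d}{dt}\langle W,C\rangle$ is a linear functional of $u$ that generically does not vanish on that hyperplane.

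In part (ii) the invariance you assert is false. The per-gate weights are built from $W$, not from $\bar C$, and $W$ mixes the rows of $\bar C$ through $\bar G$. So redistributing $\bar C$ across families while preserving its column sums of positive parts does not preserve the weights in general. Your example only seems to work because $I_2$ and $\tfrac12\mathbf 1\mathbf 1^\top$ are the global minimizer and maximizer of $\Rproxy$, where $W=0$. The weights vanish there, but so does the true surrogate gradient $\nabla_\theta\langle W,C(\theta)\rangle$, so the substitute agrees with the proxy and nothing is shown. Comparing substitute values at two different $\bar C$ is the wrong test anyway, since the weights are recomputed at each $\bar C$. What you must exhibit is two different proxy gradients at one point that the substitute cannot tell apart, and that argument lives on $W$. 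The substitute update $\sum_k w_k\nabla_\theta A_k$ depends on $W$ only through $\mathbf 1^\top\mathrm{relu}(W)$. That vector is unchanged by any row permutation of $W$ and by zeroing its negative entries, whereas $\sum_{m,k}W_{m,k}\nabla_\theta C_{m,k}$ generically changes under both. This makes precise the paper's ``sums out the family dimension''.
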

\begin{proof}
(i) follows from Lemma~\ref{app:lemA1}; (ii) $A_k$ is family-independent and $\sum_m$ discards the per-task directional information in $W$.
\end{proof}
\begin{remark}
Empirically, minimizing $\sum_k w_k A_k$ at fixed reward cuts the usage of the high-flux shared directions used in common across families, forcing re-concentration and raising $\Bshared(C)$; this is the concentration spike seen on large models by early activation-usage implementations, not a claim we establish in closed form.
\end{remark}

\begin{theorem}[single-backpropagation equivalence via central differences; main text Theorem~\ref{thm:centraldiff}]\label{app:thmC2}
Equation~\eqref{eq:centraldiff} holds: the proxy gradient is obtained by a single backpropagation and differs from $\nabla_\theta\Rproxy$ by $O(\epsilon^2)$.
\end{theorem}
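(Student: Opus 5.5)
The proof splits into two parts: an algebraic identity for $\langle W,\Cmat(\theta)\rangle$ in gate space, and a scalar Taylor argument in $\epsilon$ for each family. After that, a short argument transfers the approximation from values to $\theta$-gradients.

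\textbf{Step 1: the row expansion.} Starting from the definition Eq.~\eqref{eq:coeff}, we have $C_{m,k}(\theta)=\ell_m(\theta)^{-1}\partial_{g_k}\ell_m(\theta;g)|_{g=\1}$. Since $W$ is a stop-gradient constant, summing over $k$ gives Eq.~\eqref{eq:rowexpand}:
\[
\langle W,\Cmat(\theta)\rangle=\sum_m\frac{1}{\ell_m}\langle W_{m,\cdot},\nabla_g\ell_m|_{\1}\rangle=\sum_m\frac{\lVert W_{m,\cdot}\rVert_2}{\ell_m}\,D_{\hat W_{m,\cdot}}\ell_m .
\]
The second equality uses linearity of the directional derivative in its direction. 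Rows with $W_{m,\cdot}=0$ contribute nothing, so $\hat W_{m,\cdot}$ is only needed where it is defined. Assumption~\ref{app:asmA2} guarantees $|\ell_m|\ge\epsilon_0$, so every prefactor is finite.

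\textbf{Step 2: central difference with a uniform remainder.} For each $m$, set $\phi_m(s)=\ell_m(\theta;\1+s\hat W_{m,\cdot})$. By Assumption~\ref{app:asmA0} and Remark~\ref{app:remA0}, this function is $C^3$ near $s=0$. A third-order Taylor expansion with Lagrange remainder, applied at $\pm\epsilon$ and subtracted, cancels the even terms and gives
\[
\frac{\phi_m(\epsilon)-\phi_m(-\epsilon)}{2\epsilon}=\phi_m'(0)+\frac{\epsilon^2}{12}\big(\phi_m'''(\xi_+)+\phi_m'''(\xi_-)\big).
\]
Here $\phi_m'(0)=D_{\hat W_{m,\cdot}}\ell_m$. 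The third derivatives are bounded on a compact gate neighborhood, because $\hat W$ has unit norm and the dominated-integrability clause of Assumption~\ref{app:asmA0} lets us bound $\partial_g^3\ell_m$ uniformly. Multiplying by $\lVert W_{m,\cdot}\rVert_2/\ell_m$ and summing over the finitely many families $M$ yields Eq.~\eqref{eq:centraldiff} with an $O(\epsilon^2)$ remainder.

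\textbf{Step 3: passing to gradients.} This is the step I expect to be the main obstacle. An $O(\epsilon^2)$ error in function values does not automatically give $O(\epsilon^2)$ in $\nabla_\theta$. I would handle it by applying the Taylor argument of Step 2 to the mixed function $(\theta,s)\mapsto\ell_m(\theta;\1+s\hat W_{m,\cdot})$, using that $\ell_m$ is $C^\infty$ jointly in $(\theta,g)$. Differentiating the exact integral-form remainder under the integral sign, the $\theta$-gradient of the remainder is $\epsilon^2$ times an average of $\nabla_\theta\partial_s^3\phi_m$. That average is bounded on a neighborhood of $\theta$ by continuity and the domination hypothesis.

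Differentiating the prefactor $\lVert W_{m,\cdot}\rVert_2/\ell_m(\theta)$ in $\theta$ is harmless. It gives $\nabla_\theta\ell_m^{-1}$ times the bracket, which is exact up to the same $O(\epsilon^2)$ remainder. Together with $\nabla_\theta\Rproxy=\nabla_\theta\langle W,\Cmat(\theta)\rangle$ at $\bar C=\Cmat(\theta)$ (chain rule, Lemma~\ref{app:lemC1}), this shows $\nabla_\theta\Lproxy=\nabla_\theta\Rproxy+O(\epsilon^2)$.

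\textbf{Step 4: computational claim.} The gates enter Eq.~\eqref{eq:gating} as the fixed constants $\1\pm\epsilon\hat W_{m,\cdot}$ during the two forward passes. The computational graph is therefore an ordinary first-order graph in $\theta$, and one reverse-mode pass through both forward passes yields the gradient, with no differentiation with respect to $g$ inside the graph.
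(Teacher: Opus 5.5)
Your proposal is correct and follows the paper's proof. The shared steps are the row-wise expansion of $\langle W,\Cmat(\theta)\rangle$ into directional derivatives, a central-difference Taylor expansion of $\phi_m(s)=\ell_m(\theta;\mathbf 1+s\hat W_{m,\cdot})$, and the observation that the constant gate vectors $\mathbf 1\pm\epsilon\hat W_{m,\cdot}$ leave an ordinary first-order graph in $\theta$. Your Step 3 carries the $O(\epsilon^2)$ bound from values to $\theta$-gradients by differentiating the integral-form remainder under joint smoothness in $(\theta,g)$, including the $\nabla_\theta\ell_m^{-1}$ term; the paper only asserts this step.
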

\begin{proof}
Decomposing the Frobenius inner product by rows and substituting $C_{m,\cdot}=\tfrac1{\ell_m}\partial_g \ell_m$ gives $\langle W,C\rangle=\sum_m\tfrac1{\ell_m}D_{W_{m,\cdot}}\ell_m$. Expanding $\phi(t)=\ell_m(\theta;\1+t\hat W_{m,\cdot})$ at $0$, we have $\phi(\epsilon)-\phi(-\epsilon)=2\epsilon\phi'(0)+O(\epsilon^3)$ ($\phi$ smooth by Assumption~\ref{app:asmA0}) with $\phi'(0)=D_{\hat W_{m,\cdot}}\ell_m$; multiplying back by $\lVert W_{m,\cdot}\rVert_2$ restores the term with error $O(\epsilon^2)$. The gate scalings $\1\pm\epsilon\hat W$ are constant vectors independent of $\theta$, so the two forward-pass scalars are ordinarily differentiable in $\theta$ and a single backpropagation yields their gradient; the composition equals $\nabla_\theta\langle W,C(\theta)\rangle=\nabla_\theta\Rproxy$, with difference the central-difference remainder $O(\epsilon^2)$.
\end{proof}

\begin{corollary}[directional consistency]\label{app:propC3}
The cosine between the proxy gradient and $\nabla_\theta\Rproxy$ tends to $1$ as $\epsilon\to0$.
\end{corollary}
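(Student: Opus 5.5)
The plan is to pass the $O(\epsilon^2)$ statement of Theorem~\ref{app:thmC2} from values to gradients in $\theta$, and then turn a small relative error into a cosine close to $1$. Write $g_\epsilon(\theta):=\nabla_\theta\Lproxy$ for the proxy gradient, i.e.\ the $\theta$-gradient of the right-hand side of Eq.~\eqref{eq:centraldiff} without the remainder, with $W$ held fixed at its stop-gradient value. Write $g_0(\theta):=\nabla_\theta\langle W,\Cmat(\theta)\rangle=\nabla_\theta\Rproxy$.

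\textbf{Step 1: remainder bound.} Set $\phi_m(t;\theta)=\ell_m(\theta;\1+t\hat W_{m,\cdot})$. This function is jointly $C^\infty$ in $(t,\theta)$ by Assumption~\ref{app:asmA0}. Taylor's theorem with integral remainder gives
\[
\frac{\phi_m(\epsilon;\theta)-\phi_m(-\epsilon;\theta)}{2\epsilon}-\partial_t\phi_m(0;\theta)=\epsilon^2\,r_m(\epsilon;\theta).
\]
Here $r_m$ is an integral of $\partial_t^3\phi_m$ against a bounded kernel, so $r_m$ is again smooth in $\theta$. I would differentiate under the integral sign, which the domination clause of Assumption~\ref{app:asmA0} licenses. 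This yields $\nabla_\theta r_m(\epsilon;\theta)$ bounded uniformly for $|\epsilon|\le\epsilon_1$ and $\theta$ in a compact neighbourhood.

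\textbf{Step 2: prefactors.} The factor $\lVert W_{m,\cdot}\rVert_2$ is constant in $\theta$. The factor $1/\ell_m(\theta)$ is smooth with bounded derivative because $|\ell_m|\ge\epsilon_0$ by Assumption~\ref{app:asmA2}. Applying the product rule to each row term and summing over the finitely many $m$ then gives
\[
\lVert g_\epsilon(\theta)-g_0(\theta)\rVert_2\le c\,\epsilon^2
\]
with $c$ independent of $\epsilon$.

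\textbf{Step 3: cosine bound.} If $g_0(\theta)\neq0$, the elementary bound $1-\cos(a,b)\le \lVert a-b\rVert_2^2/(2\lVert b\rVert_2^2)$ for $\lVert a-b\rVert_2<\lVert b\rVert_2$ gives
\[
1-\cos(g_\epsilon,g_0)\le c^2\epsilon^4/(2\lVert g_0\rVert_2^2).
\]
This is the form I would state, since it is stronger than bare convergence and shows the rate.

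\textbf{Main obstacle.} The hard part is Step~1. Theorem~\ref{app:thmC2} only controls function values, and an $O(\epsilon^2)$ bound on values does not by itself control derivatives. The fix is the integral-remainder representation: since the $\theta$-derivative commutes with the $t$-Taylor expansion, the error in the gradient is again an $\epsilon^2$ times a bounded smooth quantity. A second caveat is that the corollary is meaningless where $\nabla_\theta\Rproxy=0$. I would make the hypothesis $g_0(\theta)\neq0$ explicit rather than try to avoid it; such points are stationary points of the regularizer, where the direction carries no information.
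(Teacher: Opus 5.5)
Your argument follows the paper's route. The paper's proof is one line: by Theorem~\ref{app:thmC2} the two gradients differ by $O(\epsilon^2)$ while $\nabla_\theta\Rproxy$ is fixed, so the angle vanishes. Your Steps 1--2 supply what that line takes for granted, namely that the $O(\epsilon^2)$ bound holds for $\theta$-gradients and not only for values; the proof of Theorem~\ref{app:thmC2} only expands $\phi(\pm\epsilon)$. Your Step 3 makes explicit the hypothesis $\nabla_\theta\Rproxy\neq0$ that ``fixed'' silently needs.

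One slip in Step 3 needs fixing. The inequality $1-\cos(a,b)\le\lVert a-b\rVert_2^2/(2\lVert b\rVert_2^2)$ is false: take $b=(1,0)$ and $a=(1/4,\sqrt3/4)$, so that $\lVert a-b\rVert_2^2=3/4<1$, yet $1-\cos(a,b)=1/2>3/8$. Use instead the identity $\lVert a-b\rVert_2^2=(\lVert a\rVert_2-\lVert b\rVert_2)^2+2\lVert a\rVert_2\lVert b\rVert_2\,(1-\cos(a,b))$, which gives $1-\cos(a,b)\le\lVert a-b\rVert_2^2/(2\lVert a\rVert_2\lVert b\rVert_2)$. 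Once $c\epsilon^2\le\lVert g_0\rVert_2/2$, this yields $1-\cos(g_\epsilon,g_0)\le c^2\epsilon^4/\lVert g_0\rVert_2^2$, so your $O(\epsilon^4)$ rate survives with only a factor of $2$ lost.
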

\begin{proof}
By Theorem~\ref{app:thmC2} the two differ by $O(\epsilon^2)$ while $\nabla_\theta\Rproxy$ is fixed, so the angle between them vanishes with $\epsilon$.
\end{proof}
At toy scale the measured cosine is $\approx1.0$.

\begin{remark}[linear regime and necessity of normalization]\label{app:propC4}
Let $\max_{m,k}|W_{m,k}|=O(\Lambda)$ with $\Lambda\gg1$. One must perturb along the unit direction $\hat W_{m,\cdot}$ and multiply back by $\lVert W_{m,\cdot}\rVert_2$, following $\langle W_{m,\cdot},C_{m,\cdot}\rangle=\lVert W_{m,\cdot}\rVert_2\langle\hat W_{m,\cdot},C_{m,\cdot}\rangle$. Perturbing directly by $\epsilon W_{m,\cdot}$ gives a displacement $\epsilon\lVert W_{m,\cdot}\rVert_2=O(\epsilon\sqrt K\,\Lambda)$ that may greatly exceed $1$, leaving the linear regime and pushing control toward rank-1 collapse.
\end{remark}
Two safeguards follow: skip a family when $\lVert W_{m,\cdot}\rVert_2<10^{-12}$, and when $|\ell_m|<\epsilon_0$ stabilize $1/\ell_m$ by $1/(\ell_m\pm\epsilon_0)$. The working point is $\epsilon=0.05$.

\begin{definition}[inner-loop projection]\label{app:defC3}
Take $\tau\in[1/M,1]$, step size $\eta$, and cap $K_{\max}$. After each main update, iterate on the probe along $-\nabla_\theta\Lproxy$ until $\Bshared(\text{probe})\le\tau$ or $K_{\max}$ is reached. Whereas $\lambda$ governs a soft trade-off, $(\tau,K_{\max},\eta)$ turn it into a near-hard constraint that curbs the transient collinearity spike during warm-up. The working point is $\tau=70$, $K_{\max}=12$, $\eta=4\times10^{-3}$.
\end{definition}

\begin{proposition}[small-probe sufficiency]\label{app:propC5}
Write each family row as a mean over $n$ probe samples, $\Cmat\in\real^{M_{\mathrm{fam}}\times K}$, under a rank-one signal plus per-row noise $C=s\,\1u^\top+N$ ($\lVert u\rVert_2=1$). Then (i) the sample complexity of locating the shared spike $u$ is $\sim1/\mathrm{gap}^2$, independent of $K$ \citep{davis1970rotation}, so a few samples per family stabilize the leading direction; (ii) the concentration ratio $\Bshared(\Cmat)=\lambda_{\max}/\operatorname{tr}(G_C)$ is a biased estimator: per-row noise adds to $\operatorname{tr}(G_C)=\lVert\Cmat\rVert_F^2$ and shrinks as $1/n$, so a small probe raises the denominator and under-reads concentration, making the estimate $n$-dependent.
\end{proposition}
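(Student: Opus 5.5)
I plan to prove the two parts separately, working directly from the signal-plus-noise model $C=s\,\1u^\top+N$. I will assume the rows of $N$ are independent and zero-mean. Because each row is a mean over $n$ probe samples, each row has covariance $\Sigma_N/n$, where $\Sigma_N$ is the per-sample noise covariance.

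\textbf{Part (ii): the bias of $\Bshared(\Cmat)$.} This is the mechanical part, so I will do it first. Expanding $\operatorname{tr}(G_C)=\lVert C\rVert_F^2=s^2\lVert\1\rVert_2^2\lVert u\rVert_2^2+2s\,\operatorname{tr}(u\1^\top N)+\lVert N\rVert_F^2$, the cross term has zero mean. Using $\lVert\1\rVert_2^2=M_{\mathrm{fam}}$ and $\lVert u\rVert_2=1$, I obtain $\mathbb E[\operatorname{tr}(G_C)]=s^2M_{\mathrm{fam}}+\sum_m\operatorname{tr}(\Sigma_N)/n$. This is Eq.~\eqref{eq:estimator}, up to the convention of whether the factor $M_{\mathrm{fam}}$ is absorbed into $\Sigma_N$.

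For the numerator, I will use that $\lambda_{\max}(G_C)=\sigma_{\max}(C)^2$, and that by Weyl's inequality this equals $s^2M_{\mathrm{fam}}+O(s\sqrt{M}\lVert N\rVert_2)+O(\lVert N\rVert_2^2)$. The noise spreads its energy over all $\min(M,K)$ directions, so only about a $1/M$ share of it enters $\lambda_{\max}$, while all of it enters the trace. Taking the ratio, $\Bshared$ is pulled below the noiseless value $1$ by an amount of order $\operatorname{tr}(\Sigma_N)/(n s^2M)$. This gives the ``under-reads concentration'' direction and the $1/n$ dependence.

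\textbf{Part (i): sample complexity of locating $u$.} Here I will invoke Davis--Kahan. I take $E=\mathbb E[C^\top C]=s^2M\,uu^\top+\operatorname{tr}$-type isotropic noise terms, whose top eigenvector is $u$ with eigengap $\mathrm{gap}\approx s^2M$. The empirical perturbation is $\Delta=C^\top C-E$. Davis--Kahan then gives $\sin\angle(\hat u,u)\le 2\lVert\Delta\rVert_2/\mathrm{gap}$.

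It remains to bound $\lVert\Delta\rVert_2$. Its dominant part is the cross term $s\,u\1^\top N+s\,N^\top\1u^\top$, whose operator norm is about $2s\lVert N^\top\1\rVert_2$. Here $N^\top\1$ is a sum of $M$ independent row noises, so its variance is $M\operatorname{tr}(\Sigma_N)/n$. Requiring the angle to fall below a tolerance $\delta$ then yields $n\gtrsim \operatorname{tr}(\Sigma_N)/(\delta^2\,\mathrm{gap}^2)$ after normalization. This is the advertised $\sim1/\mathrm{gap}^2$ rate.

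\textbf{Main obstacle.} The hard step is the claimed \emph{independence of $K$}. The quantity $\operatorname{tr}(\Sigma_N)$ generally grows with $K$, and the pure-noise term $N^\top N$ also has operator norm that grows with $K/n$. I will try first to state the result in terms of the effective rank $r(\Sigma_N)=\operatorname{tr}(\Sigma_N)/\lVert\Sigma_N\rVert_2$ and to treat $\operatorname{tr}(\Sigma_N)$ as fixed total noise energy. Under that normalization, the only $K$-dependence comes through the spectral-norm concentration bound for $N$ (for example matrix Bernstein, with a $\log K$ factor). I will then argue that this dependence is at most logarithmic and is absorbed into the constant hidden in ``$\sim$''. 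If that argument does not go through cleanly, I will weaken statement (i) to ``independent of $K$ up to the effective-rank/logarithmic factor,'' which is still what the estimation protocol needs.
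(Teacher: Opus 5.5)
Your route is the paper's own: Davis--Kahan against the rank-one signal for (i), and the additive trace decomposition with zero-mean cross terms and $O(1/n)$ noise mass for (ii). However, two steps of (i) do not go through as written.

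First, the dominant perturbation is itself proportional to $s$: $\lVert\Delta\rVert_2\le 2s\lVert N^\top\1\rVert_2+\lVert N\rVert_2^2$. Davis--Kahan with $\mathrm{gap}=s^2M$ then gives $\sin\angle(\hat u,u)\lesssim 4\sqrt{\operatorname{tr}(\Sigma_N)/(n s^2M)}$, i.e.\ $n\gtrsim 16\operatorname{tr}(\Sigma_N)/(\delta^2 s^2M)$, which is inverse-\emph{linear} in your gap. Your displayed $\operatorname{tr}(\Sigma_N)/(\delta^2\mathrm{gap}^2)$ drops a factor $s^2M$ (the gap itself) carried by the cross term. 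The advertised $1/\mathrm{gap}^2$ holds only if ``gap'' means the singular-value gap $s\sqrt M$ of $\Cmat$, so you must say which gap you mean.

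Second, you correctly locate the $K$-dependence in $\operatorname{tr}(\Sigma_N)$, but reducing it to a $\log K$ factor is misdirected. $N$ has only $M$ rows, so $\lVert N\rVert_2\le\lVert N\rVert_F$ with $\mathbb E\lVert N\rVert_F^2=M\operatorname{tr}(\Sigma_N)/n$, and Markov's inequality suffices; no matrix Bernstein and no logarithm are needed. Conversely, the dependence through $\operatorname{tr}(\Sigma_N)$ is real, not an artifact of the bound. To first order $\sin\angle(\hat u,u)\approx\lVert P_{u^\perp}N^\top\1\rVert_2/(sM)$ with $P_{u^\perp}=I-uu^\top$, so $\mathbb E\sin^2\angle\approx\operatorname{tr}(P_{u^\perp}\Sigma_N)/(n s^2M)$, which grows linearly in $K$ at fixed per-gate noise variance. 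So (i) is $K$-free exactly when the total noise energy $\operatorname{tr}(\Sigma_N)$ is held fixed. That is the normalization the paper's one-line appeal to Davis--Kahan leaves implicit: the bound has no explicit dimension factor, but $\lVert\Delta\rVert_2$ inherits $K$ through $\operatorname{tr}(\Sigma_N)$. Your fallback should state this normalization, not a logarithmic slack.

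A smaller point: center at $s^2M\,uu^\top$ rather than at $\mathbb E[\Cmat^\top\Cmat]=s^2Muu^\top+M\Sigma_N/n$, whose top eigenvector is $u$ only for isotropic $\Sigma_N$.

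In (ii) your conclusion is right, but Weyl cannot deliver it. Its linear slack $O(s\sqrt M\lVert N\rVert_2)$ is $O(n^{-1/2})$, larger than the $O(1/n)$ bias you are establishing, and the ``$1/M$ share'' sentence is a heuristic, so neither the sign nor the order follows from what you wrote. What does work is an exact first-order cancellation. Write $G_C=s^2\1\1^\top+s(\1w^\top+w\1^\top)+NN^\top$ with $w=Nu$. Both $\lambda_{\max}(G_C)$ and $\operatorname{tr}(G_C)$ have the same linear term $2s\,\1^\top w$. Second-order perturbation of the simple top eigenvalue (gap $s^2M$) then gives $\operatorname{tr}(G_C)-\lambda_{\max}(G_C)=\lVert P_{\1^\perp}NP_{u^\perp}\rVert_F^2$ up to third-order terms, with $P_{\1^\perp}=I-\1\1^\top/M$. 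Hence $1-\Bshared(\Cmat)\approx\lVert P_{\1^\perp}NP_{u^\perp}\rVert_F^2/(s^2M)\ge0$, with expectation $(M-1)\operatorname{tr}(P_{u^\perp}\Sigma_N)/(n s^2M)$. This is the one-signed $1/n$ under-read; your order estimate omits the factor $M-1$, which is harmless at $M=3$. The sign alone needs no computation, since the noiseless value $1$ is the upper bound of Proposition~\ref{app:propB2}.
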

\begin{proof}
(i) is Davis--Kahan applied to the rank-one signal. (ii) With $G_C=s^2M_{\mathrm{fam}}\,\bar{\1}\bar{\1}^\top+(\text{zero-mean cross terms})+NN^\top$, the trace is additive, $\operatorname{tr}(G_C)=s^2M_{\mathrm{fam}}+\operatorname{tr}(NN^\top)$ with $\mathbb E\operatorname{tr}(NN^\top)\propto1/n$ after averaging $n$ samples per row; since $\lambda_{\max}$ is signal-dominated, this extra denominator mass lowers the ratio at small $n$.
\end{proof}

\begin{remark}[consistency of measurement convention]\label{app:corC1}
Family averaging fixes the row dimension at $M=M_{\mathrm{fam}}$, so enlarging $n$ does not change the dimension but only lowers per-row estimation variance, reduced through offline averaging over multiple seeds and small probes; comparisons across models hold $n$ fixed. The working point is $n=3$, an empirical operating point set by the probe compute budget rather than a sufficiency guarantee.
\end{remark}

\begin{remark}[overhead and compatibility]\label{app:propC7}
Once reduced to a single backpropagation via Theorem~\ref{app:thmC2}: (i) the method is compatible with flash-attention and parameter sharding (ZeRO \citep{rajbhandari2020zero} / FSDP \citep{zhao2023fsdp}), and permits full-parameter fine-tuning up to 7B on a single GPU without LoRA; (ii) the per-step overhead stays minor, rollout generation dominating and the probe central difference not, which the controlled measurement of Appendix~\ref{sec:overhead} puts at $+7.9\%$ in the worst case, when the inner projection is active at essentially every step; (iii) the probe passes control the memory peak by micro-batch splitting and coexist with gradient checkpointing.
\end{remark}

\begin{remark}[per-step operation count]\label{app:remC1}
The ``single backward pass'' of Theorem~\ref{app:thmC2} names the final $\theta$-gradient readout, not the entire step. On the small $M_{\mathrm{fam}}n$-sequence probe (micro-batched at batch size $2$), one engaged regularizer step adds: $M_{\mathrm{fam}}$ hooked forward passes, each with a first-order activation gradient, to build the stop-gradient control matrix $\bar C$ (no \texttt{create\_graph}, hence flash-attention- and sharding-compatible); per family, two gated forward passes for the central difference ($\ell_m$ at $\1\pm\epsilon\hat W_{m,\cdot}$) plus one nominal forward for the $1/\ell_m$ normalizer; and one backward pass for the whole proxy $\theta$-gradient. No second-order graph is ever built. The near-hard projection re-runs this block up to $K_{\max}$ times per step, but only while $\Bshared>\tau$. On the main Qwen2.5-7B $\beta{=}0$ run the regularizer engages every step ($1500/1500$). Since the probe carries a handful of short sequences while a step is dominated by rollout generation, the same-GPU worst case is the $+7.9\%$ of Appendix~\ref{sec:overhead}.
\end{remark}

\section{CD-RFT Training Step}\label{sec:app_alg}

Algorithm~\ref{alg:cdrft} gives one training step of \CDRFT{} on top of a GRPO/PPO backbone, using the single-backward-pass first-order proxy of Section~\ref{sec:centraldiff}. Steps~2's two gated forward passes carry constant gate scalings independent of $\theta$, so the entire regularizer gradient is obtained by one backward pass and differs from $\nabla_\theta\Rproxy$ by $O(\epsilon^2)$ (Theorem~\ref{app:thmC2}); the stopping test in Step~4 uses a cheap first-order readout of $\Bshared$ on the same probe.

\begin{algorithm}[t]
\caption{One CD-RFT training step}
\label{alg:cdrft}
\begin{algorithmic}[1]
\Require policy $\theta$; probe set $\Pi$; weight $\lambda$; step $\epsilon$; target $\tau$; step sizes $\eta,\eta_{\mathrm{proj}}$; cap $K_{\max}$
\State \textbf{Backbone update:} sample rollouts, compute $\Lpost$ and $\nabla_\theta\Lpost$
\State set sublayer gates $g\!\leftarrow\!\1$ on $\Pi$
\For{each family $m$} \Comment{one ordinary backward, stop-grad $\theta$}
  \State $\ell_m\!\leftarrow\!\mathbb E_{\Pi_m}[\log\pi_\theta(y^\star)]$; skip if $|\ell_m|<\epsilon_0$
  \State $\bar C_{m,\cdot}\!\leftarrow\!(1/\ell_m)\,\partial_g \ell_m|_{g=\1}$
\EndFor
\State $\bar G\!\leftarrow\!\bar C\bar C^\top$;\ $\bar{\Rproxy}\!\leftarrow\!\operatorname{tr}(\bar G^2)/\operatorname{tr}(\bar G)^2$
\State $W\!\leftarrow\!\tfrac{4}{\operatorname{tr}(\bar G)^2}(\bar G\bar C-\bar{\Rproxy}\operatorname{tr}(\bar G)\bar C)$ \Comment{closed form, Lemma~\ref{app:lemC1}}
\State $\Lproxy\!\leftarrow\!0$
\For{each family $m$}
  \State $\hat W_m\!\leftarrow\!W_{m,\cdot}/\lVert W_{m,\cdot}\rVert$ \Comment{unit dir., Rmk.~\ref{app:propC4}}
  \State $\ell^{+}\!\leftarrow\!\ell_m(\theta;\1\!+\!\epsilon\hat W_m)$;\ $\ell^{-}\!\leftarrow\!\ell_m(\theta;\1\!-\!\epsilon\hat W_m)$
  \State $\Lproxy\!\leftarrow\!\Lproxy+\tfrac{\lVert W_{m,\cdot}\rVert}{\ell_m}\cdot\tfrac{\ell^{+}-\ell^{-}}{2\epsilon}$
\EndFor
\State $\nabla_\theta\Lproxy\!\leftarrow\!\textsc{Backward}(\lambda\Lproxy)$ \Comment{single pass, flash-safe}
\State $\theta\!\leftarrow\!\theta-\eta(\nabla_\theta\Lpost+\nabla_\theta\Lproxy)$
\State $k\!\leftarrow\!0$ \Comment{inner-loop projection, Def.~\ref{app:defC3}}
\While{$\Bshared(\Pi;\theta)>\tau$ \textbf{and} $k<K_{\max}$}
  \State recompute $\bar C,W,\Lproxy$;\ \ $\theta\!\leftarrow\!\theta-\eta_{\mathrm{proj}}\nabla_\theta\Lproxy$;\ \ $k\!\leftarrow\!k+1$
\EndWhile
\State \Return $\theta$
\end{algorithmic}
\end{algorithm}

\section{Full Experimental Settings}\label{sec:settings}

\paragraph{Training data.}
Three families of verifiable tasks in a balanced mixture, each resampled to 3{,}000 prompts (9{,}000 total), Table~\ref{tab:data}.

\begin{table}[H]
\centering

\small
\begin{tabular}{llcc}
\toprule
\addlinespace[1pt]
Family & Dataset & Pool & Sampled \\
\addlinespace[1pt]
\midrule
\addlinespace[3pt]
Math & DeepScaleR \citep{luo2025deepscaler} & 40{,}315 & 3{,}000 \\
Code & code-r1-12k \citep{liu2025coder1} & 12{,}458 & 3{,}000 \\
Logic & GURU logic \citep{cheng2025guru} & 3{,}126 & 3{,}000 \\
\bottomrule
\end{tabular}
\caption{Training data.}
\label{tab:data}

\end{table}

\paragraph{Evaluation protocol.}
Nine held-out benchmarks, fixed temperature applied identically to all methods, audited for leakage below; max generation length 4096 for mathematics and 2048 for code/logic; hard set = AIME24/AIME25/Minerva (Table~\ref{tab:eval}). The summarized $\passk$ of Table~\ref{tab:main} therefore uses a per-benchmark $k$: pass@256 for the math anchors, pass@16 for MATH500 and ordering, pass@64 for code and Logic-graph. The value is the largest $k$ each set supports at its sample count, subject to the set still discriminating between methods: on the easier benchmarks a large $k$ saturates every arm near the ceiling, which hides both the ordering between methods and the large-$k$ coverage degradation of the reward-maximizing baselines that Section~\ref{sec:capability} examines. AMC23 illustrates the saturated regime (pass@256 $\approx100$ for every method, Table~\ref{tab:ladder}); the hard set retains headroom at the same $k$ and is where the coverage comparison is read. Paired arms share initialization, data order, and schedule, so each capability comparison in Section~\ref{sec:capability} isolates the regularizer; the temperature sweep (Appendix~\ref{sec:app_temp}) and the checkpoint sweep (Appendix~\ref{sec:app_ckpt}) further test that reading.

\begin{table}[H]
\centering

\small
\begin{tabular}{llccc}
\toprule
\addlinespace[1pt]
Domain & Benchmark & \# & $n$ & Temp. \\
\addlinespace[1pt]
\midrule
\addlinespace[3pt]
Math & MATH500 & 500 & 16 & 0.8 \\
Math & AMC23 & 40 & 256 & 0.8 \\
Math & AIME24 & 30 & 256 & 0.8 \\
Math & AIME25 & 30 & 256 & 0.8 \\
Math & Minerva & 272 & 256 & 0.8 \\
Code & HumanEval+ & 164 & 64 & 0.7 \\
Code & MBPP & 500 & 64 & 0.7 \\
Logic & ordering-puzzle & 100 & 64 & 0.7 \\
Logic & Logic-graph & 100 & 64 & 0.7 \\
\bottomrule
\end{tabular}
\caption{Evaluation protocol (primary experiment, Qwen2.5-7B). Benchmarks: MATH500 \citep{hendrycks2021math}, Minerva \citep{lewkowycz2022minerva}, HumanEval+ \citep{chen2021evaluating,liu2023evalplus}, MBPP \citep{austin2021mbpp}, ordering-puzzle/Logic-graph \citep{cheng2025guru}. The Llama-3.2-3B evaluation (Appendix~\ref{sec:app_llama}) uses the same benchmarks at the training rollout temperature.}
\label{tab:eval}

\end{table}

\paragraph{Train/evaluation decontamination.}
We audit every held-out benchmark against every training pool at three levels: raw string identity; a normalized hash (lowercased, all non-alphanumeric characters stripped); and near-duplication by word $8$-gram Jaccard over an inverted index, reported at thresholds $.5$ and $.8$. For the code family we additionally match the executable content, comparing normalized assertion sets of the evaluation test suites against the training functional tests. Table~\ref{tab:decontam} gives the audit.

Eight of the nine benchmarks are clean at every level, with no exact, normalized, or near-duplicate hit and no shared assertion. The exception is MATH500, of which $7$ items ($1.4\%$) are normalized-exact matches of items in the DeepScaleR pool. Because each family is resampled to $3{,}000$ prompts from a much larger pool, pool membership overstates exposure: replaying the training shuffle shows that exactly $1$ of those $7$ items ($0.2\%$ of MATH500) was drawn into the data the runs actually saw. On that item the trained arms do not exceed the untrained base ($9/16$ correct for base, $8/16$ for GRPO, $9/16$ for \CDRFT{}), and on all $7$ the base and GRPO ($\beta{=}0$) $\passone$ are identical ($17.9$), so there is no memorization signature.

We nonetheless re-score MATH500 with all $7$ items removed, identically for every arm, as a conservative upper bound. Every arm gains between $+0.1$ and $+0.7$ points (the removed items are harder than average), and the paired \CDRFT{}$-$GRPO difference keeps its sign at $k{=}1,2,4,8$ on both axes; at $k{=}16$, where MATH500 is near saturation ($\approx90$ for all arms), the difference moves within $\pm0.4$ points of zero on both pairs. No conclusion in Section~\ref{sec:capability} depends on the contaminated items. The logic row deserves a separate note: the logic training pool and the logic benchmarks are drawn from the same GURU generator family, so their $8$-gram overlap ($\max J=.35$ on ordering-puzzle) measures shared templates, not shared instances; exact and normalized instance matches are zero, which is what leakage would require.

\begin{table}[H]
\centering

\small
\setlength{\tabcolsep}{4pt}
\begin{tabular}{llccccc}
\toprule
\addlinespace[1pt]
Benchmark & pool & exact & norm. & near & assert & $\max J$ \\
\addlinespace[1pt]
\midrule
\addlinespace[3pt]
MATH500        & math  & 1 & 7 & 8 & --- & 1.00 \\
AMC23          & math  & 0 & 0 & 0 & --- & .14 \\
AIME24         & math  & 0 & 0 & 0 & --- & .21 \\
AIME25         & math  & 0 & 0 & 0 & --- & .20 \\
Minerva        & math  & 0 & 0 & 0 & --- & .02 \\
GSM8K          & math  & 0 & 0 & 0 & --- & .06 \\
HumanEval+     & code  & 0 & 0 & 0 & 0 & .02 \\
MBPP           & code  & 0 & 0 & 0 & 0 & .07 \\
Logic-ordering & logic & 0 & 0 & 0 & --- & .35 \\
Logic-graph    & logic & 0 & 0 & 0 & --- & .01 \\
\bottomrule
\end{tabular}
\caption{Train/evaluation contamination audit. ``near'' = word $8$-gram Jaccard $\ge.8$; ``assert'' = shared normalized test assertions (code only). $\max J$ is the largest Jaccard against any training item.}
\label{tab:decontam}

\end{table}

\paragraph{Hyperparameters (primary experiment, Qwen2.5-7B).}
On the primary model all methods share the same recipe, differing only in the regularizer/KL (Table~\ref{tab:hparam}); the second-model Llama-3.2-3B setup and its differences are given in Appendix~\ref{sec:app_llama}.

\begin{table}[H]
\centering

\small
\begin{tabular}{ll}
\toprule
\addlinespace[1pt]
Item & Value \\
\addlinespace[1pt]
\midrule
\addlinespace[3pt]
Learning rate / schedule & $2\times10^{-6}$ / cosine \\
Max generation length & 512 \\
Samples per query $G$ & 4 \\
Global batch & 32 completions (8 prompts $\times$ 4) \\
Training steps & 1500 (report step-1000 at main table) \\
Random seed & 0 (training and rollout sampling) \\
KL coefficient $\beta$ & 0 or 0.01 (two paired axes) \\
CD-RFT weight $\lambda$ & 1 \\
CD-RFT probes $n$ & 3 / family \\
CD-RFT target concentration $\tau$ (\%) & 70 \\
CD-RFT inner-loop cap & 12 steps / training step \\
CD-RFT difference step $\epsilon$ & 0.05 \\
\bottomrule
\end{tabular}
\caption{Hyperparameters (primary experiment, Qwen2.5-7B). All arms share seed 0, so each paired comparison holds initialization and data order fixed.}
\label{tab:hparam}

\end{table}

\paragraph{Software and hardware.}
Sampling and evaluation use vLLM 0.23.0 with the rollout GPU-memory utilization set to 0.6; training, probing, and evaluation are each performed on a single GPU, on NVIDIA A100 (80GB) and NVIDIA RTX PRO 6000 (Blackwell) GPUs. Table~\ref{tab:stack} lists the software stack.

\begin{table}[H]
\centering

\small
\setlength{\tabcolsep}{6pt}
\begin{tabular}{ll}
\toprule
\addlinespace[1pt]
Component & Version \\
\addlinespace[1pt]
\midrule
\addlinespace[3pt]
OS / Python & Linux / 3.11 \\
PyTorch (CUDA) & 2.11 (CUDA 12), bf16 \\
Transformers & HuggingFace \\
RL trainer & TRL 1.6.0 (GRPO) \\
Sampling / eval & vLLM 0.23.0 \\
$\Bshared$ probe backend & eager attention (2nd-order) \\
\bottomrule
\end{tabular}
\caption{Software stack.}
\label{tab:stack}

\end{table}

\section{Additional Mechanistic Results}\label{sec:app_mech}

\subsection{Activation-Level Metrics}\label{sec:zhang}

Section~\ref{sec:mechanism} reports that two of the three activation-level metrics of \citet{zhang2025reinforcement} reproduce while the third does not; Tables~\ref{tab:zhang_actkurt} and~\ref{tab:zhang_info} give the underlying numbers. All three are computed from EAP edge attributions \citep{syed2024attribution} on the both-correct subset of the base model and the method under test, the standard attribution-difference protocol. Because each pairing induces a slightly different both-correct subset, and the Base row shown is the subset baseline of the $\beta{=}0$ pairing, the tables are read within a column.

\begin{table}[H]
\centering

\small
\setlength{\tabcolsep}{5pt}
\begin{tabular}{lcccccc}
\toprule
\addlinespace[1pt]
& \multicolumn{3}{c}{Act.\ Intensity $\uparrow$} & \multicolumn{3}{c}{Dist.\ Kurtosis $\downarrow$} \\
\cmidrule(lr){2-4}\cmidrule(lr){5-7}
Method & MATH500 & MBPP & graph & MATH500 & MBPP & graph \\
\addlinespace[1pt]
\midrule
\addlinespace[3pt]
Base & .00098 & .00157 & .00159 & 359 & 542 & 277 \\
GRPO ($\beta{=}0$) & .00170 & .00310 & .00263 & 198 & 479 & 133 \\
\textbf{CD-RFT ($\beta{=}0$)} & .00137 & .00165 & .00191 & 206 & 499 & 197 \\
GRPO ($\beta{=}0.01$) & .00146 & .00260 & .00379 & 178 & 424 & 191 \\
\textbf{CD-RFT ($\beta{=}0.01$)} & .00147 & .00273 & .00297 & 204 & 433 & 255 \\
\bottomrule
\end{tabular}
\caption{The two activation-level signatures of \citet{zhang2025reinforcement} that reproduce robustly: activation intensity rises and distribution kurtosis falls, for every method on every domain (12/12 each), relative to the base model. graph = Logic-graph.}
\label{tab:zhang_actkurt}

\end{table}

Activation intensity rises and kurtosis falls for every method on every domain (Table~\ref{tab:zhang_actkurt}). The signature that reinforcement learning raises circuit usage and flattens its magnitude distribution therefore reproduces not only for vanilla GRPO but for all four trained models, and forms the starting point on which the analysis builds.

\begin{table}[H]
\centering

\small
\setlength{\tabcolsep}{6pt}
\begin{tabular}{lccc}
\toprule
\addlinespace[1pt]
Method & MATH500 & MBPP & graph \\
\addlinespace[1pt]
\midrule
\addlinespace[3pt]
Base & .271$\pm$.027 & .255$\pm$.015 & .491$\pm$.024 \\
GRPO ($\beta{=}0$) & .008$\pm$.030 & .015$\pm$.004 & .496$\pm$.057 \\
\textbf{CD-RFT ($\beta{=}0$)} & \textbf{.219$\pm$.048} & \textbf{.270$\pm$.019} & \textbf{.568$\pm$.037} \\
GRPO ($\beta{=}0.01$) & .142$\pm$.036 & .020$\pm$.006 & .009$\pm$.095 \\
\textbf{CD-RFT ($\beta{=}0.01$)} & .036$\pm$.058 & .011$\pm$.006 & .080$\pm$.097 \\
\bottomrule
\end{tabular}
\caption{Information complexity, the direction-unstable third metric. Bootstrap standard deviations ($B{=}2000$) on the both-correct subset of each pairing ($n{=}73$--$100$ on MATH500/MBPP, $n{=}19$--$23$ on Logic-graph). graph = Logic-graph, which carries the logic-domain evidence.}
\label{tab:zhang_info}

\end{table}

Information complexity (Table~\ref{tab:zhang_info}) does not behave this way. On MATH500 and MBPP it stays high under \CDRFT{} ($\beta{=}0$) while collapsing to near zero under the matched GRPO ($\beta{=}0$), .219$\pm$.048 against .008$\pm$.030 and .270$\pm$.019 against .015$\pm$.004, gaps of several bootstrap standard deviations; the same ordering reappears on Logic-graph (.568$\pm$.037 against .496$\pm$.057). Yet the metric does not track capability: GRPO ($\beta{=}0.01$) is stronger than the base on every domain of Table~\ref{tab:main}, while its information complexity on MBPP falls from .255 to .020, a drop of more than $90\%$. A metric that splits this way between two equally healthy models cannot on its own say what reinforcement learning changed, which is what motivates moving to the control axis. The well-powered form of this reading is the $\beta{=}0$ pairing, where both arms sit high enough on mathematics and code to separate.

\subsection{Random Reference for $\Bshared$}\label{sec:app_mc}

The lower bound $1/M$ of Proposition~\ref{prop:bounds} is attained only when the rows are exactly orthogonal with equal norms, which random rows in finite dimension never are, so $1/M$ is a bound and not the value a random control matrix would produce. We therefore estimate the random reference by simulation at the shape used throughout ($M{=}3$ families, $K{=}56$ sublayer gates): drawing i.i.d.\ Gaussian rows over $50{,}000$ trials gives
\[
\Bshared^{\mathrm{rand}}=42.8\pm3.4,\qquad p_{95}=49.0,\quad p_{99}=51.9,
\]
against the bound $1/M=33.3$. This reference depends only on $(M,K)$, hence is shared by every arm and every model in Figure~\ref{fig:crossmodel}. All five base models of Figure~\ref{fig:crossmodel} exceed the $p_{99}$ of the random reference: Qwen2.5-3B $59.2$, Qwen2.5-1.5B $64.3$, Llama-3.1-8B $71.3$, Llama-3.2-3B $80.5$, Qwen2.5-7B $83.4$, i.e.\ between $+16$ and $+41$ above it. These are the exact $\Bshared(\Cmat)$ values of that figure, on the same protocol as the Base row of Table~\ref{tab:main}. The cross-model reading of Section~\ref{sec:mechanism} therefore holds against a simulated random reference and not merely against the bound.

$\Bshared$ also responds to row-norm heterogeneity, not to direction alone. Proposition~\ref{prop:bounds}(iv) gives invariance under global scaling $\Cmat\mapsto c\Cmat$ and under orthogonal gate reparameterization $\Cmat\mapsto\Cmat Q$, but not under \emph{per-row} rescaling, and each row of $\Cmat$ carries its own normalizer $1/\ell_m$, whose scale differs across families with target length. This is why every comparison we report is made at a fixed protocol and, for the method claim, as a same-seed same-probe paired difference (Figure~\ref{fig:mechanism}), both of which hold the row-norm structure of the estimator fixed; the cross-model panel, whose arms do not share a protocol, is read only by ordering.

\subsection{Direction Versus Row Norm}\label{sec:app_rownorm}

Holding the protocol fixed still leaves a question the paired difference cannot answer on its own, so we separate the two contributions directly. Writing $d_m=\lVert\Cmat_{m,\cdot}\rVert_2^2$, $D=\operatorname{diag}(d)$ and $R$ for the matrix of row cosines, the Gram matrix factors as $G_{\Cmat}=D^{1/2}RD^{1/2}$, and we report alongside $\Bshared$ two quantities computed from the same $\Cmat$:
\begin{equation}\label{eq:bdir}
\Bdir(\Cmat)=\frac{\lambda_{\max}(R)}{M},
\qquad
\Bnorm(\Cmat)=\max_m \frac{d_m}{\sum_{m'} d_{m'}}.
\end{equation}
$\Bdir$ normalizes each row to unit length first, so it is invariant to per-row rescaling and measures direction sharing alone; it keeps the range $[1/M,1]$, attaining $1/M$ exactly when the rows are pairwise orthogonal and $1$ when they are collinear. $\Bnorm$ is what $\Bshared$ would equal if the rows were exactly orthogonal, i.e.\ the row-norm contribution alone. The two are not additive, so we plot them side by side rather than as a decomposition of $\Bshared$.

\begin{figure}[tp]
\centering
\includegraphics[width=\linewidth]{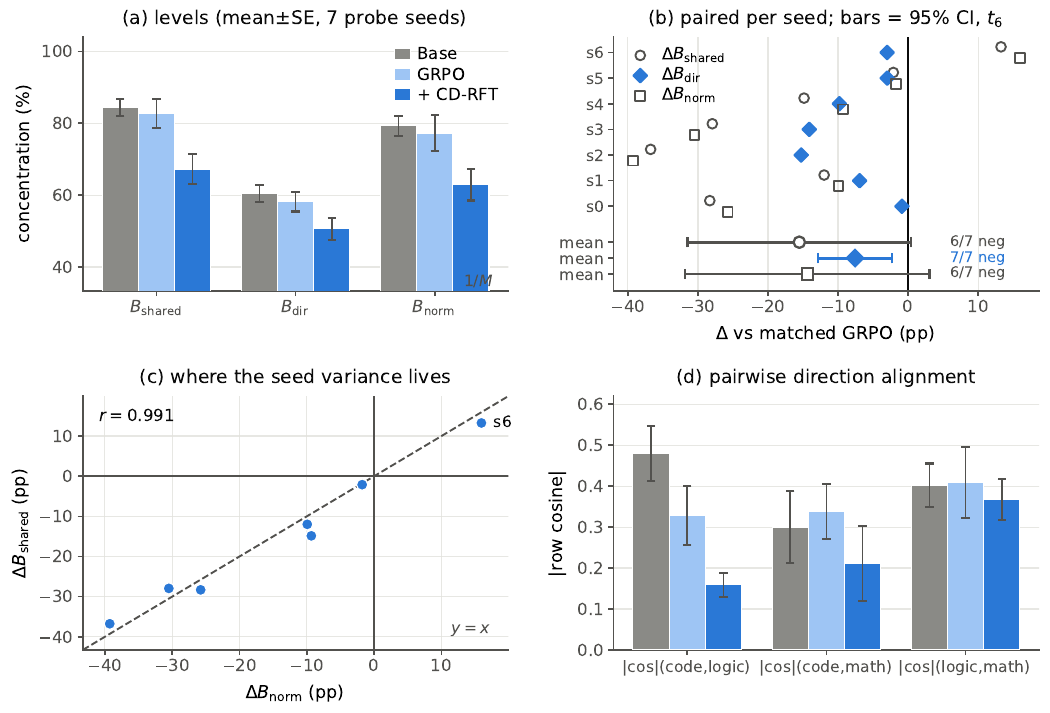}
\caption{\CDRFT{} lowers direction sharing. \textbf{(a)} All three measures at the $\beta{=}0$ arms; $\Bdir$ sits far below $\Bshared$ at every arm, so $\Bshared$ overstates how collinear the control directions are. \textbf{(b)} Paired per seed against the matched GRPO: $\Delta\Bdir=-7.6\pm2.1$ is negative on $7/7$ seeds and its $95\%$ interval excludes zero, while the interval on $\Delta\Bshared$ does not. \textbf{(c)} The seed-to-seed variation of $\Delta\Bshared$ lies on the diagonal against $\Delta\Bnorm$ ($r{=}0.99$): the row-norm term is what injects it, and the one seed that flips $\Delta\Bshared$ positive (s6) still has $\Delta\Bdir<0$. \textbf{(d)} Pairwise direction alignment in magnitude; the code--logic cosine falls from $0.48$ to $0.16$, and \CDRFT{} is the lowest on all three pairs.}
\label{fig:rownorm}
\end{figure}

Figure~\ref{fig:rownorm} reports both on the $\beta{=}0$ pair, over the seven probe seeds of Table~\ref{tab:main}. Direction sharing falls under \CDRFT{}: the paired $\Delta\Bdir$ is $-7.6\pm2.1$ points and negative on every seed, against $6/7$ for $\Delta\Bshared$, and its $95\%$ interval (Student $t$, six degrees of freedom) excludes zero where the interval on $\Delta\Bshared$ does not. Removing the row norms therefore sharpens the effect rather than dissolving it, because the row-norm term carries most of the seed-to-seed variance: $\Delta\Bshared$ tracks $\Delta\Bnorm$ across seeds at $r=0.99$, and the single seed on which $\Delta\Bshared$ turns positive is one where $\Delta\Bdir$ remains negative. The row cosines move with it: in magnitude \CDRFT{} is the lowest of the three arms on every family pair, the code--logic pair dropping from $0.48$ to $0.16$. Magnitudes are the relevant summary because a signed average over seeds cancels, and anti-alignment raises $\lambda_{\max}(R)$ just as alignment does.

Because $\lVert\Cmat_{m,\cdot}\rVert_2=\lVert\partial_g\ell_m|_{g=\1}\rVert_2/\lvert\ell_m\rvert$, a regularizer could in principle lower $\Bshared$ by inflating $\lvert\ell_m\rvert$ on the family that dominates the energy budget, which would work against the reward flux rather than with it. This is not what happens. On the family holding the largest share at baseline, $\lvert\ell_m\rvert$ rises by $12.5\%$ from the matched GRPO while its row norm falls by $63\%$, so the implied gate sensitivity $\lVert\partial_g\ell_m\rVert_2$ falls by $59\%$: the normalizer accounts for about a fifth of the change and the rest is a genuine reduction in how strongly that family's reward flux responds to the gates.

\subsection{Paired Verification and Cross-Model Generality}

Absolute $\Bshared(\Cmat)$ depends on which probe samples are drawn, so Figure~\ref{fig:mechanism} re-reads the measurement of Table~\ref{tab:main} as a same-seed, same-probe paired difference, which cancels the shared probe noise. Figure~\ref{fig:crossmodel} then checks that the bottleneck is not an artifact of the single model we intervene on, and Figure~\ref{fig:edgepr} repeats the distributedness check of Figure~\ref{fig:proxydist}(b) at the finer EAP edge granularity. The participation ratio of a control row is $\mathrm{PR}(C_{m,\cdot})=\big(\sum_k|C_{m,k}|\big)^2\big/\sum_k C_{m,k}^2$, the effective number of gates it spreads over: $\mathrm{PR}=1$ when a single gate carries the row and $\mathrm{PR}=K$ when all gates carry it equally.

\begin{figure}[tp]
\centering
\includegraphics[width=\textwidth]{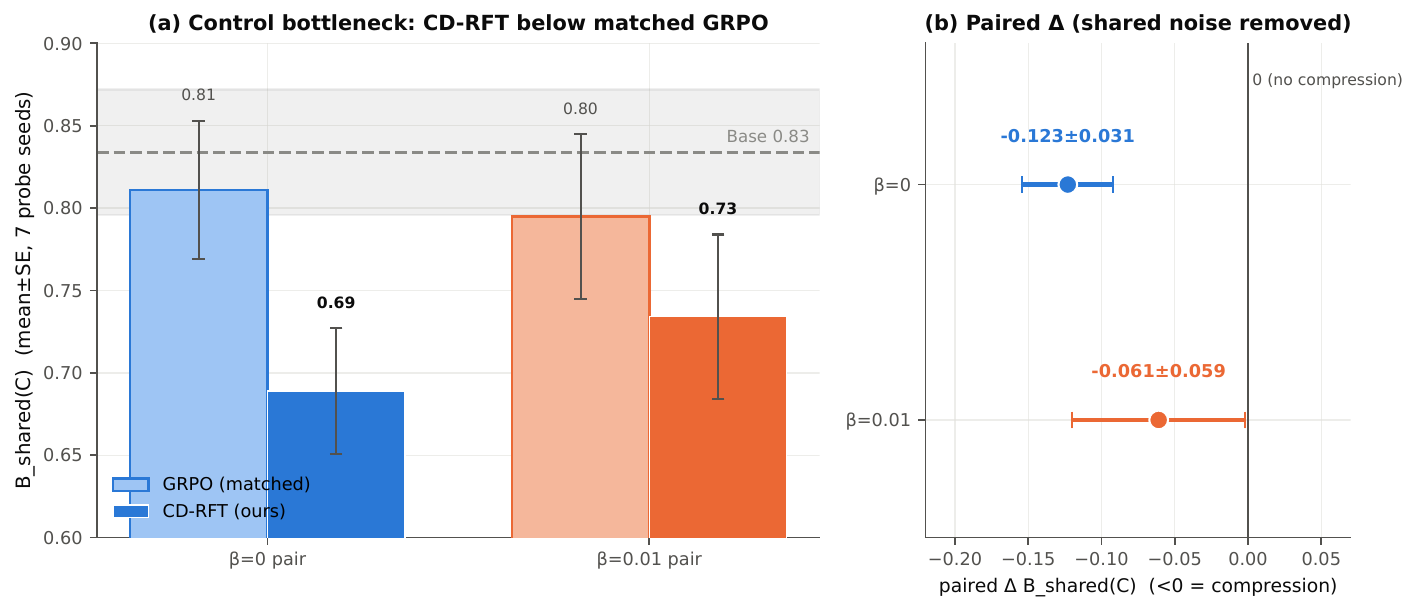}
\caption{Forest plot of the paired difference $\Delta = \Bshared(\Cmat)^{\text{CD-RFT}} - \Bshared(\Cmat)^{\text{GRPO}}$ under the same seed and probes. \CDRFT{} ($\beta{=}0$) shows a robust negative shift ($\Delta = -12.3\pm3.1$, negative in 6 of 7 seeds); \CDRFT{} ($\beta{=}0.01$) is consistent in direction but smaller.}
\label{fig:mechanism}
\end{figure}

\begin{figure}[tp]
\centering
\includegraphics[width=\textwidth]{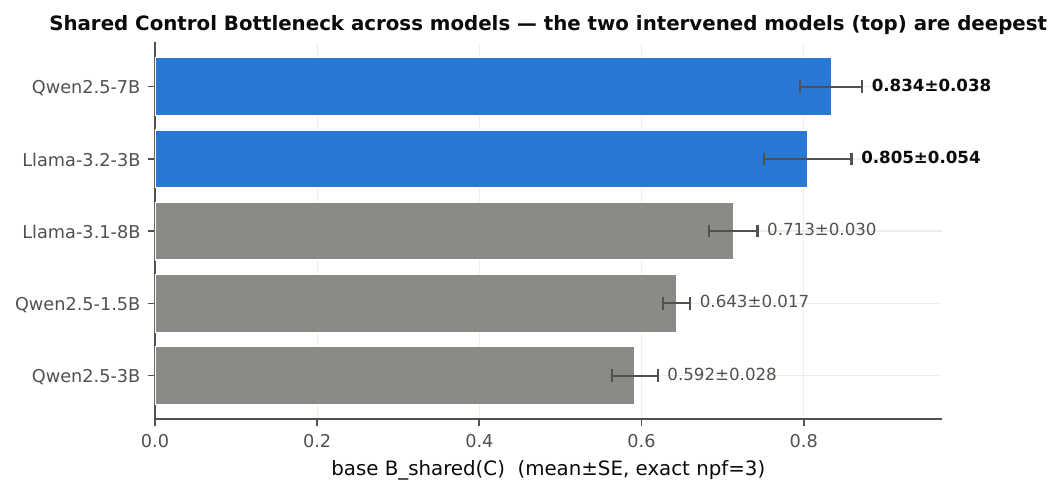}
\caption{$\Bshared(\Cmat)$ for five base models under the same multi-task probe and exact protocol. All lie above the $99$th percentile of the simulated random reference of Appendix~\ref{sec:app_mc}; the two models we train \CDRFT{} on (Qwen2.5-7B and Llama-3.2-3B, in blue) have the deepest bottlenecks, and hence the most decoupling headroom.}
\label{fig:crossmodel}
\end{figure}

\begin{figure}[tp]
\centering
\includegraphics[width=\textwidth]{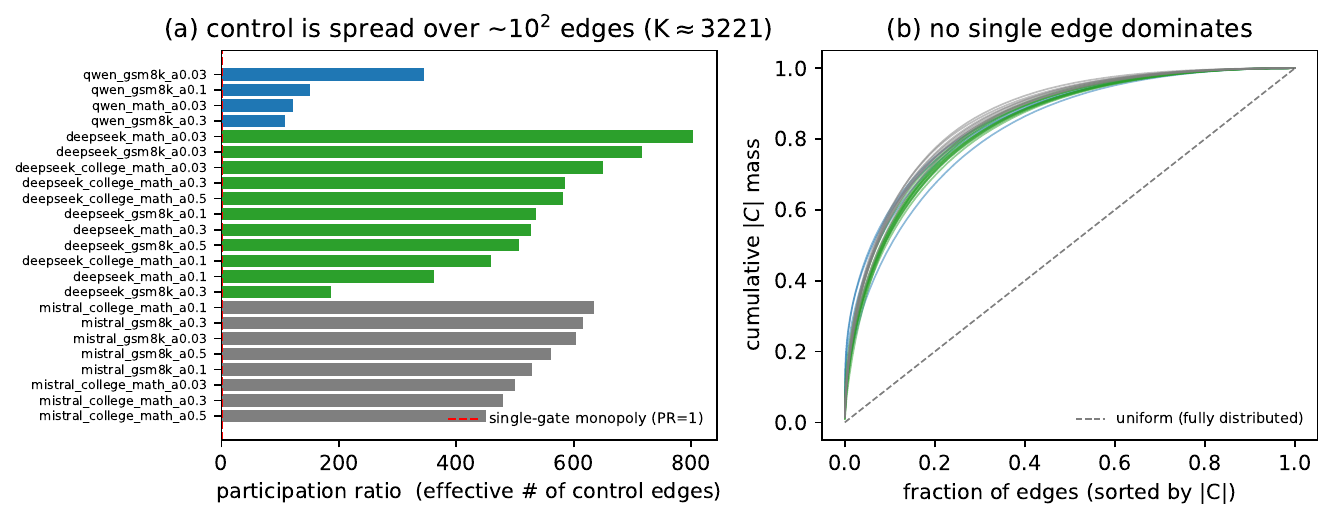}
\caption{Distributedness of Figure~\ref{fig:proxydist}(a) at the finer EAP edge granularity ($K\approx3221$) on the offline Part-I control vectors. \textbf{(a)} Participation ratio and \textbf{(b)} cumulative $|C|$ mass give the same no-monopoly conclusion; the ratio is larger here only because the edge decomposition is finer, and aggregation into sublayers is non-orthogonal, so the sublayer figure reports the on-granularity value.}
\label{fig:edgepr}
\end{figure}

\section{Additional Capability Results}\label{sec:app_cap}

\subsection{Full Per-Benchmark \passk{} Ladder}

Table~\ref{tab:ladder} gives the per-benchmark results behind the domain summary of Table~\ref{tab:main} (step-1000, five matched methods). The coverage column reports the highest $k$ of each set: pass@256 for the math anchors, pass@16 for MATH500 and ordering, pass@64 for code and Logic-graph. AMC23 (40 problems) is near-saturated there (pass@256 $\approx 100$ for every method), which treats all methods alike; the hard-set coverage advantage is carried by AIME24/AIME25/Minerva.

\begin{table}[H]
\centering

\small
\setlength{\tabcolsep}{8pt}
\begin{tabular}{llcccccc}
\toprule
\addlinespace[1pt]
Domain & Bench (metric) & Base & G($\beta0$) & \textbf{CD($\beta0$)} & G($\beta.01$) & \textbf{CD($\beta.01$)} \\
\addlinespace[1pt]
\midrule
\addlinespace[3pt]
math & MATH500 p@1 & 48.5 & 57.5 & \textbf{57.7} & 53.7 & 54.3 \\
& MATH500 p@16 & 89.0 & 89.4 & 89.4 & 89.4 & \textbf{90.0} \\
math & AMC23 p@1 & 30.9 & 35.7 & \textbf{37.3} & 33.1 & 34.9 \\
& AMC23 p@256 & 97.5 & 100.0 & \textbf{100.0} & 97.5 & \textbf{100.0} \\
math & AIME24 p@1 & 5.9 & 7.2 & \textbf{7.2} & 5.6 & 5.9 \\
& AIME24 p@256 & 63.3 & 50.0 & \textbf{63.3} & 60.0 & \textbf{63.3} \\
math & AIME25 p@1 & 3.9 & 4.0 & 4.1 & 3.6 & \textbf{4.1} \\
& AIME25 p@256 & 43.3 & 56.7 & 53.3 & 43.3 & \textbf{63.3} \\
math & Minerva p@1 & 10.7 & 13.0 & \textbf{13.6} & 11.7 & 12.2 \\
& Minerva p@256 & 63.2 & 64.3 & \textbf{65.1} & 62.5 & 63.6 \\
code & HE+ p@1 & 66.7 & 68.4 & \textbf{70.5} & 69.7 & 70.3 \\
& HE+ p@64 & 96.3 & 97.0 & \textbf{98.2} & 95.7 & 97.0 \\
code & MBPP p@1 & 42.3 & 47.6 & \textbf{50.6} & 48.4 & 49.6 \\
& MBPP p@64 & 87.8 & 88.8 & 88.6 & \textbf{89.8} & 89.2 \\
logic & ordering p@1 & 47.1 & 54.3 & \textbf{54.8} & 51.3 & 52.0 \\
& ordering p@16 & 94.6 & \textbf{96.2} & 95.7 & 94.2 & 96.1 \\
logic & Logic-graph p@1 & 19.3 & 25.0 & \textbf{26.1} & 21.7 & 20.8 \\
& Logic-graph p@64 & 87.0 & 92.0 & \textbf{94.0} & 90.0 & 91.0 \\
\bottomrule
\end{tabular}
\caption{Per-benchmark $\passk$ ladder underlying Table~\ref{tab:main}. All entries are percentages.}
\label{tab:ladder}

\end{table}

\subsection{Per-Benchmark Paired Gains and Coverage}

Figure~\ref{fig:gains} resolves the domain summary of Table~\ref{tab:main} into the per-benchmark paired gain of \CDRFT{} over its matched GRPO baseline, and Figure~\ref{fig:passk} traces how the coverage advantage of the with-KL variant grows with $k$ on the hard sets.

\begin{figure}[tp]
\centering
\includegraphics[width=\textwidth]{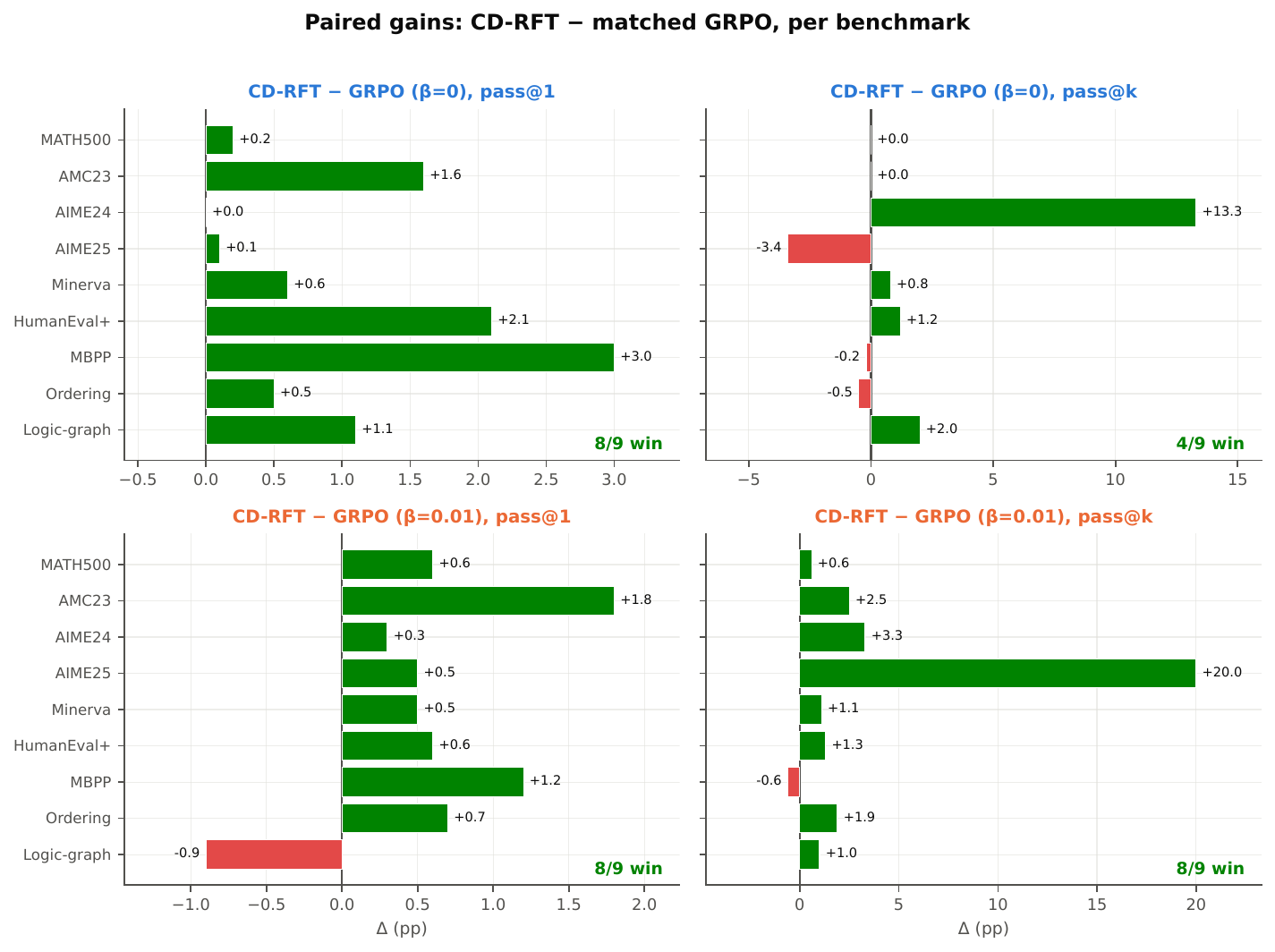}
\caption{Per-benchmark paired gains, \CDRFT{} minus its matched GRPO baseline, across all nine benchmarks in four panels ($\beta{=}0$ / $\beta{=}0.01$ crossed with $\passone$ / $\passk$). The $\beta{=}0$ axis leads on $\passone$ (8/9); the $\beta{=}0.01$ axis leads on $\passk$ coverage (8/9).}
\label{fig:gains}
\end{figure}

\begin{figure}[tp]
\centering
\includegraphics[width=\textwidth]{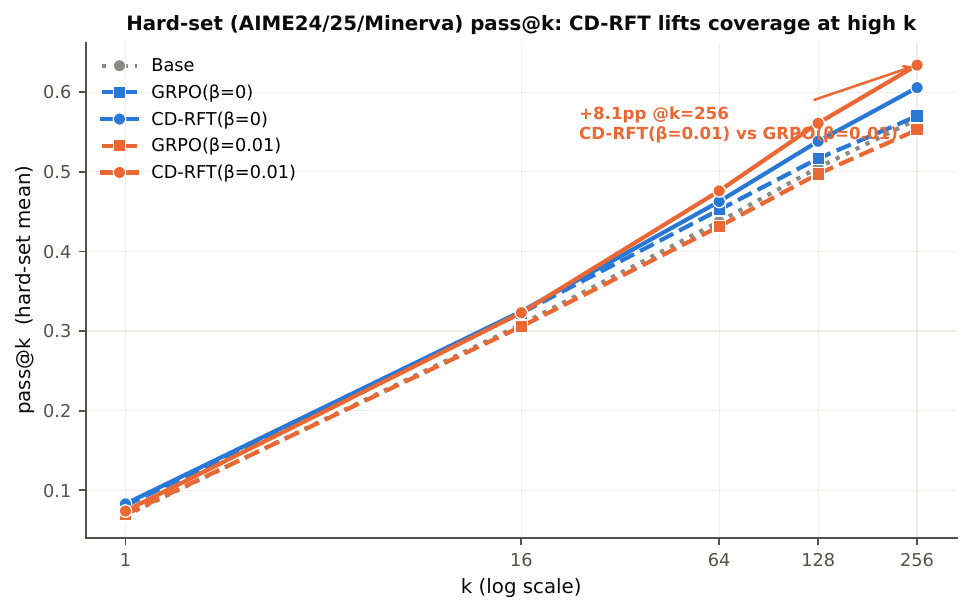}
\caption{Hard-set (AIME24/AIME25/Minerva) mean $\passk$ as a function of $k$ (log scale). \CDRFT{} ($\beta{=}0.01$) widens its coverage advantage over GRPO ($\beta{=}0.01$) at large $k$, reaching +8.1pp at pass@256.}
\label{fig:passk}
\end{figure}

\section{Full Temperature-Sweep Results}\label{sec:app_temp}

We repeat the evaluation at four temperatures (the sweep of each domain including its main-table working temperature) and count the fraction of paired $\times$ temperature cells (8 in total) in which ``\CDRFT{} $\geq$ matched RL'' holds (Table~\ref{tab:temp}; the trend is in Figure~\ref{fig:tempscan}).

\begin{table}[H]
\centering

\small
\begin{tabular}{lcc}
\toprule
\addlinespace[1pt]
Domain & pass@1 & pass@k \\
\addlinespace[1pt]
\midrule
\addlinespace[3pt]
Math hard set & \textbf{8/8} & 7/8 \\
Code & \textbf{8/8} & 7/8 \\
Logic & 3/8 & 5/8 \\
\bottomrule
\end{tabular}
\caption{Temperature robustness: paired-ordering win rate (\CDRFT{} $\geq$ matched RL, out of 8).}
\label{tab:temp}

\end{table}

On the math hard set and on code, the $\passone$ advantage of \CDRFT{} holds across all four temperatures (8/8 each) and $\passk$ holds at 7/8. Lowering the temperature raises $\passone$ and raising it raises $\passk$, yet the method ordering does not flip. The logic domain shows lower ordering consistency ($\passone$ 3/8), in line with the mild gain margin of this domain in the capability comparison, while the featured configuration \CDRFT{} ($\beta{=}0$) shows no collapse on any domain.

\begin{figure}[tp]
\centering
\includegraphics[width=\textwidth]{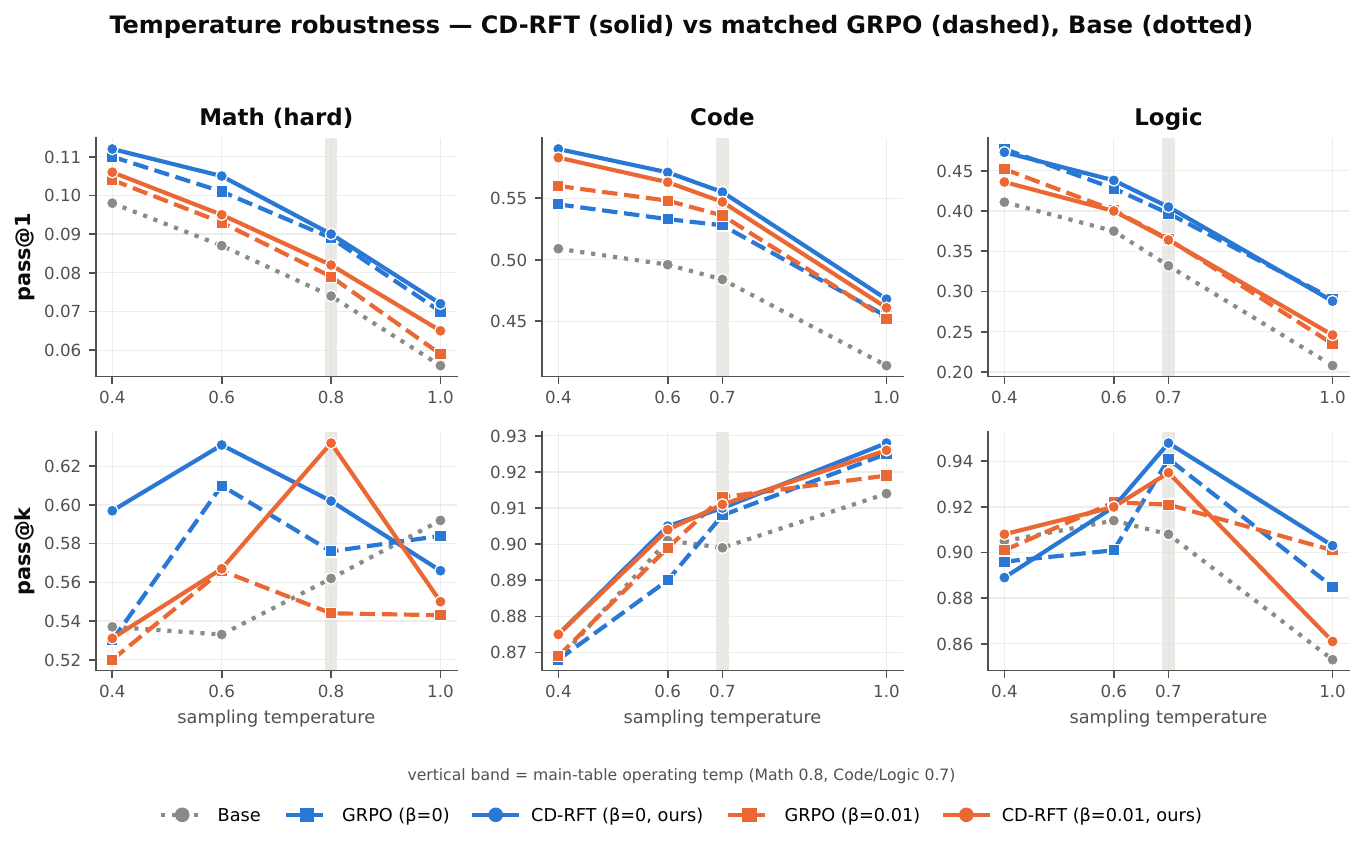}
\caption{Sampling-temperature sweep. Lowering the temperature raises $\passone$ and raising it raises $\passk$; the method ordering on the math hard set and code does not flip across the four temperatures.}
\label{fig:tempscan}
\end{figure}

\section{Proxy-Target Sensitivity}\label{sec:app_target}
We sweep the target concentration $\tau\in\{55,65,70\}$ with every other setting fixed at the main recipe (Qwen2.5-7B, step-1000, matched protocol), so $\tau$ is the sole variable. Table~\ref{tab:target} gives $\passone$ and the coverage $\passk$ for each benchmark.

The method is insensitive to $\tau$ across a broad band: the two looser settings, $70$ and $65$, lie within a small margin of each other on every benchmark and simply exchange which axis they favour. The main working point $70$ takes $\passone$ on the discriminative benchmarks (MATH500, AMC23, code, Logic-graph), while the slightly tighter $65$ trades that small deficit for large-$k$ coverage on the harder or more open-ended sets (MATH500, Minerva, code, ordering). Only over-compression, $\tau{=}55$, breaks the pattern: it trails on both axes across essentially all discriminative benchmarks, marking a floor below which tightening the constraint costs single-shot accuracy without buying coverage. $\tau$ should therefore be kept in the loose band, its exact value chosen by whether the deployment weights single-shot accuracy ($70$) or coverage ($65$).

\begin{table}[H]
\centering

\small
\setlength{\tabcolsep}{4pt}
\begin{tabular}{lcc}
\toprule
\addlinespace[1pt]
Benchmark & $\passone$ ($\tau{=}55/65/70$) & $\passk$ ($\tau{=}55/65/70$) \\
\addlinespace[1pt]
\midrule
\addlinespace[3pt]
MATH500     & 54.7 / 56.4 / \textbf{57.7} & 88.4 / \textbf{90.4} / 89.4 \\
AMC23       & 33.7 / 34.9 / \textbf{37.3} & \textbf{100.0} / \textbf{100.0} / \textbf{100.0} \\
AIME24      & 6.3 / 5.8 / \textbf{7.2} & \textbf{63.3} / 60.0 / \textbf{63.3} \\
AIME25      & 3.7 / 3.5 / \textbf{4.1} & \textbf{60.0} / 53.3 / 53.3 \\
Minerva     & 12.6 / \textbf{14.5} / 13.6 & 64.3 / \textbf{66.2} / 65.1 \\
code        & 49.7 / 51.5 / \textbf{55.5} & 90.2 / \textbf{91.7} / 91.0 \\
ordering    & 26.9 / \textbf{28.7} / 28.7 & 71.0 / \textbf{73.0} / 70.0 \\
Logic-graph & 23.2 / 26.1 / \textbf{26.1} & 89.0 / 90.0 / \textbf{94.0} \\
\bottomrule
\end{tabular}
\caption{Proxy-target sweep on Qwen2.5-7B ($\passone$ / coverage $\passk$, for $\tau{=}55/65/70$). Coverage $k$ is the highest sampled per set (MATH500/ordering p@16 and p@64 respectively, math anchors p@256, code/Logic-graph p@64). Bold marks the best of the three. Logic-graph uses the mid-difficulty split, matching the main table. All entries are percentages.}
\label{tab:target}

\end{table}

\section{Cross-Checkpoint Robustness}\label{sec:app_ckpt}

The main comparison reports one checkpoint (step-1000), fixed under the evaluation compute budget before any benchmark was scored and applied identically to all five arms; it is not the final step of training. Figure~\ref{fig:ckpt} sweeps the $\beta{=}0$ pair across checkpoints: \CDRFT{} ($\beta{=}0$) sits at or above its matched GRPO at every checkpoint on both axes.

Table~\ref{tab:final1500} gives the per-benchmark ladder at the \emph{final} step-1500 checkpoint under the full main-table protocol, the same benchmarks, sample counts, and temperatures as Table~\ref{tab:main}, applied identically to both arms, with the untrained base as an anchor.

\begin{table}[H]
\centering

\small
\setlength{\tabcolsep}{8pt}
\begin{tabular}{llccc}
\toprule
\addlinespace[1pt]
Domain & Bench (metric) & Base & G($\beta0$) & \textbf{CD($\beta0$)} \\
\addlinespace[1pt]
\midrule
\addlinespace[3pt]
math & MATH500 p@1 & 48.5 & 57.3 & \textbf{57.4} \\
& MATH500 p@16 & 89.0 & \textbf{92.0} & 90.4 \\
math & AMC23 p@1 & 30.9 & 35.6 & \textbf{37.3} \\
& AMC23 p@256 & 97.5 & 97.5 & \textbf{100.0} \\
math & AIME24 p@1 & 5.9 & 6.6 & \textbf{7.1} \\
& AIME24 p@256 & 63.3 & \textbf{66.7} & 56.7 \\
math & AIME25 p@1 & 3.9 & \textbf{4.2} & 4.0 \\
& AIME25 p@256 & 43.3 & 46.7 & \textbf{53.3} \\
math & Minerva p@1 & 10.7 & 12.9 & \textbf{13.6} \\
& Minerva p@256 & 63.2 & 64.0 & \textbf{66.5} \\
code & HE+ p@1 & 66.7 & 69.3 & \textbf{69.3} \\
& HE+ p@64 & 96.3 & 97.0 & \textbf{97.6} \\
code & MBPP p@1 & 42.3 & 47.7 & \textbf{50.5} \\
& MBPP p@64 & 87.8 & \textbf{88.8} & \textbf{88.8} \\
logic & ordering p@1 & 47.1 & 55.2 & \textbf{55.4} \\
& ordering p@16 & 94.6 & \textbf{95.5} & 94.2 \\
logic & Logic-graph p@1 & 19.3 & \textbf{24.9} & 24.5 \\
& Logic-graph p@64 & 87.0 & 88.0 & \textbf{90.0} \\
\addlinespace[1pt]
\midrule
\addlinespace[3pt]
\multicolumn{2}{l}{overall p@1} & 35.9 & 40.6 & \textbf{41.2} \\
\multicolumn{2}{l}{overall \passk{}} & 85.6 & 86.8 & \textbf{87.2} \\
\bottomrule
\end{tabular}
\caption{Per-benchmark ladder at the final step-1500 checkpoint, $\beta{=}0$ pair, full main-table protocol (cf.\ Table~\ref{tab:ladder} at step-1000). G = GRPO, CD = \CDRFT{}; Base is untrained and step-independent. Bold marks the row-wise best. All entries are percentages.}
\label{tab:final1500}

\end{table}

\begin{figure}[tp]
\centering
\includegraphics[width=\textwidth]{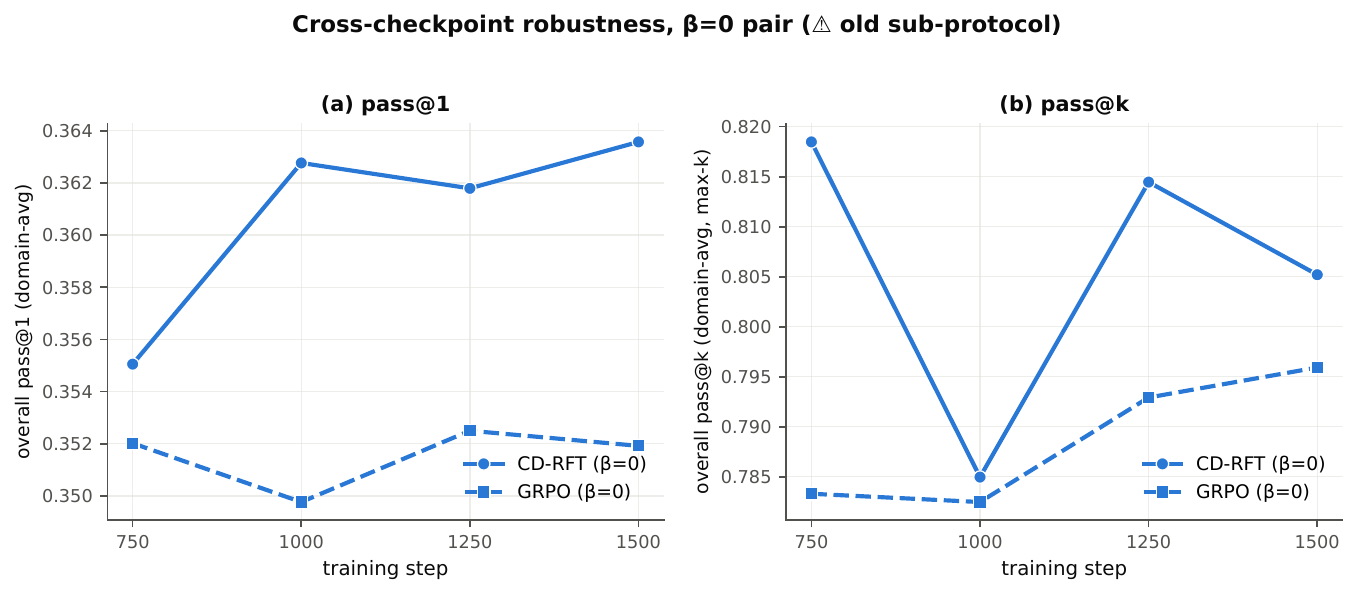}
\caption{Cross-checkpoint robustness for the $\beta{=}0$ pair: overall $\passone$ and $\passk$ versus training step. \CDRFT{} ($\beta{=}0$) stays at or above GRPO ($\beta{=}0$) at every checkpoint. The mathematics domain is scored here on subsampled MATH500 and Minerva, so the figure is read by ordering; Table~\ref{tab:final1500} gives the final checkpoint under the full main-table protocol.}
\label{fig:ckpt}
\end{figure}

\section{Second Model: Llama-3.2-3B}\label{sec:app_llama}

This appendix details the second-model transfer experiment of Section~\ref{sec:robustness}: the setup differences from the primary Qwen2.5-7B run, the full per-benchmark $\passk$ ladders, and the sampling-temperature behaviour.

\paragraph{Setup.} The method, the training data (the balanced three-family mixture of Table~\ref{tab:data}), and the benchmarks are shared with the primary experiment. The recipe is adapted for the weaker base model, and every adaptation is applied identically to the two trained arms, so the GRPO\,$\to$\,\CDRFT{} comparison stays controlled; Table~\ref{tab:llama_hp} lists the differences.

\textbf{Evaluation protocol.} The trained arms are evaluated few-shot ($K{=}2$, $K{=}4$ for code) with the same demonstrations, because the plain Llama-3.2-3B template otherwise holds them below the ability they have already reached. The \emph{Base} column is instead the \emph{zero-shot} untrained model, which sits at the reward floor and degenerates into format-violating and repeated generations---the starting point from which reinforcement learning departs. Since the two protocols differ, \textbf{the paired $\Delta$ is computed only between GRPO and \CDRFT{}; Base serves as an untrained-floor reference and is not differenced against them}.

\textbf{On the target $\tau$.} The value $\tau{=}50$ was fixed before this line was evaluated, neither transferred from the primary model nor tuned against benchmark scores, and it enters only the \CDRFT{} arm; by the proxy-target sweep of Section~\ref{sec:robustness} the comparison does not rest on its exact value.

\begin{table}[H]
\centering

\small
\begin{tabular}{lll}
\toprule
\addlinespace[1pt]
Item & Qwen2.5-7B & Llama-3.2-3B \\
\addlinespace[1pt]
\midrule
\addlinespace[3pt]
KL $\beta$ & 0 and 0.01 & 0 only \\
Prompt template & plain (\texttt{\textbackslash boxed}) & R1 think/answer \\
Few-shot & 0 & 2 (code 4) \\
Rollout temperature & 0.6 & 0.5 \\
Max generation length & 512 & 1024 \\
Target concentration $\tau$ (\%) & 70 & 50 \\
Reported checkpoint & step-1000 & step-1122 \\
\bottomrule
\end{tabular}
\caption{Llama-3.2-3B setup, as differences from the primary Qwen2.5-7B recipe (Table~\ref{tab:hparam}). All other knobs ($\lambda{=}1$, $\epsilon{=}0.05$, inner-loop cap $12$, learning rate/schedule) are unchanged and shared across arms.}
\label{tab:llama_hp}

\end{table}

\paragraph{Mechanism: control-sharing bottleneck.}
Table~\ref{tab:llama_bshared} reports the exact $\Bshared(\Cmat)$ for the three arms under the same protocol as the primary model (Section~\ref{sec:mechanism}; npf$=3$, second-order double-backward, five probe seeds paired to cancel shared-probe noise), using the Llama-3.2-3B tokenizer to build the probe. The signature reproduces: base $>$ GRPO $>$ \CDRFT{}, i.e.\ reinforcement learning lowers control sharing and the regularizer lowers it further, with the same-seed paired $\CDRFT{-}$GRPO $=-5.5\pm1.9$ ($2.9\sigma$, $4/5$ seeds negative).

\begin{table}[H]
\centering

\small
\setlength{\tabcolsep}{6pt}
\begin{tabular}{lc@{\hskip 16pt}lc}
\toprule
\addlinespace[1pt]
Arm & $\Bshared(\Cmat)\downarrow$ & Paired $\Delta$ & value \\
\addlinespace[1pt]
\midrule
\addlinespace[3pt]
Base           & $80.5\pm5.4$ & \CDRFT{}$-$GRPO & $-5.5\pm1.9$ \\
GRPO           & $77.0\pm4.5$ & \CDRFT{}$-$Base & $-9.0\pm3.8$ \\
\textbf{\CDRFT} & $\mathbf{71.5\pm3.5}$ & GRPO$-$Base & $-3.4\pm2.4$ \\
\bottomrule
\end{tabular}
\caption{Llama-3.2-3B exact $\Bshared(\Cmat)$ (step-1122, five probe seeds, mean$\pm$SE), same protocol as Table~\ref{tab:main}. Same-seed paired differences at right.}
\label{tab:llama_bshared}

\end{table}

\paragraph{Full $\passk$ ladder.}
Table~\ref{tab:llama_ladder} gives the complete ladder at the training rollout temperature, with all three arms evaluated few-shot under the fully matched protocol (so \emph{Base} here is the few-shot untrained model, complementing the zero-shot floor anchor of Table~\ref{tab:llama}). The paired \CDRFT{}$-$GRPO gap is positive at nearly every $k$ and widens with $k$ in every domain, the same signature as on Qwen2.5-7B; at $k{=}64$ on ordering-puzzle, GRPO drops below the few-shot base ($37.0 < 41.0$, the coverage collapse of \citet{yue2025rl}) while \CDRFT{} restores it above ($46.0$).

\begin{table}[H]
\centering

\small
\setlength{\tabcolsep}{3.5pt}
\begin{tabular}{lccccccc}
\toprule
\addlinespace[1pt]
& p@1 & p@2 & p@4 & p@8 & p@16 & p@32 & p@64 \\
\addlinespace[1pt]
\midrule
\addlinespace[3pt]
\multicolumn{8}{l}{\textit{GSM8K}} \\
\quad Base & 15.3 & 24.9 & 36.9 & 50.2 & 63.1 & 74.2 & 83.0 \\
\quad GRPO & 16.4 & 26.3 & 38.5 & 51.7 & 64.1 & 74.6 & 83.0 \\
\quad \textbf{\CDRFT} & \textbf{16.8} & \textbf{26.7} & \textbf{38.8} & \textbf{52.0} & \textbf{64.5} & \textbf{75.4} & \textbf{84.2} \\
\addlinespace
\multicolumn{8}{l}{\textit{MATH500}} \\
\quad Base & 7.2 & 11.9 & 18.3 & 25.7 & 33.4 & 40.5 & 47.0 \\
\quad GRPO & 7.4 & 12.3 & 18.9 & 26.8 & 35.0 & 42.5 & 48.4 \\
\quad \textbf{\CDRFT} & \textbf{7.6} & \textbf{12.5} & \textbf{19.2} & \textbf{27.1} & \textbf{35.6} & \textbf{43.8} & \textbf{51.2} \\
\addlinespace
\multicolumn{8}{l}{\textit{code}} \\
\quad Base & 26.5 & 35.2 & 43.4 & 50.8 & 57.3 & 62.7 & 67.5 \\
\quad GRPO & 27.7 & 36.2 & 44.3 & 51.8 & 58.4 & 64.2 & 69.0 \\
\quad \textbf{\CDRFT} & \textbf{28.1} & \textbf{36.7} & \textbf{44.9} & \textbf{52.4} & \textbf{58.9} & \textbf{64.8} & \textbf{69.9} \\
\addlinespace
\multicolumn{8}{l}{\textit{Logic-ordering}} \\
\quad Base & 11.3 & 16.5 & 21.5 & 26.3 & 31.4 & 36.5 & 41.0 \\
\quad GRPO & \textbf{11.7} & \textbf{16.8} & 21.6 & 26.1 & 30.4 & 34.1 & 37.0 \\
\quad \textbf{\CDRFT} & 11.0 & 16.4 & \textbf{21.9} & \textbf{27.4} & \textbf{33.2} & \textbf{39.5} & \textbf{46.0} \\
\addlinespace
\multicolumn{8}{l}{\textit{Logic-graph}} \\
\quad Base & 27.6 & 35.9 & 42.4 & 47.6 & 52.4 & 56.5 & 59.0 \\
\quad GRPO & 29.4 & \textbf{38.4} & \textbf{45.8} & 52.2 & 58.4 & 64.4 & 70.0 \\
\quad \textbf{\CDRFT} & \textbf{29.6} & 38.2 & 45.6 & \textbf{52.3} & \textbf{58.8} & \textbf{65.2} & \textbf{71.0} \\
\bottomrule
\end{tabular}
\caption{Llama-3.2-3B full per-benchmark $\passk$ ladder at the training rollout temperature; all three arms (Base $/$ GRPO $/$ \CDRFT{}) evaluated few-shot under the matched protocol. Bold marks the best of the three. GSM8K is held out from the training mixture of both models. All entries are percentages.}
\label{tab:llama_ladder}

\end{table}

\section{Computational Overhead of the Control Regularizer}\label{sec:overhead}

Table~\ref{tab:overhead} measures what the control regularizer costs per step. The comparison is fully controlled: the same RTX PRO 6000 GPU, the same $\beta{=}0$ backbone, the same generation length, and the same first 1000 steps, with the finite-difference regularizer as the only variable toggled on or off.

\begin{table}[H]
\centering

\small
\begin{tabular}{lcc}
\toprule
\addlinespace[1pt]
Method & Mean step time & Cumulative \\
\addlinespace[1pt]
\midrule
\addlinespace[3pt]
GRPO ($\beta{=}0$) & 30.5 s & 8.5 h \\
\textbf{CD-RFT ($\beta{=}0$)} & 32.9 s & 9.1 h \\
\addlinespace[1pt]
\midrule
\addlinespace[3pt]
Overhead & \textbf{+7.9\%} & +0.6 h \\
\bottomrule
\end{tabular}
\caption{Per-step cost of the control regularizer (Qwen2.5-7B, single RTX PRO 6000, $\beta{=}0$, first 1000 steps, only the regularizer toggled). This is the worst case: at $\beta{=}0$ the inner projection is active at essentially every step.}
\label{tab:overhead}

\end{table}

The measurement is the worst case for our method by construction. At $\beta{=}0$ nothing else holds the concentration down, so the inner projection engages at essentially every step and pays its full cost; under the $\beta{=}0.01$ backbone the KL penalty already suppresses $\Bshared$, the projection fires on only 16 of 1500 steps, and the overhead is correspondingly smaller. We report only the fully-engaged configuration.

Two details fix the reading. The reported value is the mean step time, and both arms include a few steps slowed by node contention, so on the median step the gap is somewhat wider ($22.8\rightarrow25.5$ s, $+11.8\%$). Wall-clock time is also not comparable across GPUs, because among the four main arms the \CDRFT{} models were trained on RTX PRO 6000 and the GRPO models on A100, and since the former is the faster GPU a direct wall-clock comparison would favour the proposed method. The overhead is therefore quoted only from the same-GPU, same-backbone measurement above, which isolates the regularizer itself. In either case the conclusion is the one that Remark~\ref{app:propC7} anticipates, that rollout generation dominates the step, the central-difference probe is a small addition on top of it, and the regularizer remains a low-overhead component.

\end{document}